\def\namedlabel#1#2{\begingroup
    #2%
    \def\@currentlabel{#2}
    \phantomsection\label{#1}\endgroup
}
\newcommand{\specificthanks}[1]{\@fnsymbol{#1}}
\newcommand{\lrarrow}{\mathrel{\mathpalette\lrarrow@\relax}}
\newcommand{\lrarrow@}[2]{%
  \vcenter{\hbox{\ooalign{%
    $\m@th#1\mkern6mu\rightarrow$\cr
    \noalign{\vskip1pt}
    $\m@th#1\leftarrow\mkern6mu$\cr
  }}}%
}
\newcommand{\loss}{\mathcal{L}}
\newcommand{\family}{\mathcal{F}}
\newcommand{\eloss}{\bar{\mathcal{L}}}
\newcommand{\E}{\mathbb{E}}
\newcommand{\1}{\mathbbm{1}}
\renewcommand{\P}{\mathbb{P}}
\newcommand{\favdir}{\varsigma}
\def\del{\partial}
\newcommand{\Pcal}{\mathcal{P}}
\newcommand{\B}{\mathcal{B}}
\newcommand{\RR}{\mathbb{R}}
\newcommand{\PP}{\mathbb{P}}
\newcommand{\eps}{\varepsilon}
\newcommand{\Chi}{\mathcal{X}}
\newcommand{\Bias}{\text{\rm Bias}}
\newcommand{\fhat}{\hat{f}}
\newcommand{\Yhat}{\widehat{Y}}
\newcommand{\signbias}{ \widetilde{bias} }
\newcommand{\bias}{ bias }
\newcommand{\modbias}{ \Bias }
\newcommand{\SHAP}{ S\!H\!A\!P}
\newcommand{\ME}{ \text{\tiny \it ME}}
\newcommand{\CE}{ \text{\tiny \it CE}}
\newcommand{\IBE}{ \text{\tiny \it IBE}}
\newcommand{\vpdp}{v^{\ME}}
\newcommand{\vce}{v^{\CE}}
\newcommand{\A}{\mathcal{A}}
\def\clap#1{\hbox to 0pt{\hss#1\hss}}
\newtheorem{definition}{Definition}[section]
\newtheorem{lemma}{Lemma}[section]
\newtheorem{remark}{Remark}[section]
\numberwithin{equation}{section}
\title{Model-agnostic bias mitigation methods with regressor distribution control for Wasserstein-based fairness metrics}
\author{ Alexey Miroshnikov\thanks{Emerging Capabilities Research Group, Discover Financial Services, Riverwoods, IL} \textsuperscript{,$\!\!\!$}
\thanks{co-first author and corresponding author, alexeymiroshnikov@discover.com}
   \and
  Konstandinos Kotsiopoulos\textsuperscript{\specificthanks{1}}\thanks{co-first author, kostaskotsiopoulos@discover.com}  \and Ryan Franks\textsuperscript{\specificthanks{1},}\thanks{ryanfranks@discover.com} \and Arjun Ravi Kannan\textsuperscript{\specificthanks{1},}\thanks{arjunravikannan@discover.com}  }
\date{}
\begin{document}

\maketitle

\abstract{This article is a companion paper to our earlier work \citet{Miroshnikov2020} on fairness interpretability, which introduces bias explanations. In the current work, we propose a bias mitigation methodology based upon the construction of post-processed models with fairer regressor distributions for Wasserstein-based fairness metrics. By identifying the list of predictors contributing the most to the bias, we reduce the dimensionality of the problem by mitigating the bias originating from those predictors. The post-processing methodology involves reshaping the predictor distributions by balancing the positive and negative bias explanations and allows for the regressor bias to decrease. We design an algorithm that uses Bayesian optimization to construct the bias-performance efficient frontier over the family of post-processed models, from which an optimal model is selected. Our novel methodology performs optimization in low-dimensional spaces and avoids expensive model retraining.


\smallskip


\bigskip
{{\bf Keywords.}  ML fairness, ML interpretability, Optimal transport, Cooperative game }

\bigskip
{ {\bf AMS subject classifications.} 49Q22, 91A12, 68T01 } 

\section{Introduction}\label{sec::introduction}




Machine learning (ML) techniques have become ubiquitous in their use across all industries, surpassing traditional statistical techniques in handling higher-dimensional data and generating models with increased predictive performance. Highly accurate ML models may potentially lack fairness, in the sense that model outputs may produce discriminatory outcomes among classes of protected attributes. Predictive models, and strategies that rely on such models, are subject to laws and regulations that ensure fairness. For instance, a hiring process in the United States (US) must comply with the Equal Employment Opportunity Act \citep{EEOA}. Similarly, financial institutions (FI) in the US that are in the business of extending credit to applicants are subject to the Equal Credit Opportunity Act \citep{ECOA}, the Fair Housing Act \citep{FHA1974}, and other fair lending laws.

These laws often specify protected attributes that FIs must consider when maintaining fairness in lending decisions. Examples of such attributes include race, gender, age, ethnicity, national origin, and marital status. Direct usage of protected attributes is prohibited under ECOA when training any ML model. However, other attributes can serve as ``proxies'' of protected attributes and thus learning from these may still cause the model to differentiate between subgroups which may potentially lead to discriminatory outcomes.

Bias, or unfairness, in the outputs of ML models can be measured by using a meaningful metric that assesses the difference between distributions of subpopulations in the output. \citet{Dwork2012} introduced this concept of bias at the level of a data distribution via randomized binary classifiers, which can be related to the Wasserstein distance via Kantorovich-Rubinstein duality arguments \citep{Miroshnikov2020}. 

This body of work is a companion paper to \citet{Miroshnikov2020} that introduced a fairness interpretability framework for measuring the model bias with respect to protected attributes at the level of the regressor distribution and explaining how each predictor contributes to that bias. The current article focuses on mitigating the model bias.

There is a comprehensive body of research on bias mitigation methodologies. \citet{Kamiran2009} introduced a classification scheme for learning unbiased models by modifying the biased data sets, without direct knowledge of the protected attribute. The works of \citet{Lahoti, Hashimoto} cite similar regulatory constraints on the collection of protected attribute data and also aim to achieve fairness without their use. The fairness metric chosen by the authors is the Rawlsian Max-Min Fairness principle \citep{Rawls}, which is used to formulate a minimax problem. The resulting objective function is then used in training to obtain a fairer model. Similar to \citet{Lahoti}, \citet{Zhang2018} formulated an adversarial learning framework to mitigate model bias. \citet{Jiang2020} introduced a procedure that involves reweighing data points and learning an unbiased model on the new dataset. Similarly, \citet{delBarrio} modify the data using optimal transport theory to then learn an unbiased model with the focus on two notions of fairness, disparate impact and balanced error rate. \citet{Feldman2015} proposed a scheme for removing disparate impact, in the sense of statistical parity, in classifiers by making data sets unbiased, and \cite{Hardt2015} introduced post-processing techniques removing discrimination in classifiers.


In \citet{Perrone2020} the bias methodology does not require  knowledge of the protected attribute $G$ either in training or prediction, and utilizes Bayesian optimization with fairness constraints on a wide range of models to learn ML hyperparameters that lead to fairer models. In \citet{Schmidt2021} the methodology randomly searches for ML hyperparameter configurations and builds the Pareto efficient frontier. \citet{Balashankar} propose the Pareto-efficient fairness, which identifies the point on the Pareto curve of subgroup performances closest to the fairness hyperplane, thus maximizing multiple group accuracy measures.

The bias mitigation techniques in many of the aforementioned articles are model-specific and often require knowledge of the protected attribute which is prohibited under regulatory constraints. This is not the case in the works of \citet{Perrone2020, Schmidt2021}. However, their methods can be rather computationally expensive since multiple retraining procedures are carried out. 

Our novel methodology uses Bayesian optimization in low-dimensional settings to construct the bias-performance efficient frontier over the family of post-processed models that do not explicitly depend on the protected attribute. Our approach completely avoids retraining, which allows for lower complexity. We reduce the problem dimensionality by making use of bias explanations that enable us to identify predictors contributing the most to the bias.

To outline our mitigation procedure, we introduce notation and import concepts described in the companion paper \citet{Miroshnikov2020}. Let $(X,G,Y)$, where $X\in\RR^n$ are predictors, $G \in \{0,1\}$ is the protected attribute, with the non-protected class $G=0$, and $Y$ is either a response variable with values in $\RR$ (not necessarily a continuous random variable) or binary one with values in $\{0,1\}$. We denote a trained model by $f(x)=\widehat{\E}[Y|X=x]$, assumed to be trained on $(X,Y)$ without access to $G$. We assume that there is a predetermined {\it favorable model direction}, denoted by $\uparrow$ and $\downarrow$; if the favorable direction is $\uparrow$ then the relationship $f(x)>f(y)$ favors the input $x$, and if it is $\downarrow$ the input $y$. In the case of binary $Y\in\{0,1\}$,  the favorable direction $\uparrow$ is equivalent to $Y=1$ being a favorable outcome, and $\downarrow$ to $Y=0$. To simplify the exposition, the main text focuses on the case of a binary protected attribute $G$. However, the framework and all the results in the article have a natural extension to the multi-labeled case.

In \citet{Miroshnikov2020},  we chose to assess regressor fairness using Wasserstein-based metrics. These metrics, which arise in optimal transport theory, assess regressor fairness at the level of the regressor values rather than proportions. In addition, the metric picks up changes in the geometry of the regressor distribution, unlike invariant metrics, such as Kolmogorov-Smirnov or Kullback-Leibler divergence, making it compatible with (non-invariant) ML performance metrics that utilize the regressor distribution (such as $L^2$, binomial deviance, or exponential loss), aiding in accurate bias-performance analysis together with distributional control.

In particular, the bias in the model output is measured as follows
\begin{equation*}
\begin{aligned}
\modbias_{W_1}(f|G)&=W_1(f(X)|G=0,f(X)|G=1)\\
& = \inf_{\pi \in \mathscr{P}(\RR^2)} \Big\{ \int_{\RR^2} |x_1-x_2| \, d\pi(x_1,x_2), \,\, \text{with marginals $P_{f(X)|G=0}, P_{f(X)|G=1}$}  \Big\}
\end{aligned}
\end{equation*}
where $W_1$ is the 1-Wasserstein metric, which measures the minimal cost of transporting one distribution into another; see \citet{Santambrogio2015}. The transport framework allows one to monitor the flow direction and measure the transport efforts in the favorable and non-favorable directions. In particular, the model bias can be decomposed as follows
\[
  \Bias_{W_1}(f|G) = \Bias_{W_1}^+(f|G) + \Bias_{W_1}^-(f|G),
\]
where the positive model bias $\Bias_{W_1}^+(f|G)$ measures the transport effort for moving points of the unprotected subpopulation distribution $f(X)|G = 0$ in the non-favorable direction and negative model bias $\Bias_{W_1}^-(f|G)$ in the favorable one. In \citet{Miroshnikov2020}, the above approach is generalized to a wide class of $W_1$-based metrics compatible with various fairness criteria.

The decomposition carries information that can help direct the bias mitigation. For example, in some FI applications, regulations might require to mitigate only the positive model bias (that quantifies the favorability of the model  with respect to the non-protected class). In general, the decomposition aids in reducing the complexity of the mitigation procedure.

In \cite{Miroshnikov2020},  utilizing the optimal transport approach, we introduced bias predictor attributions called bias explanations to understand how predictors contribute to the model bias. The bias explanation $\beta_i\geq 0$  for each predictor $X_i$  is decomposed into $\beta_i = \beta_i^+ +\beta_i^-$, with $\beta_i^{\pm}\geq 0$, where roughly speaking $\beta_i^+$ quantifies the predictor contribution to the increase of the positive model bias and decrease in the negative model bias cumulatively, and vice versa for $\beta_i^-$. The analysis of bias explanations allows one to separate predictors into three groups:

\begin{itemize}[label=$\bullet$]
\item $\beta_i^+ \gg \beta_i^-$.  Predictors that mainly contribute to transporting the non-protected subpopulation distribution in the favorable direction.
\item $\beta_i^+ \ll \beta_i^-$. Predictors that mainly contribute to transporting the non-protected subpopulation distribution in the non-favorable direction.
\item $\beta_i^+ \sim \beta_i^-$. Predictors that transport the non-protected class in the favorable or non-favorable direction depending on the sub-region of the predictor space.
\end{itemize}

In the current work, we further investigate bias explanations and their relation to predictor bias, which provides valuable insight in their use for bias mitigation. We present the main factors that affect the magnitude of the bias explanations, including the structure of the model, predictor's bias, and the shape of predictor distribution, and demonstrate that the predictor bias and its bias explanation are not equivalent. We also introduce a new type of marginal-based bias explanations, motivated by individual conditional expectations introduced in \cite{Goldstein et al}, which avoids many of the pitfalls of PDP-based bias explanations.

The major portion of this article is devoted to bias mitigation. At the core of our post-processing methodology is the relation between model bias and bias explanations. Specifically, we demonstrate that the model bias can be written as a linear superposition of certain game-theoretic bias explanations, which expresses the bias offsetting mechanism. This relation illustrates that the predictors are playing a game of tug-of-war, where some predictors contribute to pushing the non-protected subpopulation in a favorable direction and some others in a non-favorable one.

For bias mitigation, we construct a continuous-parameter family of post-processed models based upon perturbation of the trained regressor. The perturbation is performed by introducing predictor transformations for appropriate regressor inputs that allow us to rebalance bias explanations and reduce the model bias by taking advantage of bias offsetting. To reduce the dimensionality of the problem, we only transform the inputs corresponding to predictors that constitute the main drivers of the model bias. As a final step, we use Bayesian optimization in low-dimensional settings to reconstruct the bias-performance efficient frontier over the family of post-processed models.

We compare our mitigation procedure to that of \cite{Schmidt2021} and \cite{Perrone2020} on synthetic examples and show that our method performs better. It is our understanding that the reason for this is the fact that varying ML hyperparameters, without attacking directly the source of the bias in the joint distribution of predictors, lowers the model resolution and thus mainly impacts the performance while the effect on bias may be minimal.



\vspace{10pt}

\noindent{\bf Key steps of bias mitigation.}

\begin{itemize}
  \item[1.] Given $(W_1,\A)$-metric, we identify the most impactful predictors to the model bias by computing the positive and negative bias explanations. The list of most impactful predictors $M=\{i_1,i_2,\dots,i_m\}\subset N$ is generated, which can be further subdivided based on the relationship between positive and negative bias explanations. This procedure reduces the dimensionality of the problem.



\item[2.] Given a trained model $f$ and the list $M$ from step $1$, we construct an intermediate model of the form
  \[
    \tilde{f}(X;\alpha,x_M^*) = f(\bar{T}\circ (X_M), X_{-M}), \quad -M=N\setminus M,
  \]
 via the parameterized family of continuous transformations $\{\bar{T}(\cdot;a,x_M^*)\}$, with $a\in\RR^{km}$, $x_M^*\in\RR^m$, having the form
  \[
X_M \to \bar{T}(X_M; \alpha,x_M^*)=(T(X_{i_1}; \alpha^{1}, x_{i_1}^*),T(X_{i_2}; \alpha^{2},x_{i_2}^*),\dots,T(X_{i_m}; \alpha^{m},x_{i_m}^*)),
  \]
where the map $t \to T(t; a, t^* )$, $a\in \RR^k$, is strictly increasing and transforms the values by pulling them towards or pushing them away from the point $t^*$. We call the family of maps $\bar{T}(\cdot; \alpha, x^*_M)$ a compressive family. The above family allows us to rebalance the bias explanations by compressing (or expanding) each predictor in $X_M$. This family of maps is motivated by the scaling property $\Bias_{W_1}(cZ|G) = c\Bias_{W_1}(Z|G)$, $c \geq 0$.

  

\item[3.] 

Unlike the trained regressor, the post-processed one is no longer tied to data $(X,Y)$. In other words, while we can expect that $\fhat\approx \E[Y|X]$, the post-processed model may no longer approximate the true regressor. For this reason, we calibrate the model $\tilde{f}$ by constructing the post-processed model in the form $\bar{f} = C\circ \tilde{f}$ for an appropriate monotonic map $C(\cdot; \tilde{f},X,Y)$ that arises in the process of isotonic regression of either the trained model $\fhat$ or the response variable $Y$ onto the post-processed model $\tilde{f}$. The calibration procedure is a necessary step that ties the post-processed model to the data. For regressors of classification models, the procedure does not change invariant performance metrics such as AUC but only affects the regressor distribution. 


\item [4.] The final step is a model selection procedure, where an optimal model from the family of post-processed models $\mathcal{F} = \{\bar{f}(\cdot; \alpha, x_M^*), (\alpha,x_M^*) \in (A,\Chi_M) \subset \RR^{(k+1)m} \}$ is chosen given a specified bias-performance trade-off level. This is accomplished by solving the minimization problem with a fairness penalization term,
  \[
    \alpha_{*}(\omega) = \text{arg}\min_{\bar{f} \in \mathcal{F}} \Big\{ \E[\mathcal{L}(Y,\bar{f}(X)] + \omega \cdot \Bias_{W_1,\A}^{(w)}(\bar{f}|G,X) \Big\}, \quad \omega \ge 0.
  \]
Specifically, we design an algorithm that constructs the Pareto efficient frontier by solving the above problem via  Bayesian optimization; see \cite{Bergstra2011}. 



\end{itemize}

\paragraph{Structure of the paper.}  In Section 2, we introduce the requisite notation and fairness criteria for classifiers and the model bias definition as presented in \citet{Miroshnikov2020}. In Section 3, we describe the bias-performance trade-off and optimal selection mechanisms under fairness constraints for applications with regressor distribution control, including the discussion of compatibility of bias and performance metrics. In Section 4, we further investigate the bias explanations introduced in \citet{Miroshnikov2020} and discuss the connection between the bias explanations and the bias of predictors. Finally, in Section 5 we introduce in detail the four steps that form our bias mitigation methodology. In Appendix we discuss new types of bias explanations and provide auxiliary lemmas. 


\section{Preliminaries}

\subsection{Notation and hypotheses}
We consider the joint distribution $(X,G,Y)$, where $X=(X_1,X_2,\dots,X_n) \in \RR^n$ are the predictors, $G\in \{0,1,\dots,K-1\}$ is the protected attribute and $Y$ is either a response variable with values in $\RR$ (not necessarily a continuous random variable) or a binary one with values in $\{0,1\}$. We encode the non-protected class as $G=0$ and assume that all random variables are defined on the common probability space $(\Omega,\mathcal{F},\PP)$, where $\Omega$ is a sample space, $\PP$ a probability measure, and $\mathcal{F}$ a $\sigma$-algebra of sets.

The true model and a trained one, which is assumed to be trained without access to $G$, are denoted by 
\[
f(X)=\E[Y|X] \quad \text{and} \quad \fhat(X)=\widehat{\E}[Y|X],
\]
respectively. We denote a classifier based on the trained model by 
\[
\Yhat_t=\Yhat_t(X;\fhat)=\1_{\{\hat{f}(X)>t\}}, \quad t \in \RR.
\]
In what follows, we suppress the symbol $\,\hat{}\,$ to denote the trained model, using it only when it is necessary to differentiate between the true model and the trained one. The same rule applies to classifiers.

Given a model $f$, the subpopulation cumulative distribution function (CDF) of  $f(X)|G=k$ is denoted by
\begin{equation}
F_k(t)=F_{f(X)|G=k}(t)=\PP(f(X)\leq t|G=k)
\end{equation}
and the corresponding generalized inverse (or quantile function) $F_k^{[-1]}$ is defined by:
\begin{equation}\label{subpopQF}
F_k^{[-1]}(p)=F_{f(X)|G=k}^{[-1]}(p)=\inf_{x \in \RR }\big\{ p \leq F_k(x) \big\}.
\end{equation} 

We assume that there is a predetermined {\it favorable model direction}, denoted by either $\uparrow$ or $\downarrow$. If the favorable direction is $\uparrow$ then the relationship $f(x)>f(z)$ favors the input $x$, and if it is $\downarrow$ the input $z$. The sign of the favorable direction of $f$ is denoted by $\varsigma_{f}$ and satisfies 
\[
\favdir_{f} = \left\{ 
\begin{aligned}
&1,& &\text{if the favorable direction of $f$ is $\uparrow$}&\\
-&1,& &\text{if the favorable direction of $f$ is $\downarrow$}\,.&
\end{aligned}
\right.
\] 
In the case of binary $Y$, the favorable direction $\uparrow$ is equivalent to $Y=1$ being a favorable outcome, and $\downarrow$ to $Y=0$.

In what follows we first develop the framework in the context of the binary protected attribute $G\in\{0,1\}$ and then extend it to the multi-labeled case. 

\subsection{Classifier and model biases }

\paragraph{Classifier bias.} When undesired biases concerning demographic groups (or protected attributes) are in the training data, well-trained models will reflect those biases. In what follows, we describe several definitions which help measure fairness of classifiers \citep{Hardt2015,Feldman2015,Miroshnikov2020}. 

\begin{definition}\label{def::parity} Suppose that $Y$ is binary with values in $\{0,1\}$ and $Y=1$ is the favorable outcome. Let $\Yhat$ be a classifier.
\begin{itemize}[label=$\bullet$]

\item  $\Yhat$ satisfies statistical parity if $\PP(\Yhat=1|G=0) = \PP(\Yhat=1|G=1).$

\item $\Yhat$ satisfies equalized odds if $\PP(\Yhat=1|Y=y,G=0) = \PP(\Yhat=1|Y=y,G=1)$, $ y\in\{0,1\}$

\item $\Yhat$ satisfies equal opportunity if $\PP(\Yhat=1|Y=1,G=0) = \PP(\Yhat=1|Y=1,G=1)$
\item Let $\mathcal{A}=\{A_j\}_{j=1}^M$ be a collection of disjoint subsets of $\Omega$. $\Yhat$ satisfies $\mathcal{A}$-based parity if
\begin{equation}\label{genparity}
\PP(\Yhat=1|A_m, G=0)=\PP(\Yhat=1| A_{m}, G=1 ), \quad m \in \{1,\dots,M\}.
\end{equation}
\end{itemize}
\end{definition}

The statistical parity requires that the proportions of people   in the favorable class $\Yhat=1$ within each group $G=k,k\in\{0,1\}$ are the same. The  equalized odds constraint requires the classifier to have the same misclassification error rates for each class of the protected attribute $G$ and the label $Y$.  Equal opportunity constraint requires the misclassification rates to be the same for each class $G=k$ only for the individuals labeled as $Y=1$. The $\mathcal{A}$-based parity requires subpopulations to have the same proportions within each event $A_m \in \mathcal{A}$; it is a generalization of the first three criteria. For instance, taking $\mathcal{A}=\{\Omega\}$ gives statistical parity, and $\mathcal{A}=\{\{Y=0\},\{Y=1\}\}$ gives equalized odds.

The classifier bias can be defined as a deviation from the statistical parity. In particular, for statistical parity we have the following definition.
\begin{definition}[\bf classifier bias]\label{def::statbias} Let $f$ be a model, $X\in\RR^n$ predictors, $G \in \{0,1\}$ protected attribute, $G=0$ non-protected class, and  $\favdir_f$ the sign of the favorable direction of $f$. 
\begin{itemize}[label=$\bullet$] 
\item The signed classifier (or statistical parity) bias for a threshold $t \in \RR$ is defined by
\[
\begin{aligned}
& \signbias^{C}_t(f|X,G) \\ & = \big( \PP(Y_t=\1_{\{\favdir_f=1\}}|G=0)-\PP(Y_t=\1_{\{\favdir_f=1\}}|G=1) \big) \cdot \favdir_f 
=( F_1(t)-F_0(t) ) \cdot \favdir_f.
\end{aligned}
\]

\item The classifier bias is defined by $\bias^C_t(f|X,G)=|\signbias^C_t(f|X,G)|.$
\end{itemize}
\end{definition}

We say that $Y_t$ favors the non-protected class $G=0$ if the signed classifier bias is positive. Respectively, $Y_t$ favors the protected class $G=1$ if the signed classifier bias is negative.

To take into account the geometry of the model distribution we define the quantile bias.
\begin{definition}[\bf quantile bias]\label{def::quantbias}
Let $f,X,G,\favdir_f$ and $F_k$ be as in Definition \ref{def::statbias}. Let $p \in (0,1)$.
\begin{itemize}[label=$\bullet$]
\item The signed $p$-th quantile is defined by $\signbias^{Q}_p(f|X,G) = \big( F^{[-1]}_0(p)-F^{[-1]}_1(p) \big) \cdot \favdir_f
$
	\item The $p$-th quantile bias is defined by  $\bias^Q_p(f|X,G)=|\signbias^Q_p(f|X,G)|.$
\end{itemize}
\end{definition}

The two definitions above can be easily generalized to any type of parity introduced in Definition \ref{def::parity}; see \citep{Miroshnikov2020}. For this reason we will primarily work with the statistical parity criterion.

\paragraph{Model bias.} Following the ideas in the companion paper \citet{Miroshnikov2020}, we define the model bias as a cost of transporting one subpopulation of the model into another. This gives the following.

\begin{definition}[\bf model bias]\label{def::modbias} Let $f$, $X$, $G$, $\favdir_f$ be as in Definition \ref{def::statbias} and $\mathcal{A}$ as in Definition \ref{def::parity}.

\begin{itemize}[label=$\bullet$] 

\item Given positive weights $w=\{w_m\}_{m=1}^M$, the $(W_1,\mathcal{A},w)$-based model bias is defined by
\begin{equation*}\label{genparmodbias}
	\Bias_{W_1,\mathcal{A}}^{(w)}(f|X,G) = \sum_{m=1}^{M} w_{m} W_1 \big(f(X)|\{A_m, G=0\},f(X)|\{A_{m},G=1\}\big), \quad  w_{m} > 0,
\end{equation*}
where $W_1$ stands for the Wasserstein distance and the weights satisfy $\sum_{m=1}^{M} w_{m}=1$.

\item In the special case when $\mathcal{A}=\{\Omega\}$, we define $W_1$-based model bias by
\begin{equation}\label{modbias}
\Bias_{W_1}(f|X,G) = W_1\big(f(X)|G=0,f(X)|G=1\big).
\end{equation}
\end{itemize}
\end{definition}

By assumption $f(X) \in \RR$, which allows one to express $W_1$-based bias as an integrated classifier bias. Specifically, we have the following connection of the model bias and the classifier and quantile biases:
\[
\Bias_{W_1}(f|X,G)=\int_{[0,1]} \bias^Q_p(f|X,G) dp=\int_{\RR} \bias^C_t(f|X,G) dt.
\]

The total transport cost can be decomposed into the transport cost of moving the points of the non-protected subpopulation distribution $f(X)|G=0$ in the non-favorable direction and in the favorable one, respectively. This gives rise to the decomposition of the bias into positive and negative components:
\[
\Bias_{W_1}(f|G)=\Bias_{W_1}^+(f|G)+\Bias_{W_1}^-(f|G), \quad \Bias_{W_1}^{\pm}(f|G)= \int_{\mathcal{P}_{\pm}} \bias^{Q}_p(f|G) \, dp
\]
where $\Pcal_{\pm} = \{p \in [0,1]: \pm\signbias^{Q}_p(f|X,G)=\pm(F^{[-1]}_{f(X)|G=0}-F^{[-1]}_{f(X)|G=1})\cdot \favdir_{f} > 0 \}$. Measuring the two flows allows for taking into account the sign of the bias across quantile subpopulations, which provides us with a more informative perspective on its origin, and aids in bias mitigation.

\begin{definition}[\bf fair model]\label{fairmodel}
Let $X,G,f,\favdir_f$ be as in Definition \ref{def::statbias}. We say that the model $f$ is fair in the $(W_1,\A,w)$-metric if $\Bias_{W_1,\A}^{(w)}(f|X,G)=0$. We say that $f(X,G)$ is fair up to $\epsilon$ $(W_1,\A,w)$-bias  if $\Bias_{W_1,\A}^{(w)}(f|X,G)\leq \epsilon$. 
\end{definition}

\subsection{Bias explanations}

\subsubsection{Explainers.} To evaluate bias attributions we rely on predictor explanation techniques. A generic single feature explainer of $f$ that quantifies the attribution of each predictor $X_i$ to the model value $f(X)$ is denoted by 
$E(X; f) = \{E_i(X,f)\}_{i=1}^n.$

A straightforward way of setting up an explainer $E_i$ is by specifying each component via a conditional or marginal expectation $E_i(X;f)=v(\{i\};X,f)$, $v \in \{\vce,\vpdp\}$, where
\begin{equation}\label{shapgame}
	\vce(S; X, f)=\E[f|X_S], \quad \vpdp(S;X,f)=\E[ f(x_S,X_{-S}) ]|_{x_S=X_S}.
\end{equation}
These simple explainers however do not handle interactions well and may lead to inconsistent attributions \citep{Goldstein et al}. For that reason, one may choose to work with game theoretical explainers.

One such explainer is based on the Shapley value 
\begin{equation}\label{shapform}
\varphi_i[v]= \sum_{S \subseteq N \backslash \{i\}} \frac{s!(n-s-1)!}{n!} [ v(S \cup \{i\}) - v(S) ], \quad  s=|S|, \,  n=|N|,
\end{equation}
by utilizing the marginal or conditional game $v \in \{\vce,\vpdp\}$. 

In the presence of dependencies in predictors, the two games differ. The conditional game explores the data by taking into account dependencies, while the marginal game explores the model $f$ in the space of its inputs, ignoring the dependencies. In particular, it can be shown that the explanations $\varphi_i[\vce]$ are consistent with the data, that is, the map $\omega \to f(X(\omega))$, while $\varphi_i[\vpdp]$ are consistent with the structure of the model, that is, the map $x \to f(x)$ \citep{Chen-Lundberg,Kotsiopoulos2020}. 

The choice between the two games is application specific. In scientific applications it is crucial to understand the true reasons behind observed data regardless of the model structure and hence $\vce$ might be preferable. In other applications, where the model structure is required to be explained, the game $\vpdp$ should be used.

In this work we are designing a bias mitigation methodology based on a post-processing technique that alters the model $f$. For this reason the marginal game is more appropriate as we will see later.

\subsubsection{Basic bias explanations.}

 Following \citet{Miroshnikov2020}, we compute bias attributions as follows. Given a single feature explainer $E(X,f)$, the bias attribution of the predictor $X_i$ is defined as the minimal cost of transporting $E_i(X,f)|G=0$ to $E_i(X,f)|G=1$:
\begin{equation}\label{biasexpl}
  \beta_i(f|X,G; E_i) = W_1(E_i(X;f)|G=0 , E_i(X;f)|G=1 ) = \int_0^1 |F_{E_i|G=0}^{[-1]}-F_{E_i|G=1}^{[-1]}| \, dp.
\end{equation}

The transport theory characterization allows one to take into account the sign of the bias when measuring the bias impact of a given predictor. In particular, the positive and negative bias explanations are defined as the transport cost of $E_i|G=0$ in non-favorable and favorable directions, respectively:
\[
    \beta_i^{\pm}(f|X,G; E_i) = \int_{\Pcal_{i\pm}} (F_{E_i|G=0}^{[-1]}-F_{E_i|G=1}^{[-1]}) \cdot \favdir_{f} \, dp,
\]
where $\Pcal_{i\pm} = \{p \in [0,1]: \pm\signbias^{Q}_p(E_i|X,G)=\pm(F^{[-1]}_{E_i|G=0}-F^{[-1]}_{E_i|G=1})\cdot \favdir_{f} > 0 \}$.


The bias explanation $\beta_i\geq 0$  of $X_i$  is then decomposed into  $\beta_i = \beta_i^+ +\beta_i^-$, with $\beta_i^{\pm} \geq 0$, where roughly speaking $\beta_i^+$ quantifies the predictor contribution to the increase of the positive model bias and decrease in the negative model bias cumulatively, and vice versa for $\beta_i^-$; for more details see Section \ref{sec::modsuperpos}.

In particular, if $\beta_i^{+}>0$ and $\beta_i^{-}=0$,  we say that $E_i(X)$ strictly favors the non-protected class $G=0$, while if $\beta_i^{-}>0$ and $\beta_i^{+}=0$, $E_i(X)$ strictly favors the protected class $G=1$. If $\beta_i^{\pm}>0$ we say that $X_i$ has mixed bias explanations. The net bias explanation of the predictor $X_i$ is defined as the difference $\beta_i^{net}=\beta^{+}-\beta_i^{-}$.

\subsubsection{Shapley bias explanations.} \citet{Strumbelji2014} introduced a game-theoretic approach for computing predictor attributions to the model value. In this approach predictors are treated as players and an appropriately designed game is based on the trained model and its modification. Motivated by the aforementioned article, we design an appropriate bias game. Specifically, we treat predictors as players that push/pull sub-populations distributions apart. This gives rise to a Wasserstein-based bias game that evaluates the effort of predictors when they join various coalitions:
\[
 v^{bias}(S; G, E(\cdot\,; X,f))=W_1(E(S; X,f)|G=0,E(S; X,f)|G=1), \quad S\subset\{1,2,\dots,n\},
 \]
where $E(S;X,f)$ is a group explainer that quantifies the attribution of each predictor $X_S$ to the model value. The Shapley-bias explanations are then defined as the Shapley value $\varphi[v^{bias}]$ of the game $v^{bias}$. 

Measuring the cost of transporting group explanations for the unprotected class in non-favorable and favorable directions leads to positive and negative bias games $v^{bias+}$ and $v^{bias-}$, which, in turn, yields positive and negative Shapley-bias explanations $\varphi[v^{bias+}]$ and $\varphi[v^{bias-}]$.

One typical approach to choose group explainers is to set $E_S(\cdot;X,f) = v \in\{\vce, \vpdp\}$, or alternatively to employ trivial group explainers $E_S(\cdot;X,f)$ based on sums of the Shapley values:
\begin{equation}\label{grouppdpshap}
\varphi_S[v]=\varphi_S(X;f,v)=\sum_{i \in S} \varphi_i(X;f,v) \quad \text{where} \quad v \in \{ \vce, \vpdp\}.
\end{equation}

\section{Model selection under fairness constraints}\label{sec::modselect}

In this section we discuss an optimal model selection mechanism via Pareto efficient frontier and compatibility of bias-performance metrics.  To this end, consider a collection of parametrized models.

\begin{equation}\label{model_family}
\mathcal{F}=\Big\{f_{\theta}(x)=\widehat{\E}[Y|X=x;\theta], \, \theta \in \Theta \subset \RR^m \Big\}
\end{equation}
where $\Theta$ denotes a parameter space. In the case of binary $Y \in \{0,1\}$, the regressor $f_{\theta}$ becomes the classification score $\widehat{\P}(Y=1|X;\theta)$ with classifiers defined through the regressor.

Ideally, we would like to identify models in $\mathcal{F}$ whose  bias-performance trade-off is optimal, that is, among models with similar performance, we would like to identify those that are the least biased and vice versa.  To this end, we let $\loss(y,f(x))$ denote a loss function and define the (regressor) performance to be  $-\eloss(Y,f(X))$, and let $\Bias_{D}(f|X,G)$ be the metric that measures the model bias by computing the distance $D(\cdot,\cdot)$ between subpopulation distributions of the regressor.

For classification models, in addition to an assessment of regressor performance, it is common to assess the models based on the performance of classifiers $\{Y_t(X;f)\}_{t \in \RR}$ they induce using (classification) model performance metrics such as AUC.

The choice of metrics for both bias and performance depends highly on the type of decisions being made as well as on the strategies that rely on the output of predictive models. Given a model $f$, we consider three types of strategies associated with it:
\begin{itemize}

  \item[(S1)] Strategies that rely on the model regressor, and hence utilize  the model distribution; this is relevant to both regressor and classification models.

  \item[(S2)] Strategies that rely on classifiers $\{Y_t(X;f), t \in \RR \}$ obtained by thresholding a regressor, without explicit use of the regressor distribution (or probabilities).

  \item[(S3)] Strategies that rely on classifiers $\{Y_t(X;f), t \in \RR \}$ induced by the regressor, with explicit use of the regressor distribution (or probabilities).

\end{itemize}

Strategies (S1)-(S3) determine the choice of appropriate metrics.  In our work, we primarily focus on strategies of type (S1) and study regressor bias-performance trade-off with distributional control. In what follows, we discuss the issue of compatibility among bias-performance metrics so that a proper trade-off can be realized.

\begin{figure}
 \centering
  \begin{subfigure}[t]{0.6\textwidth}
    \centering
    \includegraphics[width=\textwidth]{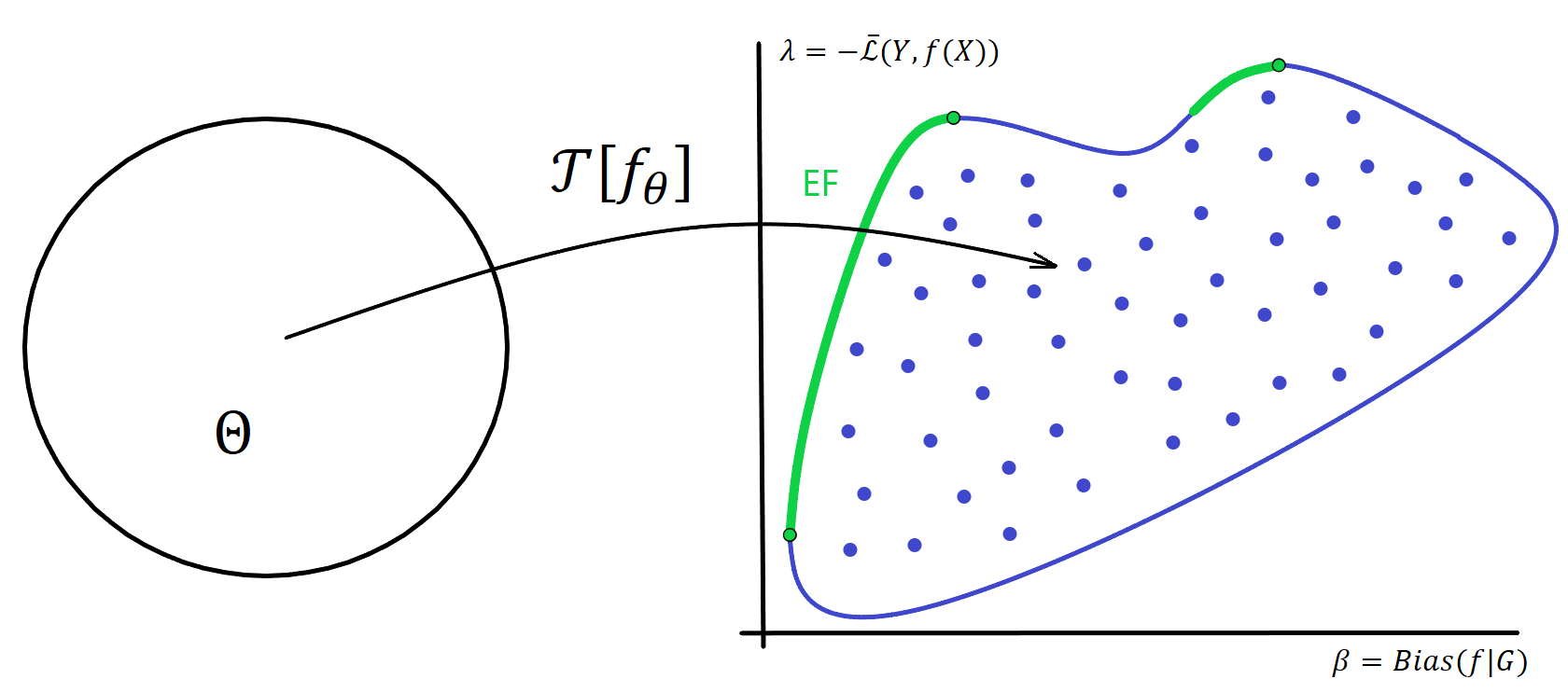}
  \end{subfigure} 
  \caption{ BP-efficient frontier.}\label{fig::efffrontmock}
\end{figure}

\subsection{Selection based on regressor bias-performance tradeoff }\label{sec::rbp_tradeoff}


Selection for decisions of type (S1) involves the control of the regressor bias and regressor performance. Note that any loss function $\loss(y,f(x))$ evaluates the distance between the predicted values and the data points and as a consequence performance metric given by $\eloss(Y,f(X))$ will respond to the change in geometry of the model distribution $f_{\theta}(X)$ caused by the change in $\theta$. This implies that $\eloss(Y,f(X))$ is non-invariant with respect to monotone transformations $T$ of the model $f$, that is, in general $\eloss(Y,f(X)) \neq \eloss(Y,T(f(X)))$.

Thus, for compatibility, an appropriate model bias metric should also respond to the change in geometry of the distribution. In particular, one would expect that when sub-population distributions move closer to each other, the performance and the bias should decrease and vice versa. The minimum requirement for the bias metric compatibility with the regressor performance is to require the bias metric to be non-invariant with respect to monotone transformations. This property will ensure that a proper regressor bias-performance trade-off is realized; this is especially important in the world of post-hoc corrective regressors, where one has the ability to alter the model.

In particular, Wasserstein-based model bias $\Bias_{W_1}(f|G)$ is a suitable candidate satisfying the above requirement. In contrast, the model bias based on Kolmogorov-Smirnov (KS), or other invariant statistical distance such as Kullback-Leibler (KL) divergence, does not satisfy this requirement: any scaling of the model results in punishing the loss function but does not affect the change in the model bias: $\Bias_{KS}(T(f)|G)=\Bias_{KS}(f|G)$. Thus, the bias-performance trade-off might not be properly realized when pairing KS metric with regressor performance given by the expected loss $\eloss$ because such bias metric is insensitive to monotone changes in the regressor.

To illustrate why this is the case, let $f_*$ be a model and consider the family of models
\[ 
\mathcal{F}=\{f: f=T(f_*), T: \RR \to \RR,  \text{$T$ is smooth, strictly increasing}\}.
\] 
Observe that $\Bias_{KS}(f_*|X,G)=\Bias_{KS}(\family|X,G)$, meanwhile $\eloss(Y,T\circ f_*(X))$ can be made arbitrarily large as long as the loss function is unbounded with respect to the second argument (as it tends to infinity). Thus, this bias-performance pairing is non-elastic, which is undesirable.  Hence, it is crucial to pair bias-performance metrics that showcase elasticity with respect to each other in order for a proper trade-off to be realized. Note that similar compatibility arguments can be made for compatilibity of metrics used in (S2) and (S3).

\begin{definition} 
We say that the pair of (bias,performance) metrics are compatible if both of them are either non-invariant with respect to the monotone transformation or both invariant.
\end{definition}

In what follows, we assume that the two metrics used for regressor bias-performance analysis are compatible and primarily work with $\Bias_{W_1}(f|G)$ or its generalized version that includes an appropriate link function.

Provided with an appropriate model bias metric and a regressor performance metric pair, we design a selection mechanism that relies on the concept of  Pareto optimality \citep{Luc2016}, which is related to the idea of the efficient frontier introduced by \citet{Markowitz} in the context of financial portfolio selection. The Pareto efficient frontier was later adapted in the context of ML bias mitigation; see \citet{Schmidt2019, Schmidt2021, Balashankar,Perrone2020}.

  \begin{definition}[\bf efficient frontier]\label{def::rbpmap}
 Let $\B(f|X,G)$ be the metric that measures the model bias and $\loss(y,f(x))$ be the loss function. Let $\mathcal{F}$ be as in \eqref{model_family} and
  \[
  U[\mathcal{F}; \B, \loss ]=\big\{(\beta,\lambda): \, \beta=\Bias(f_{\theta}|X,G), \lambda= -\eloss(Y,f_{\theta}(X)), \theta \in \Theta \big\}.
  \]
Suppose $\Theta \subset \RR^m$ is connected and the map $
\theta \to \big(\B(f_{\theta}|G),-\eloss(Y,f_{\theta}(X))\big)$ is continuous on $\Theta$. Then, the bias-performance efficient frontier (BPEF) is defined by:
\[
{\rm EF}[\family; \B, \loss] = \Big\{ (\bar{\beta},\bar{\lambda}) \in \del U:  (\beta,\bar{\lambda})\in \bar{U} \Rightarrow  \beta\geq \bar{\beta}, \, (\bar{\beta},\lambda)\in \bar{U} \Rightarrow  \lambda \leq \bar{\lambda}  \Big\}.
\]

\end{definition}

Note that the efficient frontier (under the assumptions of Definition \ref{def::rbpmap}) is always well-defined and non-empty if the boundary of $U[\family]$ is nonempty; see Figure \ref{fig::efffrontmock}. Furthermore, the set $U[\family]$ in general is not closed and hence it may happen that the efficient frontier itself may contain no points from $U[\family]$, but one can find models in $\family$ whose image under $\mathcal{T}$ is arbitrarily close to the efficient frontier. In the special case when the family  $\family$ is discrete one can construct an efficient frontier as the frontier of the convex hull of $U[\family]$ or that of the $\alpha$-shape.


To reconstruct the bias-performance efficient frontier, one can consider an optimization problem where the loss function includes the model bias as a penalization term:
\begin{equation}\label{efrontoptim}
\begin{aligned}
\theta_*(\omega) &= \underset{\theta \in \Theta}{\rm argmin} \Big\{ \eloss(Y, f_{\theta}(X)) + \omega \B(f_{\theta}|G,X) \Big\}, \quad \omega\geq 0.
\end{aligned}
\end{equation}

The above problem is not trivial for the following reasons. First, the optimization is in general non-convex; this is a direct consequence of the loss and bias terms in the objective function. Second, the dimension of the predictors $X$ and parameter $\theta$ can be large, increasing the complexity of the problem. Finally, in applications where the map $\theta \to f_{\theta} \in \family$ is non-smooth,  utilizing gradient-based optimization techniques might not be feasible. 

There are several approaches for \eqref{efrontoptim} that have been proposed in the literature. One approach is to incorporate the fairness constraint directly into ML training \citep{Dwork2012, Feldman2015, Zemel2013, Woodworth2017}. The second approach leaves the ML training procedure untouched; instead, it utilizes the hyperparameter search, which can be done either by randomly selecting hyperparameters, or by utilizing Bayesian hyperparameter search, or by employing feature engineering \citep{Bergstra2011, Schmidt2019, Perrone2020}. By design this method is both model-agnostic and metric-agnostic, although it has several limitations. For each $\theta \in \RR^m$, the loss function is optimized without consideration of fairness, which may lead to a very narrow (along fairness axis) efficient frontier; see Section 5. Furthermore, if the search space $\Theta$ is very large and the dataset is high-dimensional with  large number of observations, retraining can be computationally expensive.

\section{Relation between bias explanations and predictor bias}\label{subsec::bias_expl_pred}


\subsection{Effect of predictor bias on the model bias}

In this section, we investigate the relationship between bias explanations and the predictor bias, which can provide valuable insight on the use in bias mitigation. To this end we define the bias in predictors as follows.

\begin{definition}
  Let $G$ be a protected attribute and $\A$ be as in Definition \ref{def::modbias}.  Let $D(\cdot,\cdot)$ be a metric on the space of probability measures $\mathscr{P}_q(\RR)$, with $q \geq 0$. Suppose that $\E[|Z|^q]$ is finite.
\begin{itemize}
  \item [$(i)$] The bias of the random vector $Z\in \RR^k$ in the $(D,\A,w)$-metric is defined by
  \begin{equation*}
    \Bias_{D,\A}^{(w)}(Z|G) = \sum_{m=1}^M w_m D(P_{Z|\{A_m,G=0\}},P_{Z|\{A_m,G=1\}}).
  \end{equation*}

\item [$(ii)$] We say that $Z$ is unbiased in the $(D,\A,w)$-metric if $\Bias_{D,\A}^{(w)}(Z|G)=0$.
\end{itemize}
\end{definition}

\begin{lemma}
  $Z$ is unbiased in the $(D,\A,w)$-metric if and only if $Z|A_m$ is independent of $G$ for each $m$. As a consequence, if $\Bias_{D_1,\A}(X|G)=0$, then $\Bias_{D_2,\A}(f|X,G)=0$ for any two metrics $D_1$ and $D_2$.
\end{lemma}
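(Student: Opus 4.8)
The plan is to first establish the stated equivalence and then read off the consequence from it. The starting point is that $D$ is a metric on $\mathscr{P}_q(\RR)$, so every summand $D(P_{Z|\{A_m,G=0\}},P_{Z|\{A_m,G=1\}})$ is nonnegative and equals $0$ precisely when $P_{Z|\{A_m,G=0\}}=P_{Z|\{A_m,G=1\}}$. Since the weights satisfy $w_m>0$, a sum of finitely many nonnegative terms vanishes if and only if each term vanishes; hence $\Bias_{D,\A}^{(w)}(Z|G)=0$ is equivalent to having $P_{Z|\{A_m,G=0\}}=P_{Z|\{A_m,G=1\}}$ for every $m\in\{1,\dots,M\}$.

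It then remains to identify this family of distributional identities with conditional independence. First I would note that, because $G$ is binary, under the conditional measure $\PP(\cdot\mid A_m)$ the conditional law of $Z$ given $G$ takes only the two values $P_{Z|\{A_m,G=0\}}$ and $P_{Z|\{A_m,G=1\}}$. By definition, $Z$ restricted to $A_m$ is independent of $G$ exactly when $\PP(Z\in B\mid A_m,G=k)=\PP(Z\in B\mid A_m)$ for every Borel set $B$ and each $k$ with $\PP(A_m\cap\{G=k\})>0$; since $G$ is binary this forces the two conditional distributions to coincide (both equal $P_{Z|A_m}$). Conversely, if the two conditional laws coincide, say with common value $\mu$, then $\PP(Z\in B,G=k\mid A_m)=\mu(B)\,\PP(G=k\mid A_m)$, and summing over $k$ gives $\PP(Z\in B\mid A_m)=\mu(B)$, which is the independence statement. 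Combined with the previous paragraph, this yields the claimed equivalence. (If some $A_m\cap\{G=k\}$ is null, the associated conditional distribution is vacuous and the index contributes nothing, so the statement holds trivially there.)

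For the consequence, suppose $\Bias_{D_1,\A}(X|G)=0$. Applying the equivalence with $Z=X$ and metric $D_1$ shows that $X$ restricted to $A_m$ is independent of $G$ for every $m$. Since $f$ is a model, $f(X)$ is $\sigma(X)$-measurable, so $f(X)$ restricted to $A_m$ is a measurable function of $X$ restricted to $A_m$ and is therefore also independent of $G$ for every $m$ (a measurable function of a variable independent of $G$ is independent of $G$). Applying the equivalence once more, now with $Z=f(X)$ and metric $D_2$, gives $\Bias_{D_2,\A}(f|X,G)=0$, as desired.

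I do not expect a genuine obstacle; the only delicate points are bookkeeping. The first is conditioning on events of possibly zero probability in the definition of conditional independence, which I would handle either via regular conditional distributions or by the convention that a null class contributes nothing to the bias. The second is the integrability hypothesis $\E[|Z|^q]<\infty$ built into the domain of $D$: for the consequence one should also note that $f(X)$ lies in the domain of $D_2$ under the standing moment assumptions on the model output, while the only property of $D_2$ actually used, namely $D_2(\mu,\nu)=0\iff\mu=\nu$, requires no finiteness at all.
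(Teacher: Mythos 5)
Your proposal is correct and follows essentially the same route as the paper: the paper reduces the equivalence to its appendix Lemma B.1 (equality of all conditional laws $\Leftrightarrow$ independence, proved by the same law-of-total-probability computation you write out inline), and then the consequence follows, as in your argument, because $f(X)$ is a measurable function of $X$ and hence inherits the conditional independence from $X$. The only difference is presentational: you prove the independence equivalence directly for binary $G$ rather than citing a separate lemma, and you add some bookkeeping about null events and integrability that the paper leaves implicit.
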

\begin{proof}
  The proof follows from Lemma \ref{lmm::indepwasserstconn} provided in the appendix.
\end{proof}

The main message of the analysis that follows is that the bias explanation and the predictor bias are not equivalent.
In particular, the bias impact of the predictor $X_i$ in the model can be attributed to the following factors:
\begin{itemize}
  \item [(c1)] structure of the model,  
  \item [(c2)] bias in predictors,
  \item [(c3)] shape of the predictor distribution in the context of interaction.
\end{itemize}

To illustrate how the above components affect the bias contribution consider the following examples. First, for simplicity we assume that $X$ are independent and $\A=\{\Omega\}$.

Consider a model with $X \in [0,1]^2$  and $f(X)=a_1 X_1+a_2 X_2$. Let us keep $a,b$ fixed and consider the situation in which the distributions of $X_1$  and $X_2$  are fixed as well. Suppose that $X_2$  is independent of $G$ while $X_1$  and $G$ are dependent and therefore $X_1$ is biased. In that case, the $W_1$-based bias of $X_1$ will be amplified by the coefficient $a_1$, in light of the scaling property  $\Bias_{W_1}(a_1X_1|G)=a_1\Bias_{W_1}(X_1|G)$, which illustrates (c1).

Next, suppose that both $X_1$  and $X_2$ are biased and that $\Bias_{W_1} (X_1|G)\gg\Bias_{W_1}(X_2|G)$. In this case, for a model where  $a_1\ll a_2$   the bias explanation of $X_1$ will be small while the one from $X_2$ large. This implies that weakly biased predictors can have a significant impact on the model bias while strongly biased predictors may not contribute at all. At the same time, if $a_1 \sim a_2$ then the predictor with the larger bias will contribute more, which illustrates (c2).

The next example illustrates that the shape of the distribution of an unbiased predictor that interacts with a biased one affects the model bias.  Let $X \in \RR^2$  and suppose that $X_1$  is independent of $G$ while $X_2$ is biased. Consider a model with interactions $f(X)=X_1 X_2$  and suppose that $X_1$  is independent of $G$. Note that the change in the distribution of $X_1$ will affect the change in the $W_1$-based model bias through the interaction with $X_2$, which illustrates (c3).

We next demonstrate that statistical dependence and fairness are related, but not equivalent. First, note if the predictor is independent of the protected attribute, it will be fair in any fairness metric. On the other hand, a predictor can still be fair even under the presence of strong dependencies with the protected attribute, which implies that independence is a stronger notion to fairness.

Consider the model $f(X)=1_{\{X>0\}}$ and $X$ satisfying the property $P(X=\eps|G=0)=1$ and $P(X=-\eps|G=1)=1$. Note that as $\eps \to 0$, the  $W_1$-based bias of $X$ goes to zero, while KS-based bias is $1$ for all $\eps>0$.  Meanwhile, the model bias (in both metrics) remains $1$.

In the above example, the KS-metric exhibits a clear separation between the two subpopulations of the predictor and detects dependence between $X$ and $G$, ignoring the geometry of subpopulation distributions regardless of $\eps$. $W_1$ metric, on the other hand, assesses fairness at the level of values and senses the change in predictor distribution as $\eps \to 0$. 

The above example depicts a striking behavior. While the predictor bias goes to zero in the $W_1$ metric, the model bias does not. To understand when the model bias is continuous with respect to the predictor bias, we provide the following lemma:

  



\begin{lemma}\label{lmm::vanishbias}
Suppose $X_{\eps}\to X$ in $L^1(\PP)$ as $\eps \to 0^+$. Let $U \subset \RR^n$ be a common support of $\{\P_{X_{\eps}}\}_{\eps>0}$ and $P_X$. Let $G \in \{0,1\}$ be a protected attribute with $\PP(G=k)>0$, $k\in\{0,1\}$, and $\A$ as in Definition \ref{def::parity}. Suppose $f$ is continuous and bounded on $U$ and $\Bias_{W_1,\A}^{(w)}(X_{\eps}|G) \to 0$ as $\eps \to 0^+$. Then 
\[
\lim_{\eps \to 0^+}\Bias_{W_1,\A}^{(w)}(f|X_{\eps},G) = 0.
\]
Furthermore, $X$ is independent of $G$ and $\Bias_{W_1,\A}^{(w)}(f|X,G) = 0$.
\end{lemma}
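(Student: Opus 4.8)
The plan is to prove two things: that $\Bias_{W_1,\A}^{(w)}(f|X_\eps,G)\to\Bias_{W_1,\A}^{(w)}(f|X,G)$ as $\eps\to 0^+$, and that this limiting quantity equals $0$; the ``furthermore'' clause then drops out of the same argument. Throughout I use that every event $\{A_m,G=k\}$ has positive probability (implicit in all of the biases being well-defined) and that $\PP(X_\eps\in U)=\PP(X\in U)=1$, so that continuity of $f$ on $U$ is available along every realization that matters.

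The one genuinely delicate step is transferring the $L^1$-convergence through $f$, since $f$ is merely continuous, not Lipschitz. Here I would argue by subsequences: as $X_\eps\to X$ in $L^1(\PP)$ it converges in probability, so any subsequence admits a further subsequence with $X_{\eps_j}\to X$ $\PP$-a.s.; on the full-measure event where this holds and $X_{\eps_j},X\in U$, sequential continuity of $f$ on $U$ yields $f(X_{\eps_j})\to f(X)$ a.s., and boundedness of $f$ on $U$ together with dominated convergence upgrades this to $f(X_{\eps_j})\to f(X)$ in $L^1(\PP)$. Since every subsequence has such a further subsequence, $f(X_\eps)\to f(X)$ in $L^1(\PP)$ along the full family. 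This step cannot be bypassed by a Lipschitz estimate, and the example $f=\1_{\{X>0\}}$ preceding the lemma shows the whole conclusion genuinely fails without continuity of $f$ on the common support.

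With this in hand the rest reduces to the triangle inequality. Fix $m$ and $k\in\{0,1\}$: conditioning on $\{A_m,G=k\}$ only rescales the $L^1$-norm, so $f(X_\eps)\mid\{A_m,G=k\}\to f(X)\mid\{A_m,G=k\}$ in $L^1$, and coupling the two variables through their joint conditional law bounds $W_1\big(P_{f(X_\eps)\mid\{A_m,G=k\}},P_{f(X)\mid\{A_m,G=k\}}\big)$ by $\E\big[|f(X_\eps)-f(X)|\mid A_m,G=k\big]\to 0$. The triangle inequality for $W_1$ then forces $W_1\big(P_{f(X_\eps)\mid\{A_m,G=0\}},P_{f(X_\eps)\mid\{A_m,G=1\}}\big)\to W_1\big(P_{f(X)\mid\{A_m,G=0\}},P_{f(X)\mid\{A_m,G=1\}}\big)$, and summing against the weights $w_m$ gives $\Bias_{W_1,\A}^{(w)}(f|X_\eps,G)\to\Bias_{W_1,\A}^{(w)}(f|X,G)$. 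Running the identical computation on the predictors themselves, now using $X_\eps\to X$ in $L^1$ directly so that $f$ plays no role, gives $\Bias_{W_1,\A}^{(w)}(X_\eps|G)\to\Bias_{W_1,\A}^{(w)}(X|G)$; since the left-hand side tends to $0$ by hypothesis, $\Bias_{W_1,\A}^{(w)}(X|G)=0$. By the preceding lemma this means $X\mid A_m$ is independent of $G$ for every $m$, whence $\Bias_{W_1,\A}^{(w)}(f|X,G)=0$. Combining with the convergence just established yields $\lim_{\eps\to 0^+}\Bias_{W_1,\A}^{(w)}(f|X_\eps,G)=0$, and the independence of $X$ and $G$ together with $\Bias_{W_1,\A}^{(w)}(f|X,G)=0$ is exactly the final assertion.

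The main obstacle is therefore Step~1 alone, namely passing $L^1$-convergence through a merely continuous (but bounded) $f$; it is precisely there that the hypotheses on $f$ and on the common support $U$ are consumed, while everything else is the triangle inequality for $W_1$ and the already-proved characterization of unbiasedness.
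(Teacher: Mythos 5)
Your proof is correct, and it shares the paper's overall skeleton (triangle inequality for $W_1$ between subpopulation laws, plus the characterization of unbiasedness as conditional independence of $X|A_m$ and $G$), but it handles the key convergence step by a genuinely different mechanism. The paper first establishes $X\perp G$ from the hypothesis via the triangle inequality, uses it to get equality in distribution of $f(X)|G=0$ and $f(X)|G=1$, and then bounds $\Bias_{W_1}(f|X_{\eps},G)$ by the two cross terms $W_1\big(f(X_{\eps})|G=k,\,f(X)|G=k\big)$; these are shown to vanish by the continuous mapping theorem (convergence in distribution) together with the observation that boundedness of $f$ gives the laws a common compact support, invoking Villani's Corollary 6.13 to pass from weak convergence to $W_1$ convergence. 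You instead upgrade $X_{\eps}\to X$ in $L^1(\PP)$ to $f(X_{\eps})\to f(X)$ in $L^1(\PP)$ by the subsequence/a.s./dominated-convergence argument and then use the elementary coupling bound $W_1\le\E\big[|f(X_{\eps})-f(X)|\,\big|\,A_m,G=k\big]$ on each conditioning event, which gives the slightly stronger conclusion $\Bias_{W_1,\A}^{(w)}(f|X_{\eps},G)\to\Bias_{W_1,\A}^{(w)}(f|X,G)$ before identifying the limit as zero; the same computation applied to $X_{\eps}$ itself recovers $\Bias_{W_1,\A}^{(w)}(X|G)=0$ and hence the independence statement, exactly as in the paper. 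Your route is more self-contained (no appeal to the metrization of weak convergence by $W_1$ on compactly supported measures) and treats the conditioning on $\{A_m,G=k\}$ explicitly rather than through the paper's reduction to $\A=\{\Omega\}$; the paper's route is shorter given the cited theorem. Both arguments consume the hypotheses in the same places: boundedness of $f$ (you for domination, the paper for compact support) and continuity of $f$ on the common support $U$ for the continuous-mapping step.
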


\begin{proof}
See Appendix \ref{app::auxlemmas}.
\end{proof}


\subsection{Relation between bias explanations and model bias}

Bias explanations quantify the impact of the predictor on the bias in the output. More importantly, they capture different aspects of the impact factors described in (c1)-(c3) depending on the model explainer being used. For example, bias explanations that use a marginal approach (for example, the ones based on PDPs, or expected individual bias explanations (IBEs) which we introduce in Appendix \ref{sec::ibes}) capture the propagation of the predictor bias through the model, isolating this predictor from its complement in the process, which guarantees that unbiased predictors have no bias impact. In particular,  we have the following lemma.

\begin{lemma}
Let $X,f,\A$ be as in Definition \ref{def::modbias}. Let $\{\beta_i\}_{i=1}^n$ be either bias explanations of $(X,f)$ in the $(W_1,\A,w)$-metric constructed  via explainers $E_i=E_i(X_i)$  depending explicitly only on $X_i$ or expected IBEs. Suppose $\{X_i|A_m\}_{m=1}^M$ is independent of $G$ for some $i \in N$. Then $\beta_i=\beta_i^{\pm}=0$.
\end{lemma}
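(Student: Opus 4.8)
The plan is to show that if $X_i|A_m$ is independent of $G$ for every $m$, then each relevant single-feature explainer $E_i(X_i)$ is distributed identically across $G=0$ and $G=1$ conditionally on every $A_m$, which forces all the transport integrals defining $\beta_i$ and $\beta_i^\pm$ to vanish. The key observation is that the explainers in question are \emph{deterministic functions of $X_i$ alone}: for an explainer of the form $E_i = E_i(X_i)$ this is immediate, and for the expected IBEs introduced in Appendix \ref{sec::ibes} one checks from their definition that $E_i$ is again of the form $g(X_i)$ for some measurable $g$ (this is the one place where I must invoke the precise construction of expected IBEs rather than treat them as a black box).

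First I would fix $m \in \{1,\dots,M\}$ and write $E_i(X_i) = g(X_i)$. Since $X_i|A_m$ is independent of $G$, the conditional law of $X_i$ given $\{A_m, G=0\}$ equals the conditional law of $X_i$ given $\{A_m, G=1\}$ (both equal the law of $X_i$ given $A_m$). Pushing forward by the measurable map $g$, the conditional law of $E_i(X_i)$ given $\{A_m, G=0\}$ equals that given $\{A_m, G=1\}$. Hence the two subpopulation CDFs coincide, $F_{E_i|\{A_m,G=0\}} = F_{E_i|\{A_m,G=1\}}$, and therefore so do their generalized inverses, $F^{[-1]}_{E_i|\{A_m,G=0\}} = F^{[-1]}_{E_i|\{A_m,G=1\}}$ pointwise on $(0,1)$.

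Next I would plug this into the definitions. For the total bias explanation, \eqref{biasexpl} (in its $\A$-weighted form) gives $\beta_i = \sum_m w_m \int_0^1 |F^{[-1]}_{E_i|\{A_m,G=0\}} - F^{[-1]}_{E_i|\{A_m,G=1\}}|\,dp = 0$, since each integrand is identically zero. For the signed pieces, the set $\Pcal_{i\pm}$ is defined via the strict inequality $\pm (F^{[-1]}_{E_i|\{A_m,G=0\}} - F^{[-1]}_{E_i|\{A_m,G=1\}})\cdot \favdir_f > 0$; since the difference vanishes identically, $\Pcal_{i\pm}$ has Lebesgue measure zero (indeed is empty), so $\beta_i^{\pm} = \sum_m w_m \int_{\Pcal_{i\pm}} (\cdots)\,dp = 0$ as well. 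This yields $\beta_i = \beta_i^+ = \beta_i^- = 0$.

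The routine parts are the pushforward-of-measures argument and the bookkeeping with the $\A$-weighted sums. The only genuine obstacle is handling the expected IBE case: I need to confirm that an expected IBE for predictor $X_i$, as defined in Appendix \ref{sec::ibes}, is indeed a function of $X_i$ alone (e.g.\ of the form $t \mapsto \E[f(t, X_{-i})] - \text{const}$ evaluated at $t = X_i$, integrated or averaged appropriately), so that the same pushforward argument applies; once that representation is in hand the rest is identical to the $E_i(X_i)$ case. If the expected IBE instead depended on the joint distribution in a way that reintroduced $G$-dependence through $X_{-i}$, the claim could fail, so this verification is the crux.
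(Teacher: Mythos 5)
Your argument for explainers of the form $E_i=E_i(X_i)$ is exactly the paper's proof: push the equal conditional laws $P_{X_i|\{A_m,G=0\}}=P_{X_i|\{A_m,G=1\}}$ forward through $E_i$, so the subpopulation distributions of $E_i(X_i)$ coincide for every $m$, and all transport integrals (including the signed ones, since each $\Pcal_{i\pm}$ is empty) vanish. The one point you flag as the crux — the expected IBE case, which the paper handles with only ``the proof is similar'' — does go through, but not quite in the form you anticipate: the expected IBE is not itself an explainer of the form $g(X_i)$ (your guess $t\mapsto \E[f(t,X_{-\{i\}})]$ averaged first would be the PDP-type explainer; the defining feature of IBEs in Definition \ref{def::IBE} is that the averaging over $x_{-\{i\}}$ happens \emph{after} the Wasserstein distance is taken). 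Rather, for each frozen $x_{-\{i\}}$ the explainer $E_i^{(x_{-\{i\}})}(X_i)=f(X_i,x_{-\{i\}})$ is a deterministic function of $X_i$ alone, so your pushforward argument applies verbatim and gives $\beta_i^{\IBE}(x_{-\{i\}},f|X,G)=0$ for every $x_{-\{i\}}$; the expected IBE $\bar{\beta}_i^{\IBE}$ is then the integral of the zero function against the unconditional marginal $P_{X_{-\{i\}}}$, hence zero, and likewise for $\bar{\beta}_i^{\IBE\pm}$. The failure mode you worry about — $G$-dependence re-entering through $X_{-\{i\}}$ — cannot occur precisely because the mixing measure $P_{X_{-\{i\}}}$ is not conditioned on $G$.
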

\begin{proof}
If $E_i = E_i(X_i)$ then it explicitly depends only on $X_i$. Having $X_i|A_m$ independent of $G$ implies $P_{X_i|\{G=0,A_m\}}=P_{X_i|\{G=1,A_m\}}$
and hence 
\[
P_{E_{i}(X_i)|\{G=0,A_m\}} = P_{X_i|\{G=0,A_m\}} \circ E_i^{-1}=P_{X_i|\{G=1,A_m\}} \circ E_i^{-1}=P_{E_{i}(X_i)|\{G=1,A_m\}}
\]
Since $m$ in the above equality is arbitrary, we obtain $\beta_i=\beta_{\pm}=0$. The proof for expected IBEs is similar.
\end{proof}

Game theoretical model explainers, on the other hand, produce explanations that depend on the joint distribution $X$ which allow one to take into account interactions and dependencies. For this reason the bias explanations based on Shapley explainers can capture the effect of all three factors (c1)-(c3) to the model bias. 

For example, let $f(X)=X_1 X_2$  with $\E[X]=0$ and assume the predictors are independent. Suppose that $X_1$ is unbiased and $X_2$ is biased. Computing Shapley explanations we obtain:
\[
\varphi_i(X;f,v)=\frac{1}{2} X_1 X_2, \quad v \in \{\vce,\vpdp\}.
\]
The above equality indicates that the corresponding bias explanations for both predictors coincide regardless of the predictors' bias. Thus, $X_1$  contributes to the model bias by interacting with the biased predictor via scaling.

 Another interesting question to ask is whether bias explanations being all zero guarantees that the model is unbiased. As it turns out, it is not always true and depends on the model explainer being used. However, for bias explanations based on marginal and conditional Shapley we have the following lemma:

\begin{lemma}
Let $X \in \RR^n$ be predictors, $f$ a model, and $E_i=\varphi_i(X;f,v)$, with $v \in\{\vce,\vpdp\}$. Let $\beta=\{\beta_i\}_{i=1}^n$ be bias explanations based on  predictor explainer $E_i$ in the $(W_1,\A,w)$-metric. If $|\beta|=0$, then $\Bias_{W_1,\A}^{(w)}(f|X,G)=0$.
\end{lemma}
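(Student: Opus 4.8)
The plan is to use the telescoping (efficiency) property of the Shapley value to write $f(X)$ as an additive constant plus the sum of its predictor attributions, and then to transfer the vanishing of each $\beta_i$ to the model bias through the contraction of $W_1$ under Lipschitz maps. First I would record that for both games $v\in\{\vce,\vpdp\}$ one has $v(N;X,f)=f(X)$ and $v(\emptyset;X,f)=\E[f(X)]$: indeed $\vce(N)=\E[f\mid X_N]=\E[f\mid X]=f(X)$ since $f(X)$ is $\sigma(X)$-measurable, $\vpdp(N)=\E[f(x_N,X_{-N})]\big|_{x_N=X_N}=f(X)$ because $-N=\emptyset$, and each game returns $\E[f(X)]$ on the empty coalition. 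Summing the Shapley formula \eqref{shapform} over $i$ telescopes, giving $\sum_{i=1}^n\varphi_i(X;f,v)=v(N)-v(\emptyset)=f(X)-\E[f(X)]$. Hence $f(X)=c+\mathbf{1}_n^{\T}\Phi$, where $c=\E[f(X)]$ is a constant and $\Phi=(\varphi_1(X;f,v),\dots,\varphi_n(X;f,v))$ is the vector of attributions, that is, the predictor explainer $E(X;f)$.

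Next I would read off the hypothesis: since $|\beta|=0$ and $\beta_i=\sum_m w_m\,W_1\big(E_i|\{A_m,G=0\},E_i|\{A_m,G=1\}\big)$ with all $w_m>0$, every summand vanishes, and because $W_1(P,Q)=0$ forces $P=Q$ we get $P_{\varphi_i(X;f,v)|\{A_m,G=0\}}=P_{\varphi_i(X;f,v)|\{A_m,G=1\}}$ for all $i$ and all $m$. To assemble the conclusion, note that translating by the constant $c$ is an isometry, so for each $m$,
\[
W_1\big(f(X)|\{A_m,G=0\},\,f(X)|\{A_m,G=1\}\big)=W_1\big(\mathbf{1}_n^{\T}\Phi\,|\{A_m,G=0\},\ \mathbf{1}_n^{\T}\Phi\,|\{A_m,G=1\}\big).
\]
Since $x\mapsto\mathbf{1}_n^{\T}x$ is $1$-Lipschitz from $(\RR^n,\|\cdot\|_1)$ to $\RR$ and $W_1$ contracts under $1$-Lipschitz pushforwards, the right-hand side is dominated by the $\ell^1$-Wasserstein distance between the \emph{joint} conditional laws $P_{\Phi|\{A_m,G=0\}}$ and $P_{\Phi|\{A_m,G=1\}}$; if this joint distance is $0$ for each $m$, summing over $m$ against the weights $w_m$ yields $\Bias_{W_1,\A}^{(w)}(f|X,G)=0$.

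The main obstacle is exactly this last reduction: passing from coordinatewise equality of the conditional laws of the $\varphi_i$ to equality of the joint conditional law of $\Phi$ (equivalently, controlling the joint $\ell^1$-Wasserstein distance by $\sum_i\beta_i$). I would attack it with a glueing construction, assembling a coupling of $P_{\Phi|\{A_m,G=0\}}$ and $P_{\Phi|\{A_m,G=1\}}$ from the optimal one-dimensional couplings of the individual coordinates and exploiting that each $\varphi_i(X;f,v)$ is a fixed measurable function of $X$ whose conditional law is $G$-invariant; alternatively, one reduces to showing that the entire attribution vector $\Phi$ — hence $f(X)=c+\mathbf{1}_n^{\T}\Phi$ — is conditionally $G$-invariant and then invokes the first lemma of this section, which identifies conditional independence from $G$ with being unbiased. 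Making one of these routes rigorous is where essentially all the work lies; everything preceding it is bookkeeping with the Shapley formula and the Lipschitz property of $W_1$.
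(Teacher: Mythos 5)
Your setup coincides with the paper's: use efficiency of the Shapley value to write $f(X)=\E[f(X)]+\sum_i\varphi_i(X;f,v)$, and use the fact that $W_1(P,Q)=0$ forces $P=Q$ (Lemma \ref{lmm::indepwasserstconn}) to turn $|\beta|=0$ into equality, for every $i$ and $m$, of the conditional laws of $\varphi_i(X;f,v)$ given $\{A_m,G=0\}$ and $\{A_m,G=1\}$. But your argument stops exactly where the real content lies, and you say so yourself: you never establish that the law of the \emph{sum} $\sum_i\varphi_i$ (equivalently, of the joint vector of attributions) conditioned on $A_m$ is the same across the two groups. Neither of the routes you sketch can close this as stated. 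Coordinatewise equality of conditional marginals neither determines nor controls the joint conditional law, so there is no bound of the joint $\ell^1$-Wasserstein distance by the $\beta_i$'s: with identical one-dimensional marginals the coordinates can be comonotone in one group and independent in the other, and then the sum has different laws. Concretely, for the marginal game take $f(X)=X_1+X_2$ with $X=(Z,Z)$ on $G=0$ and $X=(Z,Z')$, $Z,Z'$ i.i.d.\ $N(0,1)$, on $G=1$; then $\varphi_i(X;f,\vpdp)=X_i-\E[X_i]$, so $\beta_1=\beta_2=0$, yet $f(X)|G=0\sim N(0,4)$ while $f(X)|G=1\sim N(0,2)$, so the model bias is positive. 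Glueing the optimal one-dimensional couplings produces a coupling of marginals, not of the joint attribution laws, so it does not repair this.

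For comparison, the paper's own proof is a two-line argument that passes directly from ``each $\varphi_i(X;f,v)|A_m$ is independent of $G$'' to ``$\sum_i\varphi_i(X;f,v)+\E[f(X)]=f(X)$ conditioned on $A_m$ is independent of $G$'' via efficiency, i.e.\ it asserts without further justification precisely the step you flag as the hard one. So you have correctly isolated the genuine difficulty (the passage from per-coordinate to joint $G$-invariance), but your proposal does not prove the lemma, and the repair strategies you outline would fail: the example above shows the passage can genuinely break down for $v=\vpdp$, whereas for $v=\vce$ the attributions incorporate the dependence structure and that particular counterexample no longer applies, so any complete argument would have to use properties of the specific game rather than the marginal laws of the $\varphi_i$ alone.
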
 
\begin{proof}
Suppose $\beta_i=0$ for all $i \in \{1,2,\dots,n\}$. Then for each $i$ and $m$, $\varphi_i(X;f,v)|A_m$ is independent of $G$. Then by the efficiency property we conclude that $\sum_{i=1}^n \varphi_i(X;f,v)+\E[f(X)]=f(X)$ conditioned on $A_m$ is independent of $G$. 
\end{proof}

For bias explanations based on Shapley values, it might be difficult to determine which factor among (c1)-(c3) is responsible for the bias impact. In particular, non-zero bias explanation of an unbiased predictor can occur via interaction with biased predictors.


\paragraph{Analogy with variance.}

To understand that bias explanations are not the only type of explanations where the shape of the predictor distribution and interactions play a role in the impact, consider the following analogy: suppose we are trying to estimate the contribution of the predictor $X_i$  to the variance of the model $f(X)$. As with bias explanations, the variance of the predictor $X_i$  is not a proxy for the impact on the variance of $f$. The predictor might not be explicitly used by the model structure, hence its variance may play no role at all. While interacting with other predictors, the distribution of the predictor has a direct effect on the model variance as well, even if the variance of the predictor is small. Similarly, the model structure can ensure that the model variance can be made arbitrarily large no matter how small the predictor variance is.







 \section{Bias mitigation for Wasserstein-based fairness metrics}\label{sec::bias_mitigation}

  In light of regulatory restrictions on use of the protected attribute, we consider the following fairness assessment and bias mitigation procedure:

\begin{itemize}
  \item [(S1)] Bias measurement and interpretability. Given a model $f$ perform the fairness assessment by measuring the bias among sub-population distributions and determine the main drivers of that bias, that is, the list of predictors contributing the most to that bias.

  \item [(S2)] Mitigation. Given a trained model $f(x)$, construct a post-processed model utilizing the information on the main drivers, and without the direct use of the protected attribute.
\end{itemize}

In this article we address (S2), which relies on (S1) investigated in the companion paper \citet{Miroshnikov2020}. Utilizing bias attributions, we are able to reduce the dimensionality of the bias mitigation problem and present several post-processing approaches for bias mitigation. 

In what follows, we primarily work with the $W_1$-metric, but all results and conclusions can be trivially extended to the $(W_1,\A,w)$ -metric.

\subsection{Selection of main bias drivers} \label{sec::biasdriverselect}

\subsubsection{Model bias as a superposition of bias explanations} \label{sec::modsuperpos}

  Before outlining our approaches for bias mitigation, we present in what follows a connection between the model bias and bias explanations. This connection is a crucial component in forming our approaches.

Recall that the model bias can be decomposed into the positive and negative components 
\[
  \Bias_{W_1}(f|G) = \Bias_{W_1}^+(f|G) + \Bias_{W_1}^-(f|G).
\]

We now express the positive and negative model biases using Shapley bias explanations as follows: 
\[
\begin{aligned}
\Bias^+_{W_1}(f|X,G) 
&= \sum_{i} \1_{\{I_{++}\}}\varphi_i[v^{bias+}] - \sum_{i}\1_{\{I_{+-}\}}(-\varphi_i[v^{bias+}])\\
\Bias^-_{W_1}(f|X,G) 
&= \sum_{i} \1_{\{I_{-+}\}}\varphi_i[v^{bias-}] - \sum_{i}\1_{\{I_{--}\}}(-\varphi_i[v^{bias-}])
\end{aligned}
\]
where $I_{\pm+}=\{i: \varphi_i[v^{bias\pm}]>0\}$ and $I_{\pm-}=\{i: \varphi_i[v^{bias\pm}]<0\}$.

We next define the following non-negative attributions:
\begin{equation}\label{shapbiasatom}
\begin{aligned}
\bar{\beta}_i^{++}&=\1_{\{I_{++}\}}\varphi_i[v^{bias+}],& \bar{\beta}_i^{+-}&=\1_{\{I_{+-}\}}(-\varphi_i[v^{bias+}])\\
\bar{\beta}_i^{-+}&=\1_{\{I_{-+}\}}\varphi_i[v^{bias-}],& \bar{\beta}_i^{--}&=\1_{\{I_{--}\}}(-\varphi_i[v^{bias-}])
\end{aligned}
\end{equation}
and conclude that the model bias is a superposition of four types of attributions: 
\begin{equation}\label{modsuperposition}
\Bias_{W_1}(f|X,G) =  \sum_{i \in N} \bar{\beta}_i^{++} + \sum_{i \in N}\bar{\beta}_i^{-+} - \sum_{i \in N}\bar{\beta}_i^{+-}  - \sum_{i \in N}\bar{\beta}_i^{--} \geq 0.
\end{equation}
Here  $\beta_i^{++}, \ \beta_i^{-+}$ represent the attribution of predictor $X_i$ to the increase of the positive and negative flows between the model subpopulations, respectively. Similarly,  $\beta_i^{+-}, \ \beta_i^{--}$ represent the attribution  of predictor $X_i$ to the reduction in the positive and negative model flows, respectively.

We next define the attributions
\begin{equation}\label{shapbiasposneg}
\bar{\beta}_i^{+} = \bar{\beta}_i^{++} + \bar{\beta}^{--}, \quad \bar{\beta}_i^{-} = \bar{\beta}_i^{-+} + \bar{\beta}_i^{+-}, \quad \bar{\beta}_i = \bar{\beta}_i^{+} + \bar{\beta}_i^{-}
\end{equation}
and note that
\begin{equation}\label{modsuperpositionnet}
\Bias_{W_1}^{net}(f|G) =  \sum_{i \in N}\bar{\beta}_i^+ - \bar{\beta}_i^-.
\end{equation}
Here $\bar{\beta}_i^{+}$ represents the contribution of the predictor to the transport effort of pushing the non-protected class in the favorable direction, which includes both the contribution to increasing the positive model bias and decreasing the negative model bias; the latter occurs in cases where the predictor interacts with other predictors through the model structure. A similar description holds for $\bar{\beta}_i^{-}$.

\subsubsection{Selection procedure} 


We first discuss the selection in the context of Shapley bias explanations in view of their additivity. Pick two thresholds $\eps_+,\eps_->0$, and form the lists 
\[
N_+=\{i \in N: \bar{\beta}_i^+>\eps_+\}, \quad N_-=\{i \in N: \bar{\beta}_i^->\eps_-\}.
\] 
If possible, the thresholds should be chosen so that neither list is empty. Set the list of most impactful predictors to be
\begin{equation}\label{impactlist}
M=N_+ \cup N_- 
\end{equation}
and note that 
\begin{equation}\label{modbiastrunc}
\Bias_{W_1}(f|X,G) =  \sum_{i \in M} \big(\bar{\beta}_i^{++} + \bar{\beta}_i^{-+} - \bar{\beta}_i^{+-}  - \bar{\beta}_i^{--} \big) + O(\eps)\geq 0.
\end{equation}
It is also practical to partition the list M into three disjoint lists:
\begin{equation}\label{Mpartition}
\begin{aligned}
  M_+ = \{i\in M:\bar{\beta}_i^+ \gg \bar{\beta}_i^-\}, \,\, M_- = \{i\in M: \bar{\beta}_i^+ \ll \bar{\beta}_i^-\}, \,\, M_{0} = \{i \in M: \bar{\beta}_i^+ \sim \beta_i^-\}.
\end{aligned}
\end{equation}

 The objective of the above procedure is to reduce the problem dimensionality so that $m=|M|\ll n$. If the list $M$ is too large, one may reduce it as follows: given $m_*$, rank order the list $N_+$  and the list $N_-$  and pick the largest $m_*$ in each list, in which case $m=|M|\leq 2m_*$. While \eqref{modbiastrunc} will no longer hold, it is not relevant to our method as we only need to select the most impactful predictors from each list.

The Shapley bias explanations have very high complexity. For this reason, in practice, we compute the basic bias explanations $\{\beta_i^{\pm}\}$ defined in \eqref{biasexpl} based on model explainers such as PDPs or the marginal Shapley values. The quantities $\{\beta_i^{\pm}\}$, roughly speaking, serve as approximants to the quantities $\{\bar{\beta}_i^{\pm}\}$ defined in \eqref{shapbiasposneg}. The selection procedure is carried out as discussed above.

\subsection{Compressive mappings}

It has been observed by \cite{Feldman2015} that if a dataset is fair, with respect to a given fairness metric, then a classifier trained on such data will be fair. This stays true for regressors as well; see Lemma \ref{lmm::vanishbias}. Since most datasets are not fair, \cite{Feldman2015} suggested repairing and partial repairing of the dataset: given $(X,G)$, form $\bar{X}=\bar{X}(X,G)$ via predictor transformations using median distributions which arise in transport theory.

 In the aforementioned work, repairing the dataset requires explicit knowledge of the protected attribute $G$, which in practice is not available either in training or prediction stage and, as stated, is not allowed by regulations.  Furthermore, the transformations are metric-specific. This means that if the fairness metric or fairness penalization (in reference to partial repair) changes, the repair method and retraining would need to be re-applied, leading to high computational cost in the case of large datasets. 

Motivated by \cite{Feldman2015}, we see potential in outlining a different approach that utilizes predictor transformations, without changing datasets themselves and avoiding retraining. First, we reduce the problem's dimensionality using the bias explanations as discussed in Section \ref{sec::biasdriverselect} and obtain the list $M$ of most bias impactful predictors, with $m\ll n$. Next, we focus on predictors $X_M$. In principle, one can partially repair the set $X_M$ and then use it as an input to the trained model. While this seems appealing, this still requires knowledge of $G$. For this reason, we take another route. Motivated by the discussion in Section \ref{sec::biasdriverselect}, we construct appropriate transformations of predictors in $X_M$  that allow us to adjust their bias explanations without explicitly using $G$, leading to change in the model bias.

To provide some intuition behind the choice of transformations, consider a model $f$ such that $\Bias_{W_1}^+(f|X,G)>0$ and $\Bias_{W_1}^-(f|X,G) = 0$. In that case, by \eqref{modsuperposition} we have
\begin{equation}\label{superpospos}
\Bias_{W_1}(f|G) = \Bias_{W_1}^+(f|G) = \sum_{i \in N}\bar{\beta}_i^+  - \sum_{i \in N} \bar{\beta}_i^- = A_+ - A_- \geq 0.
\end{equation}

Hence adjusting predictors so that $A+$ decreases and $A-$ increases would lead to the reduction of model bias. The adjustments can be limited to the list $M$ to reduce complexity.

Note that the Wasserstein-based metric, defined on a normed state space, satisfies the scaling property 
\begin{equation*}
\Bias_{W_1}^{\pm}(a \cdot X_i + b|G) = a \cdot  \Bias_{W_1}^{\pm}(X_i|G), \quad a,b,\in \RR.
\end{equation*}
Thus, any type of transformation that rescales the predictor will adjust its bias and hence its bias impact. In particular, for compressive transformations that pull the values of a predictor towards a reference point, the basic bias explanations $\beta_i^+$, $\beta_i^-$ of the adjusted predictor $X_i$ are expected to decrease. For expansive maps, that push values away from a reference point, the bias explanations are expected to increase. Therefore, one possible strategy in the scenario \eqref{superpospos} is to compress predictors in $M_+$  and expand predictors in $M_-$, while predictors in $M_0$ can be transformed by compressing and expanding different regions of the predictor values in a way so that $\beta_i^+$  decreases and $\beta_i^-$ increases.


Motivated by the above discussion, we introduce the following transformations:
\begin{definition}\label{def::comprprop}
Let $\{\bar{T}(z; \alpha, z_*)\}$ with $z, z^* \in \RR^m$, $\alpha\in \RR^{km}$, be a family of continuous maps in the form 
\begin{equation}\label{comptransf}
\bar{T}(z;\alpha,z^*)=(T(z_1; a^1, z_{1}^*),T(z_2; a^2, z_{2}^*),\dots,T(z_m; a^m, z_{m}^*)), 
\end{equation}
where $\alpha=(a^1,a^2,\dots,a^m)$, and $t \to T(t; a, t^*)$, with $a \in \RR^k, t_*\in \RR$, is strictly increasing and satisfies
\begin{itemize}
  \item [$(i)$] $\displaystyle \lim_{a \to +\infty}T(t; a, t^*)=t^*$ such that $|T(t;a,t)-t_*|\leq C |t-t^*|^{q} a^{-\delta}$ for $q \geq 0$, $\delta > 0$.\\
  \item [$(ii)$] $\displaystyle \lim_{a\to a_0} T(t;a,t^*)=t$, for some $a_0=a_0(t^*) \in \RR^k$.\\
  \item [$(iii)$] $\displaystyle \lim_{a \to 0+} \text{sgn}(t-t^*)T(t;a,t^*)=+\infty$.
\end{itemize}
We call the above family a compressive family.
\end{definition}


The next lemma states that compressing biased predictors to a point removes the bias in the model.
\begin{lemma}\label{lmm::partialneutrmod}
Let $X$, $G$, $f$, $\mathcal{A}$ be as in Definition \ref{def::modbias}. Let $M \subset N$. Suppose $X_{-M}$ is unbiased in the $(W_1,\A,w)$-metric and $f$ is continuous on the support of $X$. Let $\{\bar{T}(\cdot;\alpha,x^{*}_M)\}$ be a compressive family on $\RR^{m}$. Let an intermediate post-processed model be given by
\begin{equation}\label{partialneutrmod}
\tilde{f}(X; \alpha,x_M^*):=f(\bar{T}(X_M;\alpha,x^*_M),X_{-M}), \quad \alpha \in \RR^{km}, \, x_M^* \in \RR^m.
\end{equation}
Then for any continuous, monotonic function $C:\RR \to \RR$ we have 
\begin{equation}\label{eq::limbias}
\lim_{\alpha \to \infty} \Bias_{W_1,\mathcal{A}}^{(w)}( C\circ\tilde{f}(X;\alpha,x_M^*)|X,G) = \Bias_{W_1,\mathcal{A}}^{(w)}(C\circ f(x_{M}^*,X_{-M})|X_{-M},G)=0
\end{equation}
provided $\E\big[|X_M|^{max(1,q)}\big]<\infty$ for $q>0$ in Definition \ref{def::comprprop}(i).
\end{lemma}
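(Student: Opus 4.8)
The plan is to establish the first equality in \eqref{eq::limbias} by showing that $C\circ\tilde f(X;\alpha,x_M^*)\to C\circ f(x_M^*,X_{-M})$ in $L^1(\PP)$ as $\alpha\to\infty$, and then to obtain the second equality (the vanishing of the bias of the limit) by applying Lemma \ref{lmm::vanishbias} with the family indexed by $\eps\sim 1/\alpha$. The $L^1$-convergence is the analytic heart of the argument. First I would note that by Definition \ref{def::comprprop}(i), for each coordinate $j$ we have $|T(X_{i_j};a^j,x_{i_j}^*)-x_{i_j}^*|\le C|X_{i_j}-x_{i_j}^*|^q (a^j)^{-\delta}\to 0$ a.s. as $a^j\to\infty$, and hence $\bar T(X_M;\alpha,x_M^*)\to x_M^*$ a.s.; since $f$ is continuous on the support of $X$, this gives $\tilde f(X;\alpha,x_M^*)\to f(x_M^*,X_{-M})$ a.s., and then continuity of $C$ gives a.s. convergence of $C\circ\tilde f$.

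To upgrade a.s. convergence to $L^1$ convergence I would use the continuity of $f$ and $C$ together with a boundedness/domination argument. The cleanest route: the argument $(\bar T(X_M;\alpha,x_M^*),X_{-M})$ stays, for large $\alpha$, inside a fixed compact neighborhood of $(\{x_M^*\}\times\mathrm{supp}(X_{-M}))$ intersected with $\mathrm{supp}(X)$ — more precisely, since $T$ is strictly increasing and $T(t;a,t^*)$ lies between $t$ and $t^*$ for $a$ large, each transformed coordinate is sandwiched between $X_{i_j}$ and $x_{i_j}^*$, so the transformed predictor vector is dominated in norm by $|X_M|+|x_M^*|$. Using the moment hypothesis $\E[|X_M|^{\max(1,q)}]<\infty$ and boundedness of $f$ on compacta (or a linear-growth bound), I would produce an integrable dominating function for $|C\circ\tilde f(X;\alpha,x_M^*)|$ uniformly in large $\alpha$, and conclude by dominated convergence. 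If $f$ is merely continuous (not bounded) on the whole support, one restricts to the compact hull of the relevant region; the moment assumption on $X_M$ is exactly what controls the tails. This controls $W_1$ because $W_1(P_{U|G=0},P_{U|G=1})\le \E[|U-V|\,|G=0]/\PP(G=0)+\cdots$-type bounds, or more simply $|\,\Bias_{W_1,\A}^{(w)}(U|G)-\Bias_{W_1,\A}^{(w)}(V|G)|\le c\,\E|U-V|$ for a constant $c$ depending on $\A$ and the $w_m$ and on $\min_m\PP(A_m,G=k)$ (a one-sided Kantorovich/triangle-inequality estimate), so $L^1$-convergence of the regressor forces convergence of its bias.

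For the second equality, I would observe that $C\circ f(x_M^*,X_{-M})$ is a bounded continuous function of $X_{-M}$ alone, and $X_{-M}$ is assumed unbiased in the $(W_1,\A,w)$-metric, i.e. $\Bias_{W_1,\A}^{(w)}(X_{-M}|G)=0$; then Lemma \ref{lmm::vanishbias} (applied with the constant family $X_\eps\equiv X_{-M}$, or directly its final sentence combined with the earlier Lemma stating unbiasedness is equivalent to conditional independence) yields $\Bias_{W_1,\A}^{(w)}(C\circ f(x_M^*,X_{-M})|X_{-M},G)=0$. Alternatively one invokes Lemma \ref{lmm::vanishbias} in one stroke on the family $X_\eps=(\bar T(X_M;\alpha(\eps),x_M^*),X_{-M})$: it converges in $L^1$ to $(x_M^*,X_{-M})$, its predictor bias tends to $0$ (since the $M$-block converges to a constant and the $-M$-block is unbiased, using additivity/continuity of $\Bias_{W_1,\A}^{(w)}$ under the $\A$-decomposition), and $C\circ f$ is continuous and bounded on the support — giving both the limit statement and the vanishing simultaneously.

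The main obstacle I anticipate is the uniform integrability step: turning the pointwise bound from Definition \ref{def::comprprop}(i) into a genuine $L^1$-domination of $C\circ\tilde f$ that is uniform in $\alpha$, particularly reconciling the "$f$ continuous on the support" hypothesis (not a priori bounded, and the support need not be compact) with the need to bound $\E|C\circ\tilde f(X;\alpha,x_M^*)|$. The resolution is to exploit that the transformed coordinates are squeezed between $X_{i_j}$ and $x_{i_j}^*$ so the effective domain is contained in a set controlled by $\mathrm{supp}(X)$, and then use the moment hypothesis $\E[|X_M|^{\max(1,q)}]<\infty$; one must be slightly careful that $q$ may exceed $1$, which is precisely why the hypothesis is stated with $\max(1,q)$. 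A secondary routine point is checking that $C$ monotone continuous does not spoil integrability — handled by noting $C$ is bounded on the bounded range of $\tilde f$ over the relevant compact region, or by a growth bound on $C$.
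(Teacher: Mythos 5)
Your proposal is correct, and the ``one-stroke'' alternative you sketch at the end is in fact the paper's entire proof: the paper simply combines Lemma \ref{lmm::comprbias} (property (i) of Definition \ref{def::comprprop} plus the moment hypothesis force $\bar{T}(X_M;\alpha,x_M^*)\to x_M^*$ in $L^1(\PP)$, so the predictor bias of $(\bar{T}(X_M;\alpha,x_M^*),X_{-M})$ vanishes as $\alpha\to\infty$, the $-M$ block being unbiased by hypothesis) with Lemma \ref{lmm::vanishbias} applied to the function $C\circ f$, which delivers both the limit of the model bias and the vanishing of the bias of the limiting model in one application. Your primary route is genuinely different in emphasis: rather than working at the level of predictors through Lemma \ref{lmm::vanishbias}, you prove convergence of the post-processed scores $C\circ\tilde{f}(X;\alpha,x_M^*)$ themselves (a.s.\ convergence plus domination) and transfer it to the bias via the triangle-inequality estimate $|\Bias_{W_1,\A}^{(w)}(U|G)-\Bias_{W_1,\A}^{(w)}(V|G)|\le c\,\E|U-V|$ with $c$ controlled by $\min_m \PP(A_m\cap\{G=k\})$; this is more self-contained and quantitative, but it requires a growth or domination bound on $C\circ f$ that the hypotheses do not literally supply, whereas the paper's route hides the same requirement inside the boundedness assumption of Lemma \ref{lmm::vanishbias} (note the mild mismatch you correctly identify: Lemma \ref{lmm::partialneutrmod} assumes only continuity of $f$, yet the cited lemma needs $C\circ f$ continuous and bounded on the common support, so the tension you flag is present in the paper's own argument, not a defect specific to yours). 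Two small corrections to your domination step: the claim that the transformed coordinate is sandwiched between $X_{i_j}$ and $x_{i_j}^*$ is not guaranteed by Definition \ref{def::comprprop} (nothing is assumed about monotonicity in $a$), but it is also unnecessary, since estimate (i) already gives $|T(X_{i_j};a^j,x_{i_j}^*)-x_{i_j}^*|\le C|X_{i_j}-x_{i_j}^*|^{q}(a^j)^{-\delta}$, which together with $\E[|X_M|^{\max(1,q)}]<\infty$ yields exactly the $L^1$ control of the transformed predictors used in Lemma \ref{lmm::comprbias}; and your treatment of the second equality is fine, since $C\circ f(x_M^*,X_{-M})$ is a function of $X_{-M}$ alone, whose conditional independence of $G$ follows from unbiasedness via Lemma \ref{lmm::indepwasserstconn}.
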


\begin{proof}
The result follows directly from Lemmas \ref{lmm::vanishbias} and Lemma \ref{lmm::comprbias}; the latter in the appendix.
\end{proof}

In Lemma \ref{lmm::partialneutrmod} the assumption that $f$ is continuous doesn't affect the second equality in \eqref{eq::limbias}. In principle, the continuity of $f$ can be dropped and replaced with the assumption that the true regressor is continuous. In this case, the second equality in \eqref{eq::limbias} will be modified to contain the error between the true regressor and $f$. The map $C$ in the above lemma is a calibrating map that we obtain by implementing monotonic regression discussed in Section \ref{sec::calib}.

\begin{remark}\rm  
The reason for the type of transformation in \eqref{comptransf} is the desire to conserve the rank-ordering of data samples in each individual direction.
\end{remark}

\subsection{Minimization problem for bias mitigation}\label{sec::bayesminim}

Equation \eqref{partialneutrmod} illustrates that compressing a collection of predictors $X_M$ to a fixed point $x_M^*$ neutralizes the action of those predictors on the model bias, leaving the explicit dependence only on the unbiased ones $X_{-M}$. This produces an unbiased post-processed model. However, compressing most bias-impactful predictors to a fixed point may significantly impact the model performance. For this reason, considering the relationship in \eqref{modsuperposition}, we design an approach that takes advantage of bias offsetting by reshaping predictors in $X_M$ according to their bias explanations, which allow for decrease in the model bias.

\begin{figure}
\centering
  \begin{subfigure}[t]{0.3\textwidth}
    \centering
    \includegraphics[width=\textwidth]{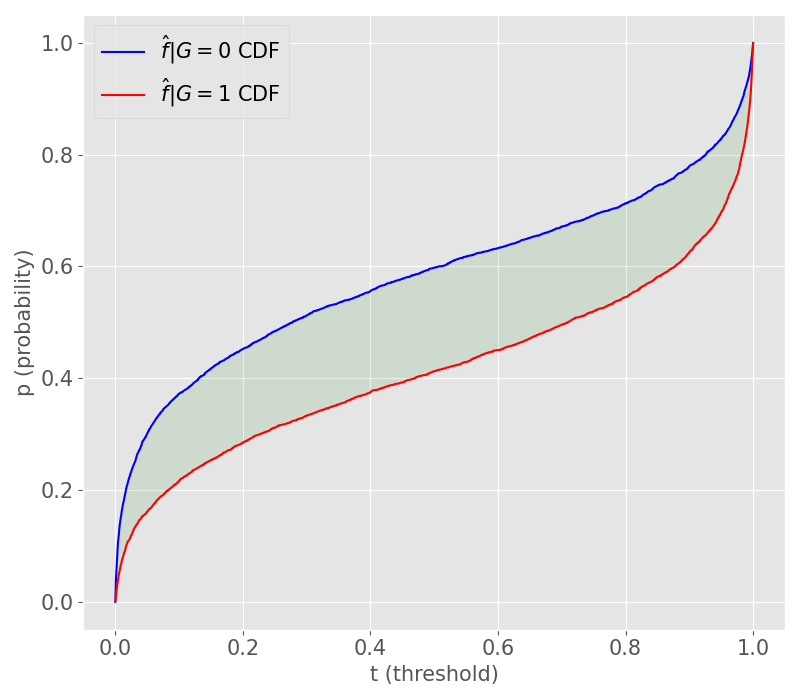}
    \caption{\footnotesize Subpopulation distributions}\label{fig::subpop_distr_m1}
  \end{subfigure}
~~
  \begin{subfigure}[t]{0.3\textwidth}
    \centering
    \includegraphics[width=\textwidth]{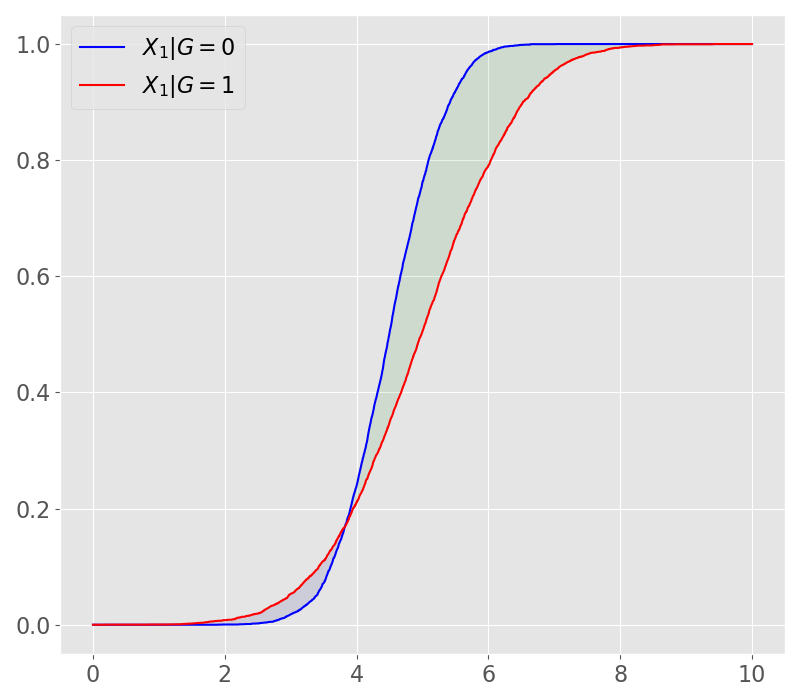}
    \caption{\footnotesize $X_1$ CDFs}\label{fig::subpop_x1_m1}
  \end{subfigure}
~~
  \begin{subfigure}[t]{0.3\textwidth}
    \centering
    \includegraphics[width=\textwidth]{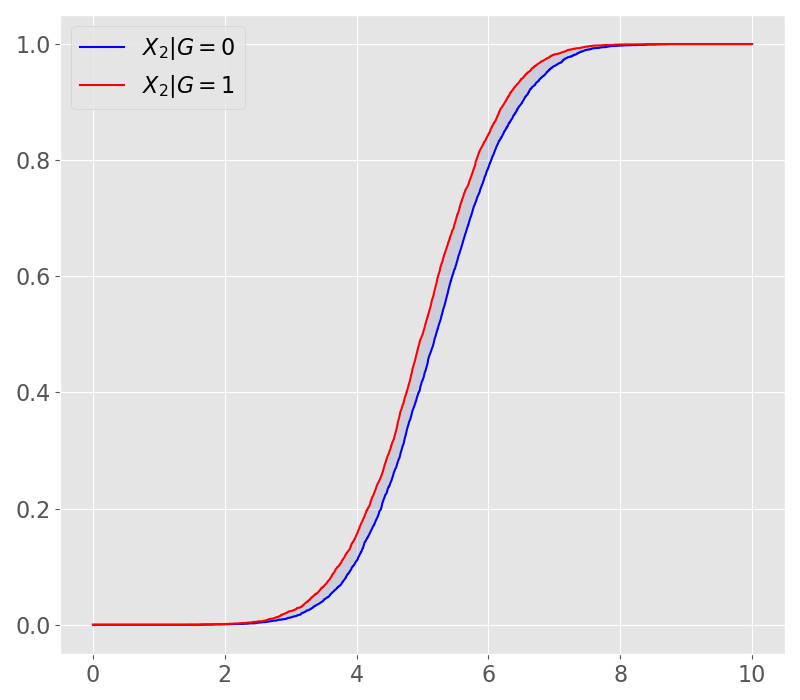}
    \caption{\footnotesize $X_2$ CDFs}\label{fig::subpop_x2_m1}
  \end{subfigure}
 ~~
  \begin{subfigure}[t]{0.3\textwidth}
    \centering
    \includegraphics[width=\textwidth]{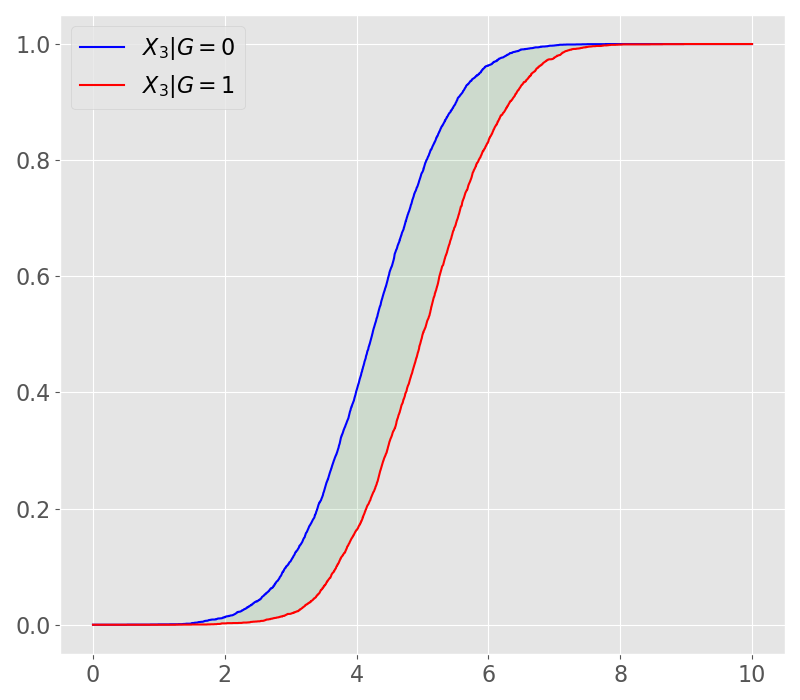}
    \caption{\footnotesize $X_3$ CDFs}\label{fig::subpop_x3_m1}
  \end{subfigure}
 ~~
  \begin{subfigure}[t]{0.3\textwidth}
    \centering
    \includegraphics[width=\textwidth]{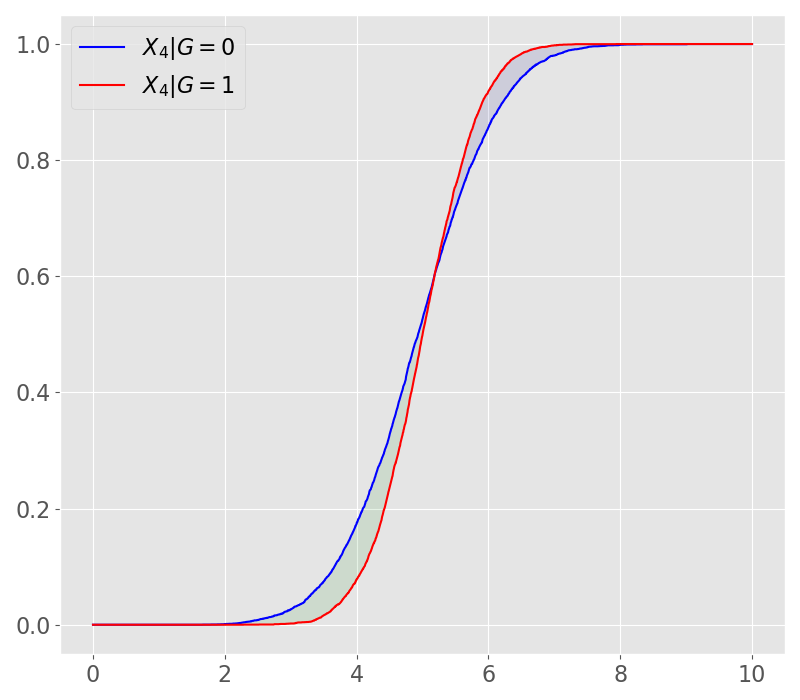}
    \caption{\footnotesize $X_4$ CDFs}\label{fig::subpop_x4_m1}
  \end{subfigure}
~~
  \begin{subfigure}[t]{0.3\textwidth}
    \centering
    \includegraphics[width=\textwidth]{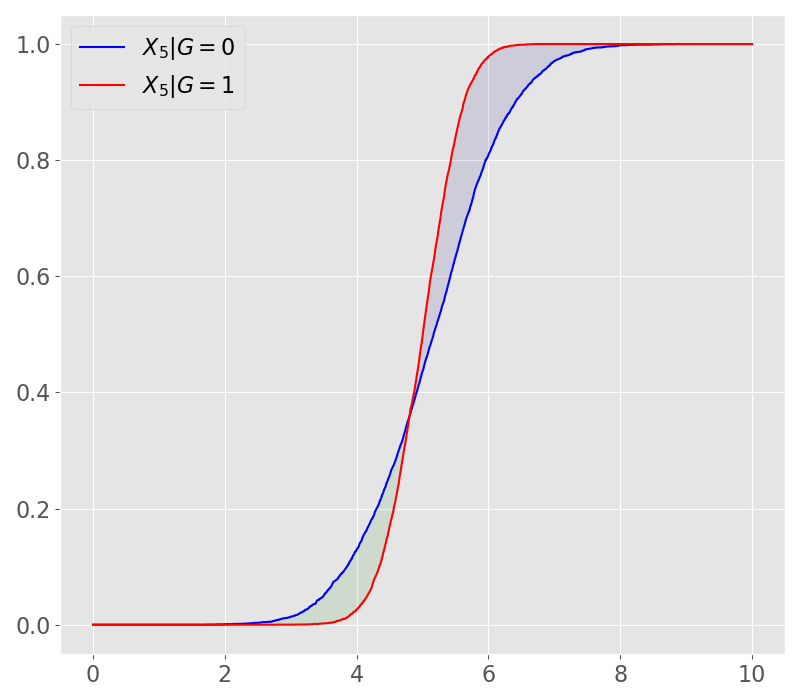}
    \caption{\footnotesize $X_5$ CDFs}\label{fig::subpop_x5_m1}
  \end{subfigure}
~~
  \begin{subfigure}[t]{0.3\textwidth}
    \centering
    \includegraphics[width=\textwidth]{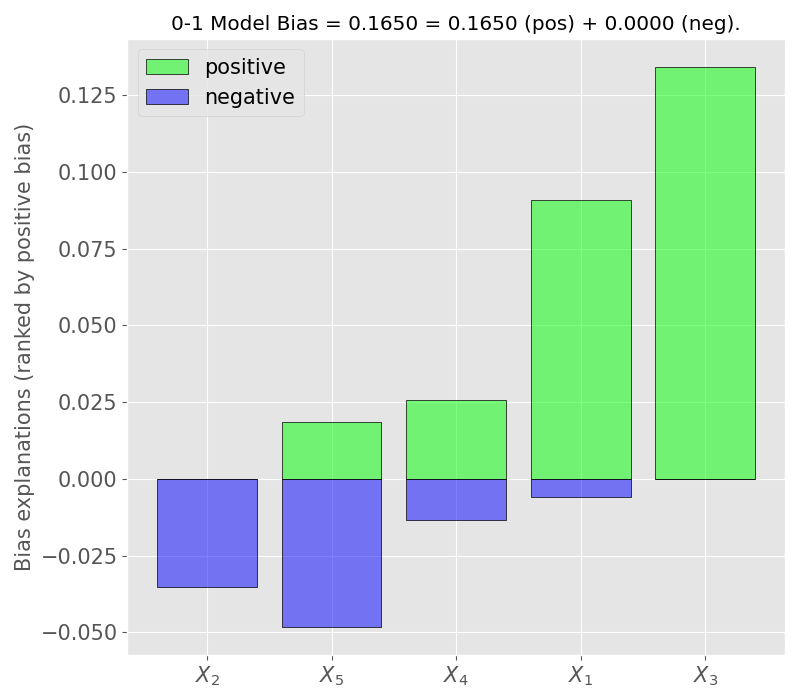}
    \caption{\footnotesize Bias explanations}\label{fig::bias_expl}
  \end{subfigure}
~~
  \begin{subfigure}[t]{0.3\textwidth}
    \centering
    \includegraphics[width=\textwidth]{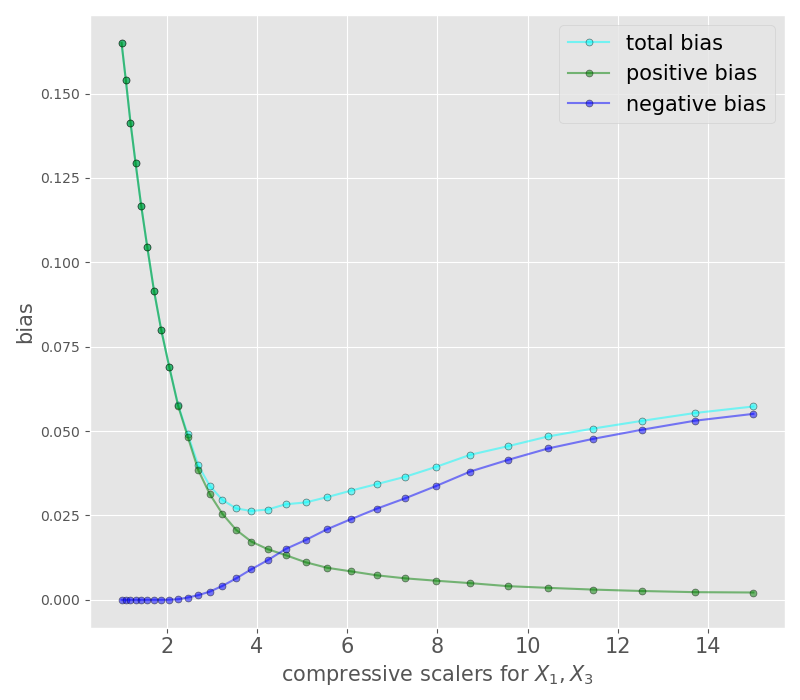}
    \caption{\footnotesize Change in bias}\label{fig::bias_change_compr}
  \end{subfigure}
 \caption{ Bias information for model \eqref{realisticmod}.} 
\end{figure}

To understand how bias offsetting helps in reducing the bias in the model output, consider the following data generating model:
\begin{equation}\label{realisticmod}\tag{M1}
\begin{aligned}
&\mu=5, \quad a=\tfrac{1}{20}(10,-4,16,1,-3)\\
&X_1 \sim N(\mu-a_1 (1-G), 0.5+G ), \quad X_2 \sim N(\mu-a_2 (1-G), 1 ) \\
&X_3 \sim N(\mu-a_3 (1-G), 1 ),     \quad X_4 \sim N(\mu-a_4 (1-G), 1-0.5 G ) \\
&X_5 \sim N(\mu-a_5 (1-G),1-0.75 G ) \\
&Y \sim Bernoulli(f(X)), \quad f(X)=\P(Y=1|X)={logistic(2 \big(\textstyle{\sum_{i}} X_i-24.5)\big)}.
\end{aligned}
\end{equation}

\begin{algorithm}
\SetAlgoLined 
\KwData{Model $\fhat$, training or holdout set $(X,Y)$, test set $(\bar{X},\bar{Y})$, the set $M$ of bias-impactful predictors.
}
 \KwResult{Models $\{\bar{f}(\cdot; \gamma) \in \family(\fhat), \gamma=(\alpha,x^*)\}$ constituting the efficient frontier of $\mathcal{F}$ in \eqref{postprocfam}. } 

{\bf Initialization parameters:} the number $n_{prior}$ of random points  $\gamma = (\alpha, x_{M}^*)$, the prior $P_{prior}(d\gamma)$, fairness penalization parameters $\omega = \{ \omega_1, \dots, \omega_J \}$, the number $n_{bo}$ of Bayesian steps  for each $\omega_j$.\\
Sample $\{\gamma_i\}_{i=1}^{n_{prior}}$ from $P_{prior}(d\gamma)$\\
\For{$i$ in $\{1,\dots,n_{prior}\}$}
{
$loss(\gamma_i;X,Y):=\E[\mathcal{L}(X,\bar{f}(X;\gamma_i))]$, $\bar{f} \in \family(\fhat)$.\\
$bias(\gamma_i;X):=\Bias_{W_1,\A}(\bar{f}(X;\gamma_i)|G)$.
}

\For{$j$ in $\{1,\dots,J\}$}
{ 
\For{$i$ in $\{1,\dots,n_{prior}\}$}
{
 $L(\gamma_i,\omega_j) := loss(\gamma_i; X,Y) + \omega_j \cdot bias(\gamma_i;X)$
}
Pass $\{ \gamma_i, L(\gamma_i,\omega_j)\}_{i=1}^{n_{prior}}$ to the Bayesian optimizer that seeks to minimize $L(\cdot,\omega_j)$.

Perform $n_{bo}$ iterations of Bayesian optimization which produces $\{\gamma_{t,j} \}_{t=1}^{n_{bo}}$.
}
Compute $(\gamma, bias(\gamma;\bar{X}), loss(\gamma;\bar{X},\bar{Y}))$ for $\gamma \in \{\gamma_i\} \cup \{\gamma_{t,j}\}$, giving a collection $\mathcal{V}$.\\
 Compute the convex envelope of $\mathcal{V}$ and exclude the points that are not on the efficient frontier.
 \caption{Efficient frontier reconstruction}\label{BHPSalgo}
\end{algorithm}

In \eqref{realisticmod}, the bias explanations for predictors contain both positive and negative components; see Figure \ref{fig::bias_expl}. The predictors with positive bias explanations dominate those with negative bias explanations, which results in negative model bias being equal to zero as in \eqref{superpospos}, illustrating the bias offsetting taking place; see Figure \ref{fig::subpop_distr_m1}-\ref{fig::subpop_x5_m1}. For instructive purposes, let us illustrate the effect of rescaling predictors $X_1$, $X_3$ on the model bias by compressing the two predictors towards their means. For this, consider the trained model perturbation
\[
  a \to f_{a}(X) = f(T(X_1;a,x_1^*), X_2, T(X_3;a,x_3^*), X_4, X_5)
\]
with $a\in[1,15]$ and $T$ as in Definition \ref{def::comprprop}. In view of offsetting, for low level of compression the positive model bias decreases while the negative stays constant. As we compress further,  the negatively biased predictors become prevalent because they were not adjusted, leading to an increase in the negative model bias while the positive model bias tends to zero. This produces a U-shaped curve of the model bias as a function of compression.

The pedagogical example above motivates us to setup the following minimization problem. Given the Wasserstein-based $(W_1,\A,w)$-metric, the list M of most impactful predictors in that metric, and a compressive family as in Definition \ref{def::comprprop}, define a family of post-processed models
\begin{equation}\label{postprocfam}
\begin{aligned}
\mathcal{F}(f)= \Big\{\bar{f}(X; \alpha, x_M^*) = \mathcal{C}[f(\bar{T}(X_M;\alpha,x^*_M),X_{-M});\,X,f], \, (\alpha,x_M^*) \in (A,\Chi_M)  \Big\},
\end{aligned}
\end{equation}
where $A\subset \RR^{km}$, $\Chi_M \subset {\rm supp}(P_{X_M})$ and $\mathcal{C}$ is a calibrating operator. 


Constructing the efficient frontier of the family $\family$ amounts to solving a constrained minimization problem.  This problem can be reformulated in terms of generalized Lagrange multipliers using the Karush-Kuhn-Tucker approach
\citep{Karush,KuhnTucker} as follows 
\begin{equation}\label{minfront}\tag{BM}
    \alpha_{*}(\omega) = \text{arg}\min_{\bar{f} \in \mathcal{F}(f)} \Big\{ \E[\mathcal{L}(Y,\bar{f}(X)] + \omega \cdot \Bias_{W_1,\A}^{(w)}(\bar{f}|G,X) \Big\}, \quad \omega \ge 0.
\end{equation}
To solve \eqref{minfront}, we use the lower dimensionality to our advantage and construct the efficient frontier via Bayesian optimization; see Algorithm \ref{BHPSalgo}. We avoid using gradient descent techniques to accommodate non-smooth machine learning models such as tree-based models.

\paragraph{On algorithm complexity.} In Algorithm \ref{BHPSalgo}, one could replace the training set $(X,Y)$ with a new holdout set $(X^*,Y^*)$, with size $K^*$, which can be chosen to be smaller than the training set due to the low problem dimensionality in \eqref{minfront}. At the same time, Bayesian optimization is a learning procedure, and thus the size of $(X^*,Y^*)$ should be large enough to avoid overfitting, especially if one forms a large list $M$; see Figure \ref{fig::overfit_m1}. Specifically, when evaluating the loss and bias functions on the set of $n_{prior}$ randomly drawn parameters $\{\gamma_i\}$ in lines $3$-$10$ of the algorithm, the error of estimation of the objective function at each random parameter $\gamma_i$ is guaranteed to be $O(1/\sqrt{K^*})$, because the parameters have been drawn independently (non-adaptively) for all values using a fresh dataset. However, in line $11$ the Bayesian optimization performs a sequential search of best parameters using a Markovian process utilizing the information from the $n_{prior}$  values $\{(\gamma_i,L(\gamma_i,\cdot)\}$. This amounts to ML learning of parameters, which may allow for overfitting to occur if $K^*$ is small. Thus, $K^*$ should be chosen according to the dimensionality of the parameters in (5.15); for instance, see \citet[Section 7.4]{Hastie et al}.


Alternatively, one can treat the parameters $\gamma=(\alpha,x_M^*)$ as hyperparameters and the Bayesian steps in line $11$ as an adaptive process, during which the holdout set is accessed $n_{prior}+n_{bo}$ times, for fixed  $\omega_j$. Adaptivity is known to increase the statistical error of expectation estimation; in general, the bound on the error of the objective function estimation may be as large as $O(\sqrt{(n_{prior}+n_{bo})/K^*})$. 

The work of \citet{Dwork2015} on adaptive data analysis introduces the concept of the reusable holdout (Thresholdout), which allows one to access the same holdout set in adaptive procedures many times with much lower deterioration of the bound on the expected value of a statistic. In particular, according to Theorem 25 of \citet{Dwork2015}, if access to $(X^*,Y^*)$ is performed via a differentially private Thresholdout algorithm with a budget $B=n_{prior}+n_{bo}$, the number of samples needed for given a tolerance $\tau$ (with probability $\beta$) is estimated to be
 \begin{equation}\label{algocomplex}
K^* \geq O\bigg(\frac{\ln(\tfrac{n_{prior}+n_{bo}}{\beta})}{\tau^2} \bigg) \bigg(  n_{bo} \cdot \ln\bigg( \frac{\ln(\tfrac{n_prior+n_{bo}}{\beta})}{\tau}  \bigg)  \bigg)^{1/2}.
 \end{equation}
Thus, using the reusable holdout in the bias mitigation procedure may allow for a lower complexity compared to using the entire training set, especially if that set is large. Specifically, the complexity of Algorithm \ref{BHPSalgo} with reusable holdout is $O(K^*\cdot (n_{prior}+n_{bo})\cdot {\rm dim}(X))$.



\begin{figure}[t]
\centering
  \begin{subfigure}[t]{0.32\textwidth}
    \centering
    \includegraphics[width=\textwidth]{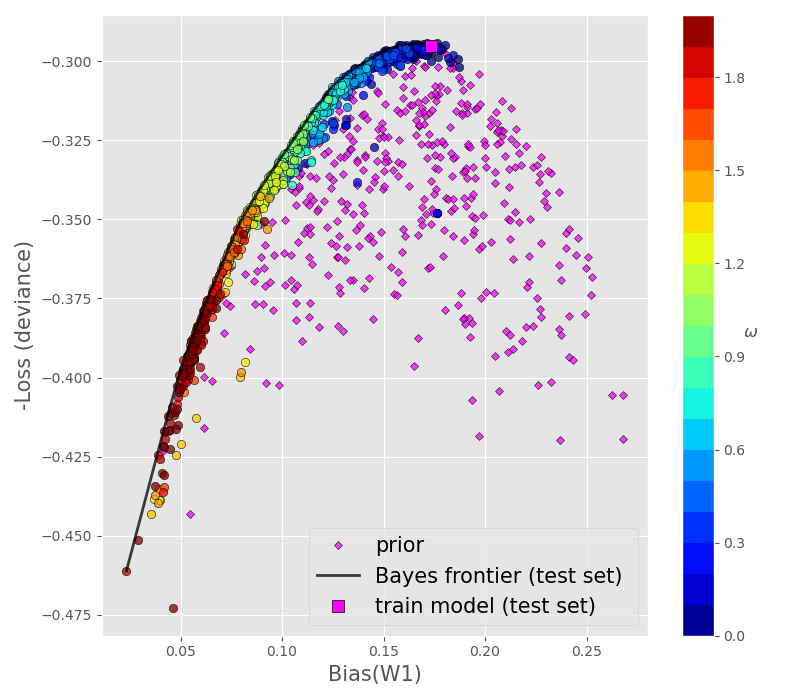}
    \caption{\footnotesize Bayesian search}\label{fig::bayes_m1}
  \end{subfigure}
\begin{subfigure}[t]{0.32\textwidth}
    \centering
    \includegraphics[width=\textwidth]{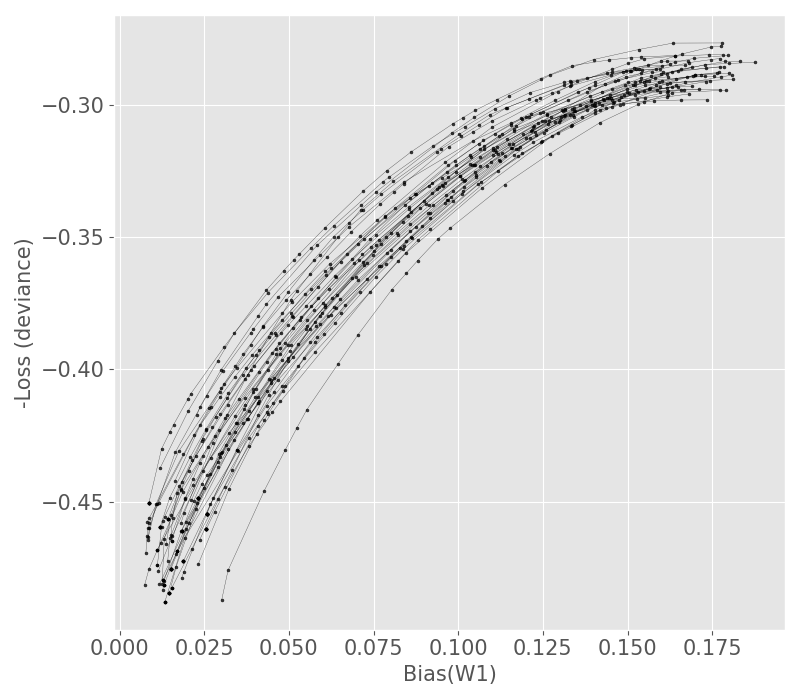}
      \caption{\footnotesize Frontiers with redrawn datasets}\label{fig::efvar_m1}
  \end{subfigure}    
  \begin{subfigure}[t]{0.32\textwidth}
    \centering
    \includegraphics[width=\textwidth]{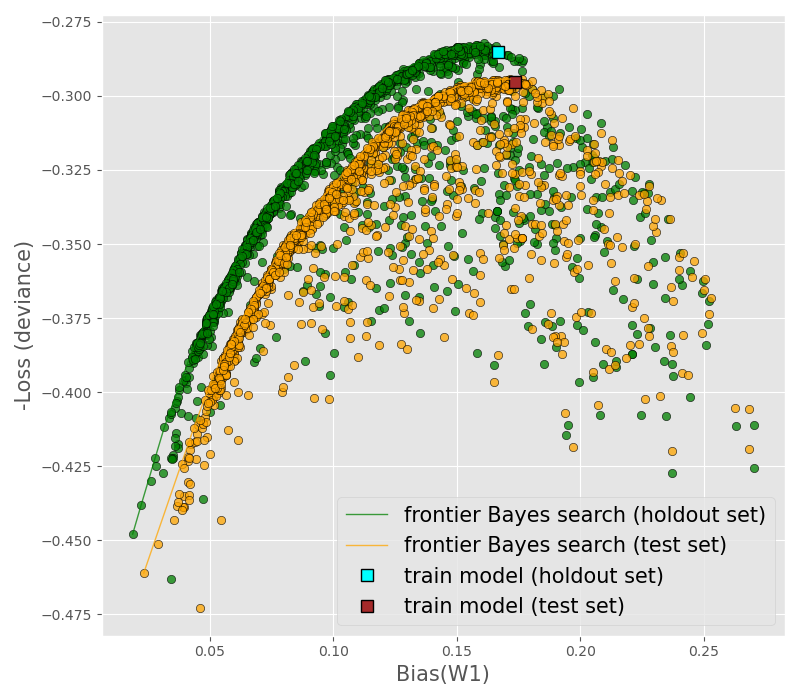}
    \caption{\footnotesize Overfitting}\label{fig::overfit_m1}
  \end{subfigure}
 \caption{ Bayesian search for Model \eqref{realisticmod}.} 
\end{figure}

\subsection{Numerical examples}

\subsubsection{Global compression}\label{sec::globcomp}

We use several transformations in this work, but the simplest one that satisfies the properties in Definition \ref{def::comprprop} is a linear map that pulls the predictor values towards a fixed focal point globally
\begin{equation}\label{GlobComp}
T(t;a,t^*) = \frac{1}{a}(t-t^*) + t^*, \quad a>0
\end{equation}
where $t^*$ is a fixed point in the support of the given predictor and $a\in \RR_+$ is a compressive scalar.

Consider the instructive model \eqref{realisticmod}. By design, all predictors but $X_4$ produce substantially large bias explanations; see Figure \ref{fig::bias_expl}.  For this reason, we set the list of predictors to be $M=\{1,2,3,5\}=\{i_1,i_2,i_3,i_4\}$ and choose to work with transformation $T$ given by \eqref{GlobComp}.   We then train a regularized GBM model with 10,000 samples and apply Algorithm~\ref{BHPSalgo}. We consider the parameters $n_{prior}=400$, $n_{bo}=50$, $\{\omega_j = 2\cdot\frac{j}{20}\}_{j=0}^{20}$, $\gamma = (\alpha, x_M^*)$ where we set $x_M^* = \E[X_M]$ to be fixed and let the parameter $\alpha=\{a_{i_k}\}_{k=1}^4$ vary with the bounds $a_{i_k} \in [0.5, 2]$. Figure \ref{fig::bayes_m1} depicts the prior as the collection of points in magenta color,  and the Bayesian iterations are depicted by the colored points where the color corresponds to different levels of $\omega$. Figure \ref{fig::efvar_m1} shows the variance of the BP efficient frontier by repeating the experiment 40 times with newly drawn dataset and retrained model. 


\begin{figure}[t]
\centering
  \begin{subfigure}[t]{0.32\textwidth}
    \centering
    \includegraphics[width=\textwidth]{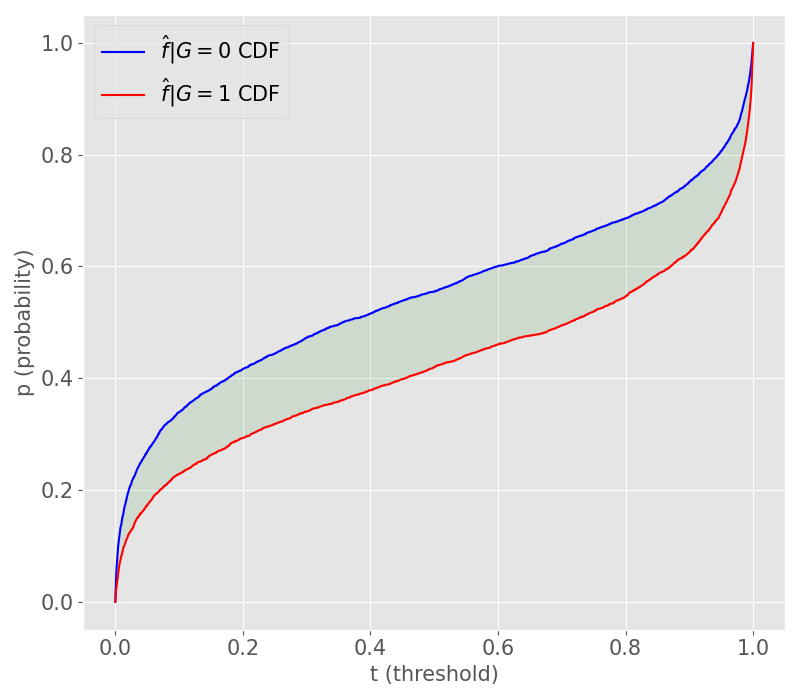}
    \caption{\footnotesize CDFs }\label{fig::subpop_distr_m2}
  \end{subfigure}
  \begin{subfigure}[t]{0.32\textwidth}
    \centering
    \includegraphics[width=\textwidth]{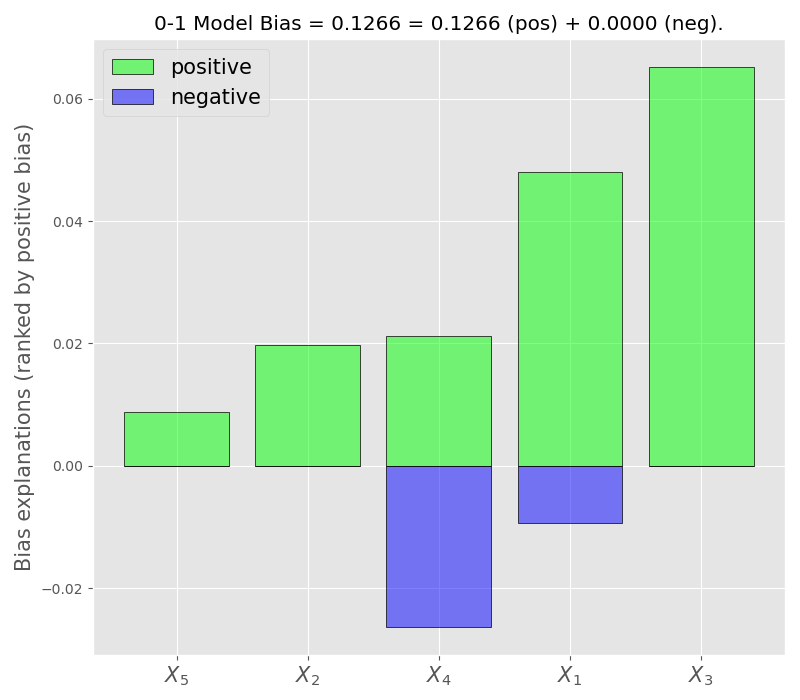}
    \caption{\footnotesize Bias explanations (train)}\label{fig::bias_expl_m2}
  \end{subfigure}
  \begin{subfigure}[t]{0.32\textwidth}
    \centering
    \includegraphics[width=\textwidth]{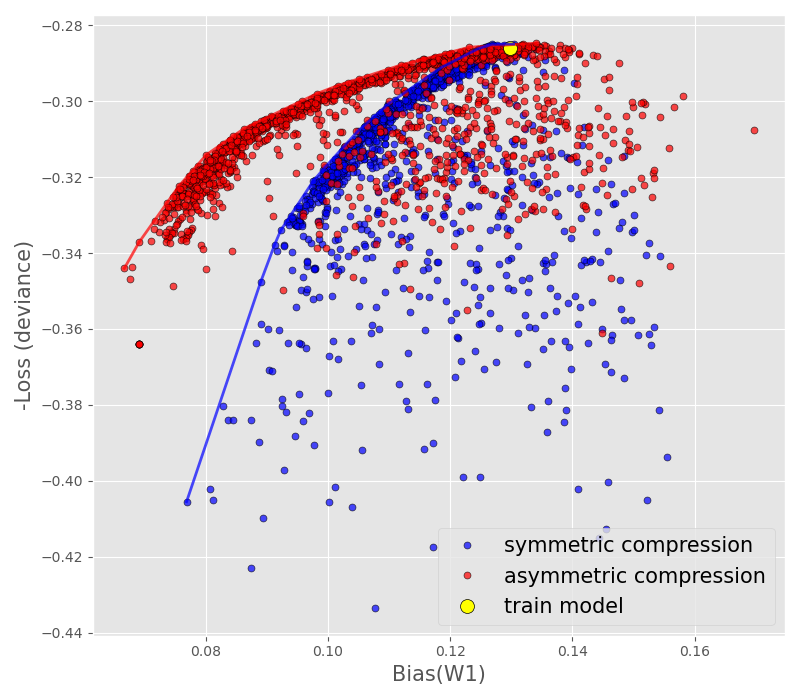}
    \caption{\footnotesize Bayesian search frontiers (test)}\label{fig::bayes_sym_vs_asym_m2}
  \end{subfigure}
 \caption{ Symmetric vs asymmetric for Model \eqref{realisticmodasym}.} \label{fig::bayes_info_m2}
\end{figure}

\subsubsection{Asymmetric compression}\label{sec::asymcomp}

\paragraph{\bf} Predictors whose subpopulation distributions have different variances will, in general, exhibit mixed bias explanations, where the positive and negative components are substantial. Rescaling such predictors will cause both of the components to either be compressed or expanded simultaneously, which is not always desirable. This motivates us to introduce asymmetric transformations that allow for rebalancing each of the positive and negative components independent of one another. In particular, consider the following transformation:
\begin{equation}\label{AsymComp}
T_{asym}(t; a_1,a_2, \sigma, t^*) =  \frac{(t-t^*)_{-}}{a_-} + \frac{(t-t^*)_{+}}{a_+} + t^*, \quad a_-,a_+>0
\end{equation}
which compresses differently around the focal point.

Consider the following data generating model:
\begin{equation}\label{realisticmodasym}\tag{M2}
\begin{aligned}
&\mu=5, \quad a=\tfrac{1}{10}(2.5,1.0,4.0,-0.25,0.75)\\
&X_1 \sim N(\mu-a_1 (1-G), 0.5+G \cdot 0.75), \quad X_2 \sim N(\mu-a_2 (1-G), 1 ) \\
&X_3 \sim N(\mu-a_3 (1-G), 1 ),     \quad X_4 \sim N(\mu-a_4 (1-G), 1-0.75 G ) \\
&X_5 \sim N(\mu-a_5 (1-G), 1) \\
&Y \sim Bernoulli(f(X)), \quad f(X)=\P(Y=1|X)={logistic(2 \big(\textstyle{\sum_{i}} X_i-24.5)\big)}.
\end{aligned}
\end{equation}
Note that in the model \eqref{realisticmodasym} there are two predictors $X_1, X_4$ with mixed bias explanations, while the rest have negative bias explanations equal to zero; Figure \ref{fig::bias_expl_m2}. Therefore, using symmetric transformations might not allow to fully take advantage of bias offsetting, since applying the transformation to $X_1, X_4$ would either simultaneously increase their positive and negative bias explanations, or decrease them. 

To this end, we compare our mitigation approach using symmetric and asymmetric transformations. First, we select the top two bias-impactful predictors in the list $N_+$ and the top two in the list $N_-$, which yields the list $M=\{1,3,4\} = \{ i_1,i_2,i_3 \}$. We then train a regularized GBM model with 10,000 samples and apply Algorithm \ref{BHPSalgo} using the transformations in \eqref{GlobComp} and in \eqref{AsymComp}. We consider the parameters $n_{prior}=400$, $n_{bo}=50$, $\{\omega_j = 2\cdot\frac{j}{20}\}_{j=0}^{20}$, $\gamma = (\alpha, x_M^*)$ where we set $x_M^* = \E[X_M]$ to be fixed and let the parameter $\alpha=\{ a_{i_k, \pm} \}_{k=1}^3$ vary with the bounds $a_{i_k,\pm} \in [0.5, 2]$. Figure \ref{fig::bayes_info_m2} illustrates that using asymmetric transformations allows to take full advantage of bias offsetting compared to symmetric ones because it can separately affect the positive and negative bias explanations of each predictor.

There are numerous transformations one can define that are useful in various scenarios. For instance, it may happen that subpopulation distributions of predictors differ only in a local region. In this case, transformations that reshape predictor distributions locally would be appropriate. Such an example is introduced in Appendix \ref{app::loccomp} and tested on the data generating model \eqref{realisticmodloc}.

\begin{figure}[t]
\centering
  \begin{subfigure}[t]{0.4\textwidth}
    \centering
    \includegraphics[width=\textwidth]{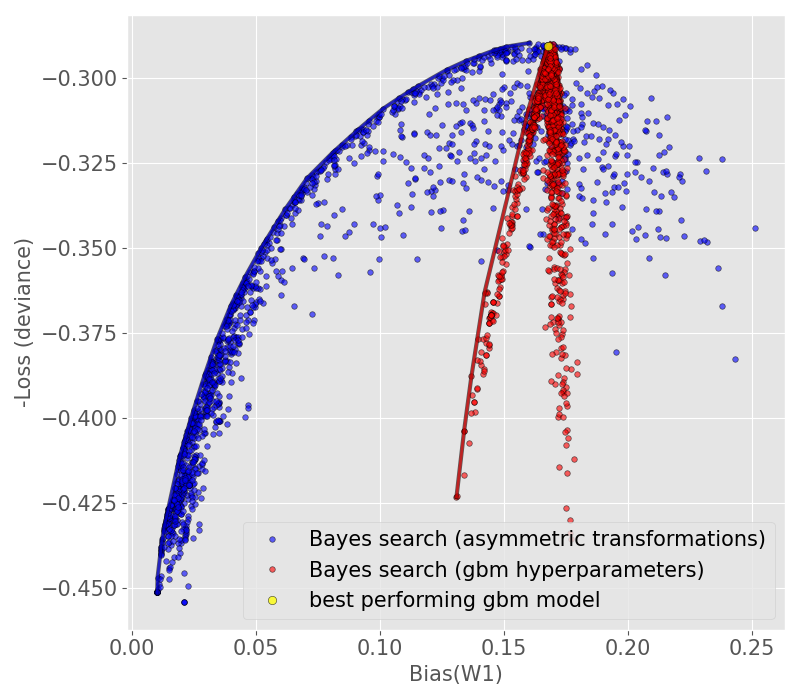}
    \caption{\footnotesize Model \eqref{realisticmod} }\label{fig::bayes_gbm_vs_asym_m1}
  \end{subfigure}
  ~~
  \begin{subfigure}[t]{0.4\textwidth}
    \centering
    \includegraphics[width=\textwidth]{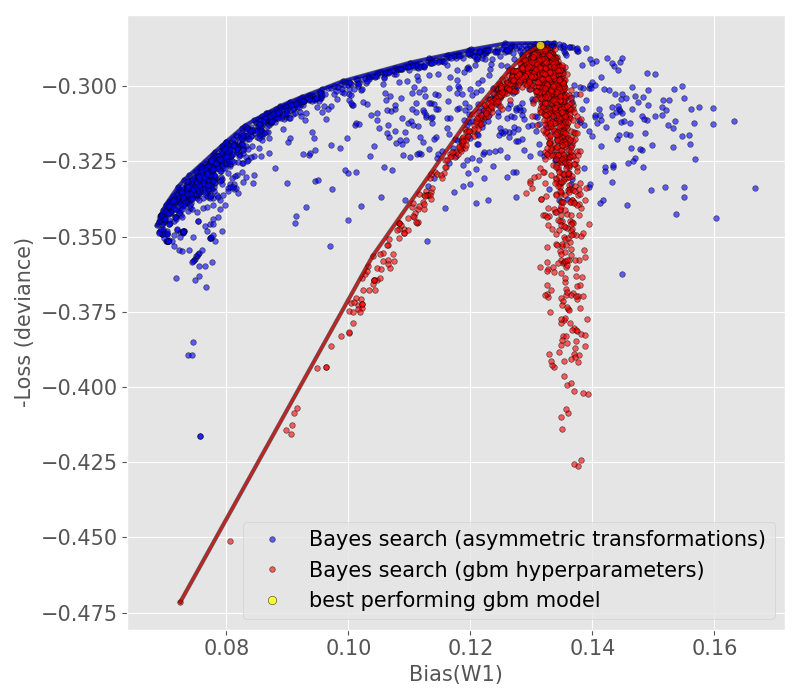}
    \caption{\footnotesize Model \eqref{realisticmodasym} }\label{fig::bayes_gbm_vs_asym_m2}
  \end{subfigure}
 \caption{ Comparison with Bayesian search over GBM-hyperparameter space.} \label{fig::gbm_asym_comp}
\end{figure}

\subsubsection{Comparison with Bayesian search over ML hyperparameter space}

One interesting comparison to carry out is between our mitigation procedure and that of \citet{Perrone2020} and \citet{Schmidt2021}. In \citet{Perrone2020} the bias methodology does not require  knowledge of the protected attribute $G$ either in training or prediction, and utilizes Bayesian optimization with fairness constraints on a wide range of models to learn ML hyperparameters that lead to fairer models. In \citet{Schmidt2021} the methodology randomly searches for ML hyperparameter configurations and builds the efficient frontier. Our mitigation approach shares some similarities with these methods. In particular, the methodologies in \citet{Perrone2020} are model-agnostic and do not require incorporating the fairness metric into the training process. The Bayesian search attempts to locate the efficient frontier by exploring the hyperparameter space and training a corresponding model. This leads to a two-step process, where one first optimizes with respect to performance and then makes a decision about a new hyperparameter configuration by taking into account a fairness constraint. In our methodology, there is only one step, which optimizes performance under a bias penalization, and the space of parameters is a low dimensional space of continuous parameters.

Another important difference is the fact that in our approach, post-processing the trained model completely avoids re-training. Furthermore, our approach may utilize a smaller dataset compared to the training set, which leads to a lower complexity for the same tolerance, as specified in \eqref{algocomplex}. Finally, knowing where the bias comes from and its magnitude, makes our method more precise in countering model bias, avoiding minimization procedures in high dimensional spaces.


To carry out the comparison, we use data generating models \eqref{realisticmod}-\eqref{realisticmodasym}. In step 1, using 10,000 samples of generated data for each aforementioned model, we apply Algorithm \ref{BHPSalgo}, with a list of penalization coefficients $\omega \in [0,4]$, by varying GBM hyperparameters with the following bounds: number of estimators in $[40,250]$, maximum number of leaves in $[4,20]$, maximum depth of the tree in $[2,20]$, and learning rate in $[0.05, 0.5]$. We next pick a trained model that corresponds to the best performing model in step 1 for $\omega = 0$ as the model that is subject for post-processing. This is done for a fair comparison. 
For each model \eqref{realisticmod}-\eqref{realisticmodasym} we pick the list $M$ of most bias-impactful predictors as in the examples of Section \eqref{sec::globcomp}-\eqref{sec::asymcomp}.
 We then apply Algorithm \ref{BHPSalgo} using the asymmetric transformation \eqref{AsymComp} with parameters as follows: $\gamma = (\alpha, x_M^*)$ where we set $x_M^*$ to be a component-wise median of $X_M$ and let the parameter $\alpha=\{a_{i_k,\pm}\}_{k \in M}$ vary with the bounds $a_{i_k,\pm} \in [0.5, 2]$. The generated efficient frontiers can be seen in Figure \ref{fig::gbm_asym_comp}.

\begin{figure}[t]
\centering
  \begin{subfigure}[t]{0.31\textwidth}
    \centering
    \includegraphics[width=\textwidth]{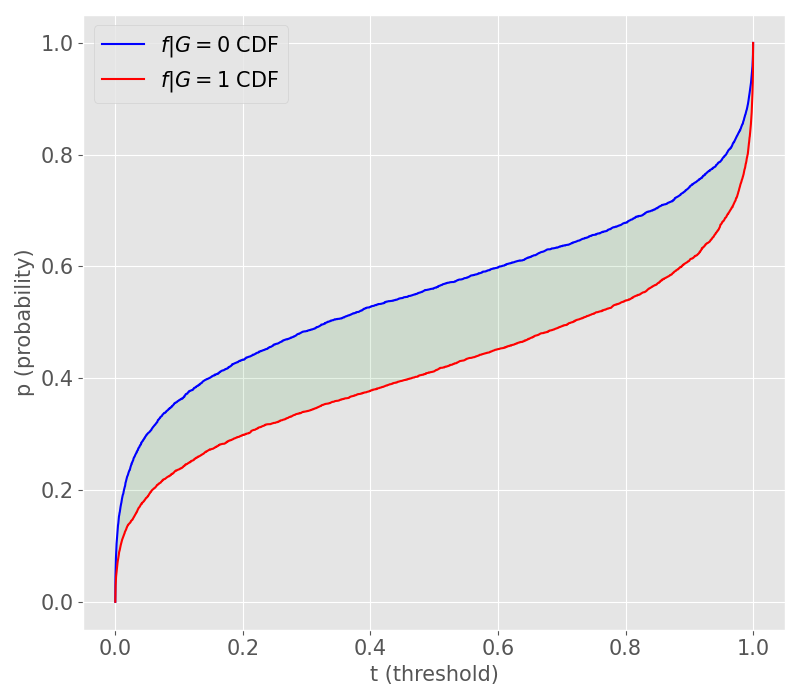}
    \caption{\footnotesize CDFs }\label{fig::subpop_distr_m4}
  \end{subfigure}
  \begin{subfigure}[t]{0.31\textwidth}
    \centering
    \includegraphics[width=\textwidth]{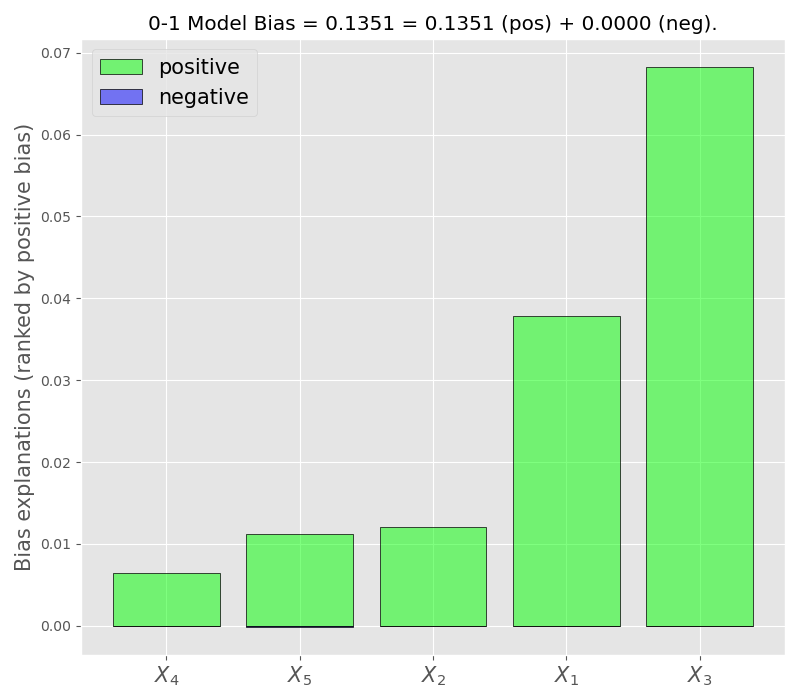}
    \caption{\footnotesize Bias explanations (train)}\label{fig::bias_expl_m4}
  \end{subfigure}
  \begin{subfigure}[t]{0.31\textwidth}
    \centering
    \includegraphics[width=\textwidth]{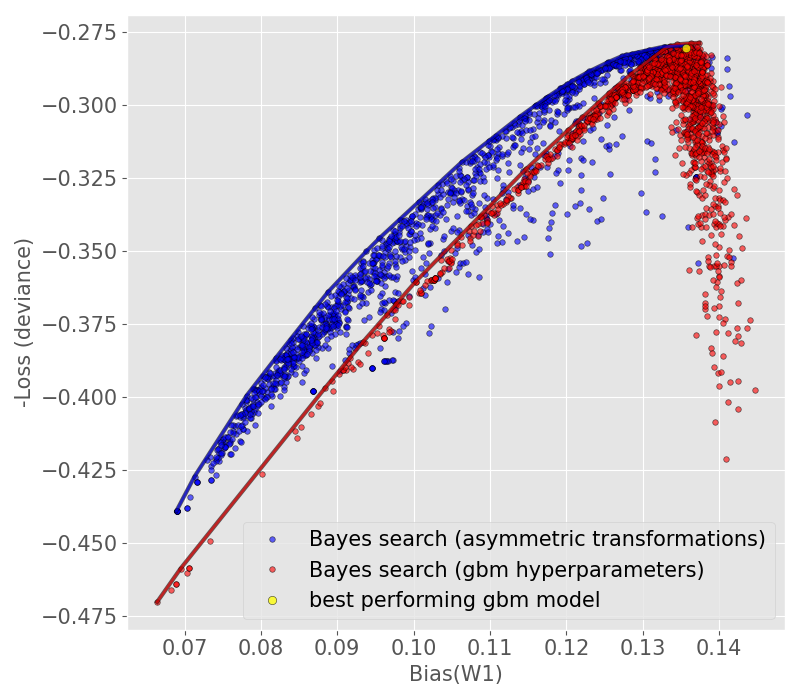}
    \caption{\footnotesize Bayesian frontier (test)}\label{fig::bayes_gbm_asym_m4}
  \end{subfigure}
 \caption{ Bayesian frontiers for Model \eqref{realisticmodpos}.} \label{fig::model_m4}
\end{figure}

A similar comparison is carried out for the data generating model
\begin{equation}\label{realisticmodpos}\tag{M3}
\begin{aligned}
&\mu=5, \quad a=\tfrac{1}{10}(2.5, 1.0,4,0.25,0.75)\\
&X_1 \sim N(\mu-a_1 (1-G), 1 ), \quad X_2 \sim N(\mu-a_2 (1-G), 1 ) \\
&X_3 \sim N(\mu-a_3 (1-G), 1 ),     \quad X_4 \sim N(\mu-a_4 (1-G), 1 ) \\
&X_5 \sim N(\mu-a_5 (1-G),1) \\
&Y \sim Bernoulli(f(X)), \quad f(X)=\P(Y=1|X)={logistic(2 \big(\textstyle{\sum_{i}} X_i-24.5)\big)}.
\end{aligned}
\end{equation}
where all predictors have positive bias explanations. This is a particular case which does not allow for offsetting; see Figure \ref{fig::model_m4}. From our own experience, we consider this an unrealistic case due to the fact that predictor subpopulations for many protected attributes are often imbalanced, which leads to mixed bias explanations.

It can be seen from Figure \ref{fig::bayes_gbm_asym_m4} that the frontier produced by our methodology is wider. It is our understanding that the difference in frontiers is caused by the fact that, in comparison, our methodology targets the most bias-impactful predictors and considers both performance and fairness in the minimization. The other method significantly impacts the performance while having low impact on bias, since fairness was not considered in the first step and the second step changes the ML parameters responsible for fairness, leading to a lower model ``resolution''.

\subsection{Model calibration}\label{sec::calib}

Unlike the trained model regressor, the post-processed model is no longer tied to data $(X,Y)$ because the construction involves algebraic transformations. Thus, while we can expect that $f\approx \E[Y|X]$, the post-processed model $\tilde{f}$ may no longer approximate the regressor; in the case of classification, the probabilities will be affected. Our goal is to remedy this issue by performing a calibration step after post-processing the original model $f$. To this end, we calibrate the model $\tilde{f}$ by constructing the final model in the form
\begin{equation}\label{eq::calibmap}
  \bar{f}(X) := C(\tilde{f}(X)\,;X, \tilde{f}, f)
\end{equation}
where $C$ is a calibrating map that arises in the process of  isotonic regression of either the trained model $f$ or the response variable $Y$ onto the post-processed model $\tilde{f}$. Calibration attempts to make $\bar{f}$ approximate $\E[Y|X]$.

\begin{definition}\label{def::calibration}
  Let $f_1, f_2$ be two models, $X$ a set of predictors, and $\mathcal{T}$ denote an admissible family of non-decreasing maps from $\RR$ to $\RR$. Given a loss function $L(z_1,z_2)$, isotonic regression of $f_2$ onto $f_1$ over $\mathcal{T}$ is a map $T^*:\RR\to\RR$  such that
  \[
    T^*(\cdot\,;X, f_1, f_2) = {\rm argmin} \Big\{\E[ L(f_2(X), T(f_1(X))];\, T \in \mathcal{T}\Big\}.
  \]
\end{definition}

\begin{figure}[t]
\centering
  \begin{subfigure}[t]{0.4\textwidth}
    \centering
    \includegraphics[width=\textwidth]{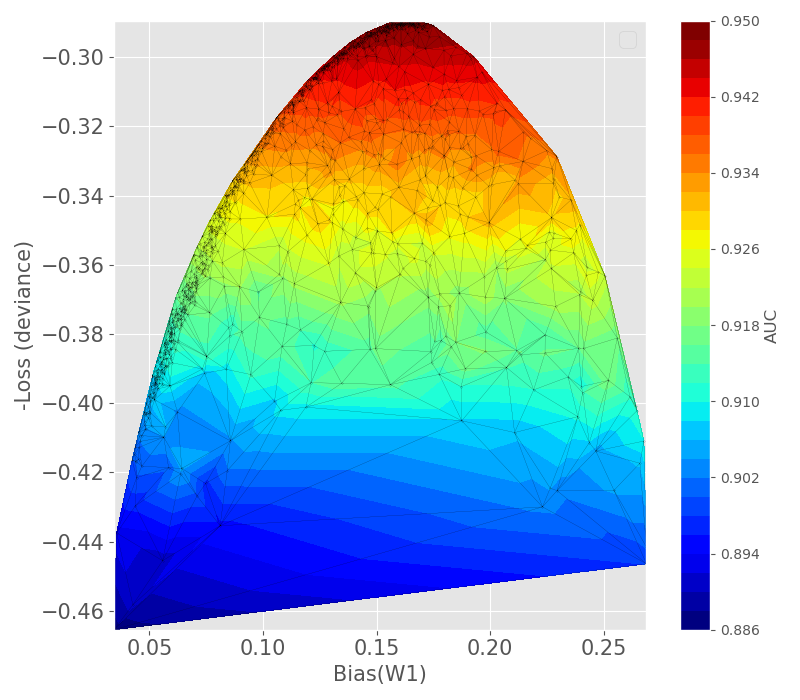}
    \caption{\footnotesize non-calibrated models (symmetric)  }\label{fig::noncalib_bhps_sym_m1}
  \end{subfigure}
  \begin{subfigure}[t]{0.4\textwidth}
    \centering
    \includegraphics[width=\textwidth]{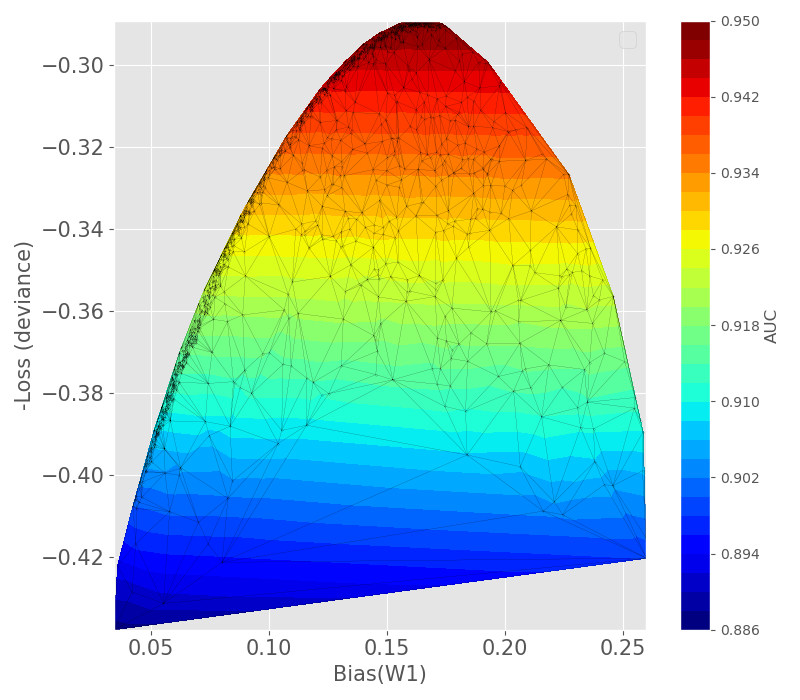}
    \caption{\footnotesize Calibrated models (symmetric) }\label{fig::calib_bhps_sym_m1}
  \end{subfigure}
  \begin{subfigure}[t]{0.4\textwidth}
    \centering
    \includegraphics[width=\textwidth]{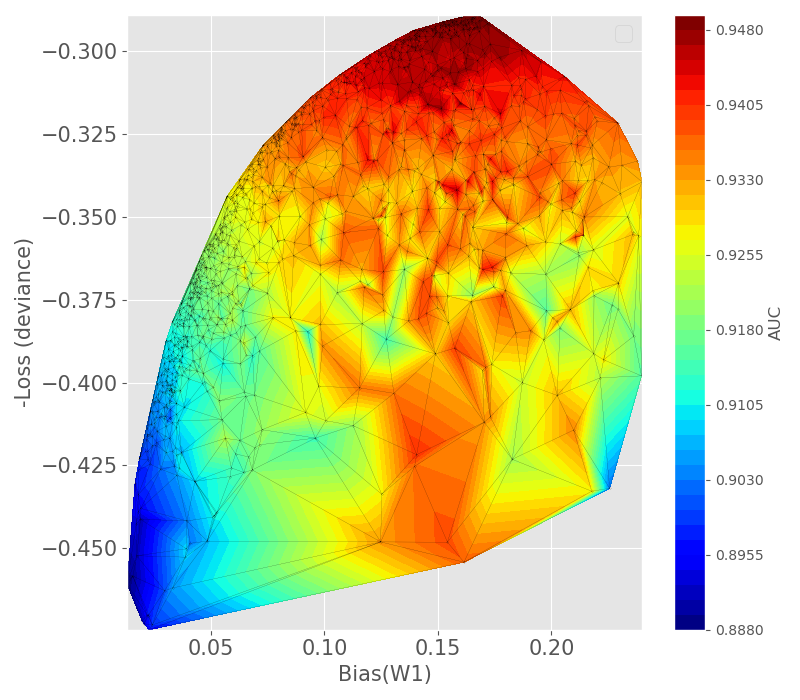}
    \caption{\footnotesize non-calibrated models (asymmetric)  }\label{fig::noncalib_bhps_asym_m1}
  \end{subfigure}
  \begin{subfigure}[t]{0.4\textwidth}
    \centering
    \includegraphics[width=\textwidth]{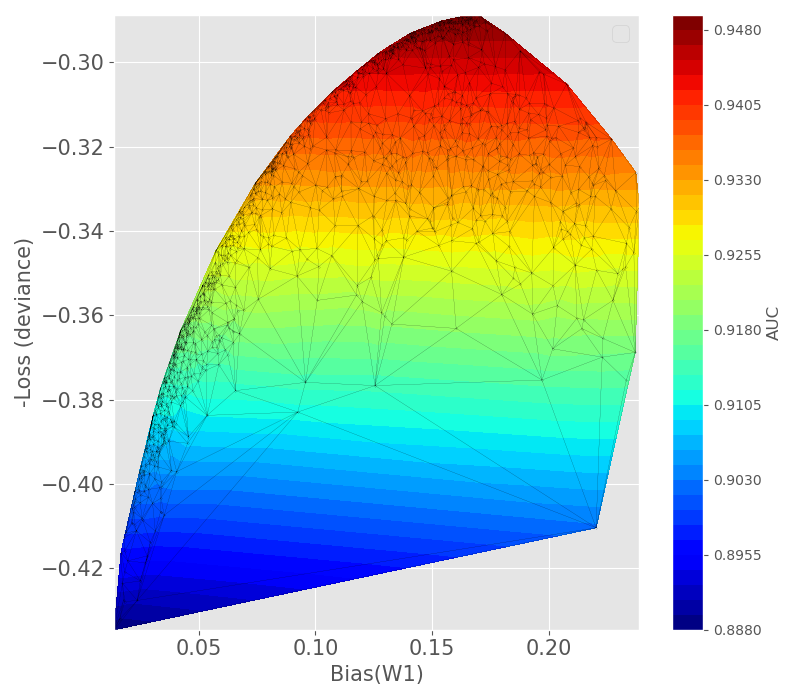}
    \caption{\footnotesize Calibrated models (asymmetric) }\label{fig::calib_bhps_asym_m1}
  \end{subfigure}
 \caption{ Calibration for model \eqref{realisticmod}.} \label{fig::calib_sym_m1}
\end{figure}

Given the above definition, the calibrating map $C$ in \eqref{eq::calibmap} is set to 
\[
  C(\cdot \,;X,\tilde{f}, f) = T^*(\cdot \,; X,\tilde{f}, f),
\]
given an appropriate admissible set $\mathcal{T}$ and loss function $L$. For calibration one can use the response variable $Y$ in place of $f$, but for classification regressors it is more appropriate to use the trained model rather than labels. Note that the procedure does not change classification properties for classification regressors, such as AUC, if $T^*$ is a continuous, strictly increasing map; it only affects the score distribution.

There are several ways to perform isotonic regression. Here we list some of them.
\begin{itemize}
  \item[(C1)] One popular method is non-smooth isotonic regression, that uses the loss $L(Z,\hat{Z}) = \E[(Z-\hat{Z})^2]$ and $\mathcal{T}=\{ \text{non-decreasing, piecewise linear functions} \}$. This method produces a regressor that has plateaus, resembling a non-decreasing step function.

  \item[(C2)] In many cases, it is desirable to have a smooth calibrating map. The work of \citet{Jiang2011} builds upon non-smooth isotonic regression which produces $T^*$ to be the map in the form
  \[
    T^*_{PCHIP} = T_{spline}\circ T_{nsi},
  \]
where $T_{nsi}$ is the non-smooth isotonic regressor and $T_{spline}$ is a map produced by fitting third-order splines to the points sampled from the graph of $T_{nsi}$. By subsampling, overfitting is countered and a smooth regressor is constructed.

  \item[(C3)] For classification regressors, one can obtain a smooth isotonic regressor by performing logistic regression of the response variable $Y\in \{0,1\}$ onto the post-processed score, that is, constructing $\widehat{P}(Y=1|\tilde{f}(X))$.

  \item[(C4)] A simple way to produce smooth isotonic regressors is to use generalized linear regression of the trained model onto the post-processed model. Specifically, given a link function $\sigma$, construct the functions
  \[
    g = \sigma^{-1}\circ f, \quad \tilde{g} = \sigma^{-1}\circ \tilde{f},
  \] 
  and then linearly regress $g(X)$ onto $\tilde{g}(X)$. For classification regressors the link function is typically chosen to be a logistic function. 
\end{itemize}

In all examples in this article we employ (C4) as our preferred method for calibration.

\paragraph{Application.} Consider the post-processed models constructed by performing Algorithm \ref{BHPSalgo} under symmetric transformations \eqref{GlobComp} to the trained model obtained by training on the dataset generated by the model \eqref{realisticmod}; see Figure \ref{fig::bayes_m1}. To understand the effect of calibration, we create a heat plot based on the bias-performance plot in Figure \ref{fig::bayes_m1}, where the color indicates the area under the ROC curve (AUC); see Figure \ref{fig::calib_bhps_sym_m1}. Since the AUC is invariant under continuous monotonic transformations, we may observe the effect of calibration by plotting a similar heat plot of non-calibrated models and noticing the change in level sets. Figures \ref{fig::noncalib_bhps_sym_m1}-\ref{fig::calib_bhps_sym_m1} depict the heat plots for calibrated and non-calibrated models for symmetric and Figures \ref{fig::noncalib_bhps_asym_m1}-\ref{fig::calib_bhps_asym_m1} for asymmetric transformations.








\section{Conclusion}
In this paper, we described a novel bias mitigation methodology that is based upon the construction of post-processed models with fairer regressor distributions for an extensive class of Wasserstein-based fairness metrics and selects the optimal post-processed model by constructing a Pareto efficient frontier over the family of post-processed models via Bayesian optimization. Our approach
performs optimization in low-dimensional spaces and avoids expensive model retraining. Furthermore, unlike many other mitigation techniques in the literature, our methodology does not explicitly utilize the protected attribute, which makes our method ideal for use in regulated environments. 
 

Our mitigation approach hinges upon the information extracted from the bias attributions, which can be decomposed into positive and negative bias attributions, and takes advantage of offsetting. This happens upon the inclusion of predictors that push the non-protected distribution in favorable and non-favorable directions. It may happen that all the bias attributions have negative bias explanations equal to zero, or vice versa. This means that each predictor contributes to pushing the non-protected class only in the favorable direction, or vice versa. In realistic datasets, this is an unlikely scenario. Our mitigation method can still apply in this case, but one should consider the impact to model performance. In the future, we aim to improve upon mitigating bias in these rare cases by introducing a new class of predictor transformations so that performance is not significantly impacted.

In its current setup, our experiments have shown that the Pareto efficient frontier constructed via Bayesian optimization and bias explanations performs significantly better than a simple random search. However, other possible optimization techniques may be able to push the frontier even further to obtain fairer models without sacrificing performance. Such techniques are based on gradient descent which are applicable when the trained model is smooth. As our work focused on tree-based models, which are not smooth, Bayesian optimization was the appropriate technique to apply.


\section*{Acknowledgments}
The authors would like to thank Steve Dickerson (SVP, CDO, Decision Management at Discover Financial Services (DFS)), Raghu Kulkarni (VP, Data Science at DFS)  and Melanie Wiwczaroski (Sr. Director, Enterprise Fair Banking at DFS) for formulation of the problem as well as helpful business and compliance insights. We also thank Melinda Milenkovich (VP \& Assistant General Counsel at DFS) and Kate Prochaska (Sr. Counsel \& Director, Regulatory Policy at DFS) for their helpful comments relevant to regulatory issues that arise in the financial industry.

\begin{appendices}
\section*{Appendix}

\section{Bias explanations extensions}

\subsection{Individual bias explanations}\label{sec::ibes}

In this section, we provide a way of constructing bias explanations based on the aggregation across a family of explainers that serves as a generalization of the method discussed in \cite{Miroshnikov2020}.

\begin{definition}
Let $X \in \RR^n$ be predictors, $f$ a model, $\{E^{(\alpha)}_i(X)\}_{i=1}^n$  a family of predictor explainers parametrized by $\alpha \in \RR^m$, and $\{P_{i}(d\alpha)\}_{i=1}^n$ probability measures. The aggregated bias explanation of the predictor $X_i$ based on $(E_i^{(\alpha)},P_{i})$ is defined by
\[
\beta_i(f|X,G,E_{i}^{\alpha},P_{i}) := \E_{\alpha \sim P_i}[\Bias_{W_1,\A}^{(w)}(E_{i}^{\alpha}|X,G)]=\int \Bias_{W_1,\A}^{(w)}(E_{i}^{\alpha}|X,G) P_{\alpha,i}(d \alpha).
\]
\end{definition}

We note that the aggregated bias explanation of $X_i$ is an explanation where the bias in predictor explainers is averaged over the family with weights incorporated in the probability measure $P_{\alpha,i}$. The aggregation may be helpful when single explainers fail to produce consistent attributions.

For example, consider the following family of explainers 
\begin{equation}\label{iceexplfamily}
\alpha \in \RR^{n-1}, \quad E^{(\alpha)}_i(X)=f(X_i,x_{-\{i\}})|_{x_{-\{i\}}=\alpha}, \quad P_{i}(d \alpha)=P_{X_{-\{i\}}}(d\alpha),
\end{equation}
which is motivated by the individual conditional expectations as described in \citet{Goldstein et al}. 

Specifically, given a model $f$, a data set of predictors $\{x^{(j)}\}_{j=1}^{N}$, and the index $i \in \{1,2,\dots,n\}$, individual conditional expectations of predictor $X_i$ are constructed by investigating the maps $x_{i} \to f(x_i,x^{(j)}_{-\{i\}})$ for each data sample $x^{(j)}$.  While the word ``conditional'' pertains to the regressor $f$, the approach  itself is still partially marginal. In particular, the joint information on $X$ is ignored by separating the joint distribution of $X_{-\{i\}}$ from the distribution of $X_i$; as a consequence, the true interactions between $X_i$ and $X_{-\{i\}}$ might not be fully captured. Nevertheless, unlike marginal expectations (PDPs), this approach destroys far less information on the joint distribution. Thus, the ICE-based explainer family \eqref{iceexplfamily} motivates one to define the bias explanations as follows:

\begin{definition}\label{def::IBE}
Let $X$, $G$, $f$, $\mathcal{A}$ be as in Definition \ref{def::modbias} and let $E_i^{(\alpha)}$, $\alpha \in \RR^{n-1}$, be defined by \eqref{iceexplfamily}.

An individual bias explanation (IBE) of $X_i$ at $x_{\{-i\}}$ is defined as the bias in the explainer $E_i^{(x_{-\{i\}})}(X_i)$:
\[
\beta_i^{\IBE}(x_{-\{i\}},f|X,G)=\Bias_{W_1,\mathcal{A}}^{(w)}(E_i^{(x_{-\{i\}})}|X_i,G).
\]
The corresponding expected individual bias explanation of $X_i$ is defined as
\[
\bar{\beta}_i^{\IBE}(f|X,G)= \E[\beta_i^{\IBE}(X_{-\{i\}},f|X,G)]=\int \beta_i^{\IBE}(x_{-\{i\}},f|X,G) P_{X_{-\{i\}}}(dx_{-\{i\}}).
\]
The corresponding positive and negative  IBEs and expected IBEs are defined as follows:
\[
\begin{aligned}
\beta_i^{\IBE\pm}(x_{-\{i\}},f|X,G)]&=\Bias_{W_1,\mathcal{A}}^{(w)\pm}(E_i^{(x_{-\{i\}})}|X_i,G)
\\ \bar{\beta}_i^{\IBE\pm}(f|X,G)&=\E[\beta_i^{\IBE\pm}(X_{-\{i\}},f|X,G)]].
\end{aligned}
\]
\end{definition}

\paragraph{Example (bias explanations based on PDPs versus IBEs).} To illustrate the difference between PDPs and expected IBEs consider predictors $X\in \RR^2$ and a model $f(X)=f_1(X_1)f_2(X_2)$, and let $\A=\{\Omega\}$. Then the bias explanations for the explainer $E_i=\vpdp(\{i\};X,f)$, $i \in \{1,2\}$, have the following form:
\[
\beta^{\ME}_1=|\E[f_2(X_2)]|\Bias_{W_1}(f_1|X_1,G), \quad \beta^{\ME}_2=|\E[f_1(X_1)]|\Bias_{W_1}(f_2|X_2,G).
\]
Note that if $\E[f_i(X_i)]=0$ then $\beta^{\ME}_i=0$ for each $i \in \{1,2\}$, regardless of the bias level in $f(X)$.

For the expected IBEs, we have 
\begin{equation}\label{icebiasprod}
\bar{\beta}^{\IBE}_1 = \E[|f_2(X_2)|]\Bias_{W_1}(f_1|X_1,G), \quad \bar{\beta}^{\IBE}_2 = \E|f_1(X_1)| \Bias_{W_1}(f|X_2,G).
\end{equation}
Note that averaging in expected IBEs happens after computing the Wasserstein distance, which prevents the bias explanations from vanishing unless $f_i=0$ $P_{X_i}$-almost surely. 

Another example is motivated by \cite{Goldstein et al}. Consider the model
\[
f(X)=0.2 X_1 - 5X_2 + 10 X_2 \1_{\{X_3 \geq 0\}}
\]
where $\{X_i\}$ are identically distributed predictors satisfying $\E[X_i]=0$ and $\P(X_i \geq 0)=0.5$. In this case, $\vpdp(\{1\})=-0.2 X_1$, $\vpdp(\{2\})=\vpdp(\{3\})=0$ and hence the PDP-based bias explanations satisfy
\[
\beta^{\ME}_1=0.2 \cdot \Bias_{W_1}(X_1|G), \quad \beta^{\ME}_2=\beta^{\ME}_3=0.
\]
Once again, the averaging process vanishes the second and third predictor explanations in light of the interactions in the model, which leads to bias explanations incorrectly quantifying the bias contributions.

For expected IBEs, on the other hand, we have
\[
\bar{\beta}^{\IBE}_1=0.2\cdot \Bias_{W_1}(X_1|G), \quad \bar{\beta}^{\IBE}_2=5 \cdot \Bias_{W_1}(X_2|G), \quad \bar{\beta}^{\IBE}_3=10 \cdot \E[|X_2|]\Bias_{W_1}(\1_{\{X_3\geq 0\}}|G)
\]
which illustrates that the contributions of the second and third predictors are consistently captured. 

\begin{remark}\rm
We should point out that for additive models in the form $f(X)=\sum_{i=1}^n f_i(X_i)$ the expected IBEs and PDP-based bias explanations coincide, $\bar{\beta}^{\IBE}_i=\beta^{\ME}_i=\Bias_{W_1,\A}(f_i|X_i,G)$.
\end{remark}

\paragraph{Example (bias explanations based on Shapley values versus IBEs).} 
Shapley explanations often split the contributions between interacting predictors and hence the corresponding bias explanations will be split. The IBE-based explanations, consider an effect of isolated predictor without taking into account the interaction. This leads to a different ranking scheme in terms of the bias impact. Consider the following model,
\[
f(X)=X_1 X_2+X_3, \quad X\in \RR^3,
\]
where $\Bias_{W_1}(X_1|G)=0$ and $\Bias_{W_1}(X_i|G)=1$ for $i \in \{2,3\}$.

In this case, the expected IBEs  are given by
\[
\bar{\beta}_1^{\IBE}=0, \quad \bar{\beta}_2^{\IBE}=\E|X_1|, \quad \bar{\beta}_3^{\IBE}=1
\]
while for Shapley-based bias explanations (both conditional and marginal) we have
\[
\beta_1^{\SHAP}=\beta_1^{\SHAP}=\frac{1}{2} \Bias_{W_1}(X_1 X_2), \quad \beta_3^{\SHAP}=1.
\]

\section{Auxiliary lemmas}\label{app::auxlemmas}

\begin{lemma}\label{lmm::indepwasserstconn}
Let $Z \in \RR^n$ be a random vector and $H \in \{1,2,\dots,m\}$ a random variable. Let $D(\cdot,\cdot)$ be a metric on the space of probability measures $\mathscr{P}_q(\RR)$, with $q \geq 0$. Suppose that $\E[|Z|^q]$ is finite. Then $Z$ and $H$ are independent if and only if $D(Z|H=i,Z|H=j)=0$ for all $i,j$.
\end{lemma}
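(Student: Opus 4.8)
The plan is to prove the biconditional in two directions, with the forward direction being essentially trivial and the reverse direction carrying the content. For the forward direction, suppose $Z$ and $H$ are independent. Then for every $i$ with $\PP(H=i)>0$ we have $P_{Z|H=i}=P_Z$, so $P_{Z|H=i}=P_{Z|H=j}$ as probability measures on $\RR^n$, and in particular their pushforwards under any coordinate (or the identity, once one notes the metric $D$ is stated on $\mathscr{P}_q(\RR)$ so $Z$ should be read as scalar here, or $D$ is applied componentwise) agree; hence $D(Z|H=i,Z|H=j)=0$ since $D$ is a metric and vanishes exactly on equal measures.

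For the reverse direction, assume $D(Z|H=i,Z|H=j)=0$ for all pairs $i,j$ in the (finite) range of $H$. Since $D$ is a \emph{metric} on $\mathscr{P}_q(\RR)$, $D(\mu,\nu)=0$ forces $\mu=\nu$; therefore all the conditional laws $P_{Z|H=i}$ coincide. Call this common law $\nu_0$. Now I would reconstruct the unconditional law: by the law of total probability, for any Borel set $A$,
\[
P_Z(A)=\sum_{i}\PP(H=i)\,P_{Z|H=i}(A)=\nu_0(A)\sum_i \PP(H=i)=\nu_0(A),
\]
so $P_Z=\nu_0=P_{Z|H=i}$ for every $i$ in the support of $H$. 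Independence then follows: for Borel $A\subseteq\RR^n$ and any $i$,
\[
\PP(Z\in A, H=i)=\PP(H=i)\,P_{Z|H=i}(A)=\PP(H=i)\,P_Z(A)=\PP(Z\in A)\,\PP(H=i),
\]
and since $\sigma(H)$ is generated by the finite partition $\{H=i\}$ this extends to $\PP(Z\in A, H\in B)=\PP(Z\in A)\PP(H\in B)$ for all $B$, which is independence of $Z$ and $H$.

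I do not anticipate a genuine obstacle here; the only point requiring a small amount of care is the separation property of $D$ — the statement calls $D$ a metric on $\mathscr{P}_q(\RR)$, so $D(\mu,\nu)=0\iff\mu=\nu$ is available by hypothesis, and without it (e.g. for a mere pseudometric such as an invariant statistic) the reverse implication would fail. A secondary bookkeeping point is the reading of ``$Z\in\RR^n$'' against ``$D$ on $\mathscr{P}_q(\RR)$'': the cleanest resolution is to interpret the hypothesis $D(Z|H=i,Z|H=j)=0$ as holding for each scalar component $Z_\ell$ (which is how it will be applied, via $Z=X_i|A_m$ and the componentwise model bias), run the argument coordinatewise to get $P_{Z_\ell|H=i}=P_{Z_\ell}$ for all $\ell$, and then note this is not quite enough for joint independence — so in fact the intended reading is that $D$ applies to the law of the vector $Z$ itself (treating $\mathscr{P}_q(\RR)$ loosely, or invoking a vector Wasserstein distance), in which case the argument above goes through verbatim. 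I would state the lemma and proof for the vector-valued $D$, which is what Lemma~\ref{lmm::vanishbias} and the unbiasedness definition actually require.
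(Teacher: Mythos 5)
Your proof is correct and follows essentially the same route as the paper's: both directions hinge on the metric separation property forcing all conditional laws to coincide, followed by the law of total probability to identify them with the unconditional law $P_Z$ and deduce independence over the finite partition $\{H=i\}$. Your side remark on reading $D$ as acting on laws of the vector $Z$ (despite the statement's $\mathscr{P}_q(\RR)$) matches the paper's implicit usage, since its proof likewise works with Borel sets $A\subset\RR^n$.
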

\begin{proof}
If $Z$ and $H$ are independent, then $P_{Z|H=i}=P_Z$, which implies  $D(Z|H=i,Z|H=j)=0$. Suppose now that $D(Z|H=i,Z|H=j)=0$ for all $i,j$. Then $P_{Z|H=i}=P_{Z|H=j}$ for all $i,j$ and hence for any Borel $A \subset \RR^n$ and $j \in \{1,2,...,m\}$, we obtain
\[
\P(Z \in A)=\sum_{i} \P(Z \in A|H=i) \P(H=i)=\P(Z \in A|H=j) \sum_{i} \P(H=i)=\P(Z \in A|H=j)
\]
which proves the statement.
\end{proof}

\begin{lemma}\label{lmm::comprbias}
  Let  $T(\cdot;a,t^*)$ be as in Definition \ref{def::comprprop}. Let $G \in \{0,1\}$ be a protected attribute such that $\PP(G=k)>0$, $k \in \{0,1\}$, and $Z$ be a random variable that satisfies $\E\big[|Z|^{max(1,q)}\big]<\infty$ for $q>0$ in Definition \ref{def::comprprop}(i). Then 
  \[\displaystyle \lim_{a\to +\infty} W_1(T(Z;a,t^*)|G=0, T(Z;a,t^*)|G=1)=0.
  \]
\end{lemma}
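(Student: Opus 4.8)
The plan is to reduce the two-sample statement to a one-sample one by using the Dirac mass $\delta_{t^*}$ as a pivot in the triangle inequality for $W_1$. Concretely, I would bound
\[
W_1\big(T(Z;a,t^*)|G=0,\, T(Z;a,t^*)|G=1\big) \le W_1\big(P_{T(Z;a,t^*)|G=0},\delta_{t^*}\big) + W_1\big(\delta_{t^*}, P_{T(Z;a,t^*)|G=1}\big),
\]
and then use the elementary fact that $W_1$ to a point mass collapses to an expectation, $W_1(\mu,\delta_{t^*}) = \int_{\RR}|x-t^*|\,d\mu(x)$ (the only coupling of $\mu$ with $\delta_{t^*}$ is the product), so that each term on the right equals $\E\big[\,|T(Z;a,t^*)-t^*|\;\big|\;G=k\big]$. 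Thus it suffices to show both conditional expectations tend to $0$ as $a\to+\infty$.

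For that I would invoke the quantitative decay in Definition \ref{def::comprprop}$(i)$: $|T(t;a,t^*)-t^*|\le C|t-t^*|^{q}a^{-\delta}$ with $\delta>0$. Substituting $Z$ and taking conditional expectations gives
\[
\E\big[\,|T(Z;a,t^*)-t^*|\;\big|\;G=k\big] \le C\,a^{-\delta}\,\E\big[\,|Z-t^*|^{q}\;\big|\;G=k\big].
\]
Since $\PP(G=k)>0$, the conditional $q$-th moment is controlled by the unconditional one, $\E[|Z-t^*|^{q}\mid G=k] \le \E[|Z-t^*|^{q}]/\PP(G=k)$, so the whole right-hand side is at most $C'a^{-\delta}$, which vanishes as $a\to+\infty$ because $\delta>0$. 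Summing the two terms finishes the proof.

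The only point needing any care — and there is no genuine obstacle, just bookkeeping — is verifying $\E[|Z-t^*|^{q}]<\infty$ from the hypothesis $\E[|Z|^{\max(1,q)}]<\infty$. For $0<q\le1$ I would use subadditivity $|Z-t^*|^{q}\le|Z|^{q}+|t^*|^{q}$ together with $\E[|Z|^{q}]\le(\E|Z|)^{q}<\infty$ (Jensen, by concavity of $x\mapsto x^{q}$); for $q>1$, convexity gives $|Z-t^*|^{q}\le 2^{q-1}(|Z|^{q}+|t^*|^{q})$ and $\E|Z|^{q}<\infty$ is exactly the assumption. The same estimate also yields $\E|T(Z;a,t^*)|<\infty$, which is what legitimizes the point-mass formula for $W_1$ used above. (Alternatively one could avoid $\delta_{t^*}$ altogether by coupling the two subpopulations through the common variable $Z$ and estimating $\E[\,|T(Z;a,t^*)-t^*|\mid G=0\,]+\E[\,|T(Z;a,t^*)-t^*|\mid G=1\,]$ directly, but the pivot argument is cleaner.)
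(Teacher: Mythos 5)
Your proposal is correct and follows essentially the same route as the paper: the paper likewise uses the growth bound in Definition \ref{def::comprprop}$(i)$ to get $\E\big[|T(Z;a,t^*)-t^*|\big]\leq \tilde{C}\big(1+\E[|Z|^{\max(1,q)}]\big)a^{-\delta}$, concludes that $T(Z;a,t^*)|G=k$ converges to $t^*$ in $L^1$, and hence that the two subpopulation laws merge in $W_1$. You have simply made explicit the steps the paper leaves implicit (the $\delta_{t^*}$ pivot in the triangle inequality, the conditional-versus-unconditional moment bound via $\PP(G=k)>0$, and the Jensen/convexity bookkeeping for $\E[|Z-t^*|^{q}]<\infty$), all of which are accurate.
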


\begin{proof}
From the growth assumption in Definition \ref{def::comprprop}$(i)$, it follows that
\[
\E\big[|T(Z;a,t)-t_*|\big]\leq C \E\big[|Z-t^*|^{q}\big] a^{-\delta} \leq \tilde{C} \big(1 + \E\big[|Z|^{max(1,q)}\big] \big)a^{-\delta}.
\] 
Hence $T(Z;a,t_*)|G=k$, $k \in \{0,1\}$, converge to $t_*$ in $L^1(\PP)$ as $a\to+\infty$. This proves the lemma. 
\end{proof}

\noindent{\bf Proof of Lemma \ref{lmm::vanishbias}}
\begin{proof}
Without loss of generality, let us assume that $\A=\{\Omega\}$. First, using the triangle inequality, we obtain
\[
\begin{aligned}
& W_1(X|G=0,X|G=1) \\
& \leq W_1(X|G=0,X_{\eps}|G=0) +W_1(X_{\eps}|G=0,X_{\eps}|G=1)  + W_1(X_{\eps}|G=1,X|G=1)\\
& \leq \E[|X_{\eps}-X||G=0] +W_1(X_{\eps}|G=0,X_{\eps}|G=1) + \E[|X_{\eps}-X||G=1].
\end{aligned}
\]
By assumptions of the lemma, the right hand-side of the above inequality tends to zero as $\eps \to 0$ and hence $W_1(X|G=0,X|G=1)=0$. This implies that $X$ is independent of $G$. Hence $f(X)$ is independent of $G$, which implies $\Bias_{W_1}(f|X,G)=0$.

Similarly, by the triangle inequality and the fact that $f(X)$ is independent of $G$ we have
\begin{equation*}
\begin{aligned}
& W_1(f(X_{\eps})|G=0,f(X_{\eps})|G=1) \\
& \leq W_1(f(X_{\eps})|G=0,f(X)|G=0) + W_1(f(X)|G=1,f(X_{\eps})|G=1). 
\end{aligned}
\end{equation*}
By the continuity and boundedness of $f$, we have that $f(X_{\eps})$ converges to $f(X)$ in distribution. Hence $f(X_{\eps})|G=k$ converges to $f(X)|G=k$ in distribution for $k \in \{1,2\}$. Since $f$ is bounded, $\{P_{f(X_{\eps})}\}_{\eps}$ have common compact support and hence we must have
\[
W_1(f(X_{\eps})|G=k,f(X)|G=k) \to 0
\]
for $k\in\{0,1\}$ as $\eps \to 0$; see \citet[Corollary 6.13]{Villani2009}. Then from the last inequality it follows that $\Bias_{W_1}(f|X_{\eps},G) \to 0$ as $\eps \to 0$.

\end{proof}


\begin{figure}
\centering
  \begin{subfigure}[t]{0.3\textwidth}
    \centering
    \includegraphics[width=\textwidth]{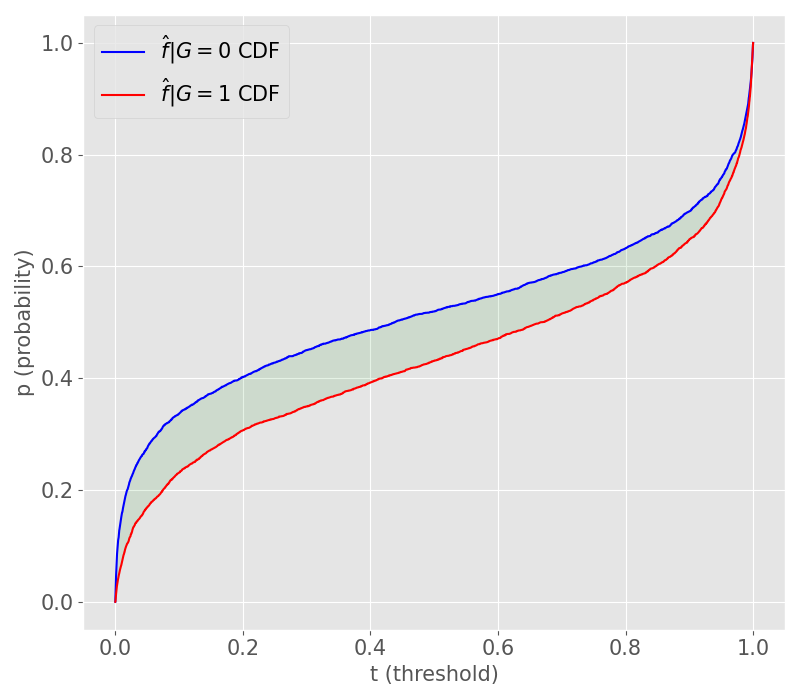}
    \caption{\footnotesize model CDFs }\label{fig::subpop_distr_m3}
  \end{subfigure}
~~
  \begin{subfigure}[t]{0.3\textwidth}
    \centering
    \includegraphics[width=\textwidth]{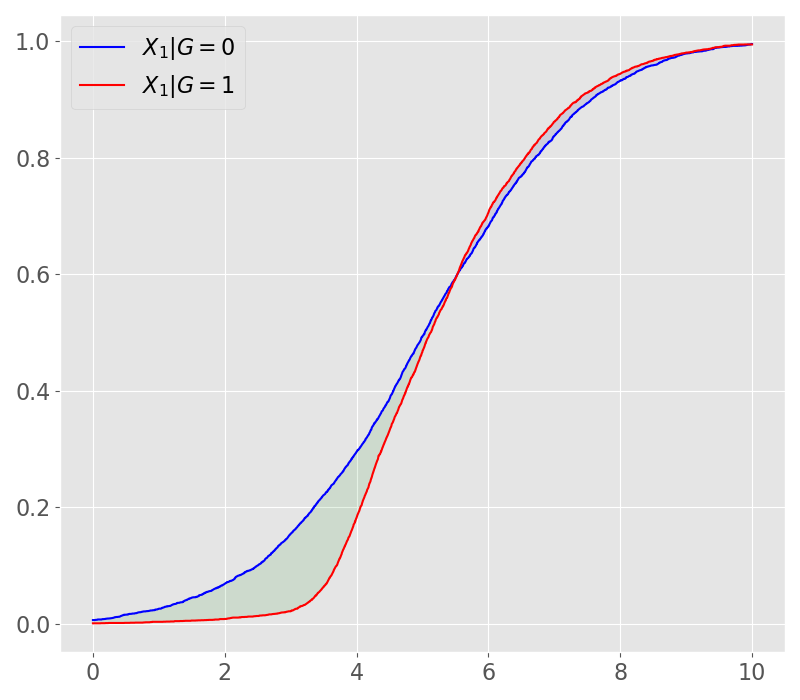}
    \caption{\footnotesize $X_1$ CDFs }\label{fig::subpop_cdf_x1_m3}
  \end{subfigure}
~~
  \begin{subfigure}[t]{0.3\textwidth}
    \centering
    \includegraphics[width=\textwidth]{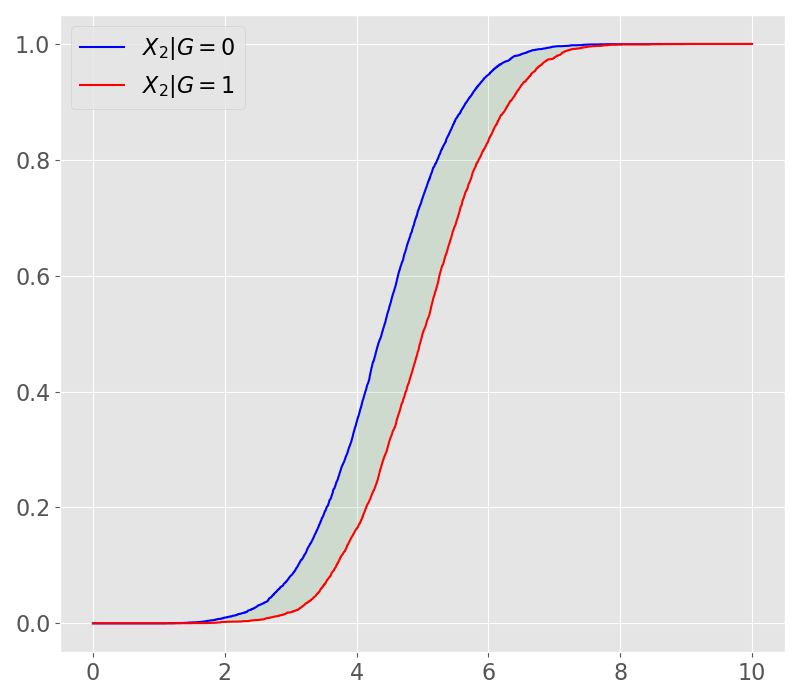}
    \caption{\footnotesize $X_2$ CDFs }\label{fig::subpop_cdf_x2_m3}
  \end{subfigure}
~~
  \begin{subfigure}[t]{0.3\textwidth}
    \centering
    \includegraphics[width=\textwidth]{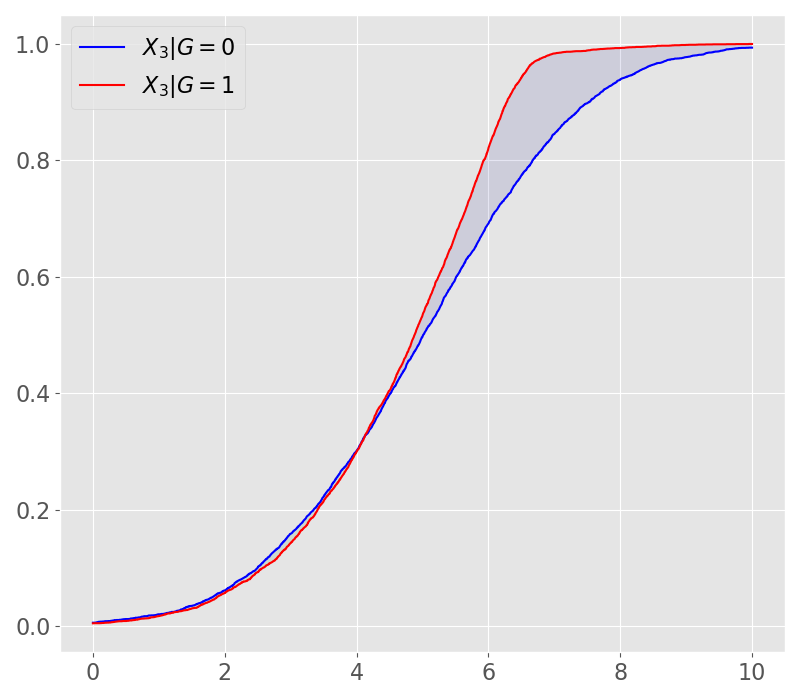}
    \caption{\footnotesize $X_3$ CDFs }\label{fig::subpop_cdf_x3_m3}
  \end{subfigure}
~~
  \begin{subfigure}[t]{0.3\textwidth}
    \centering
    \includegraphics[width=\textwidth]{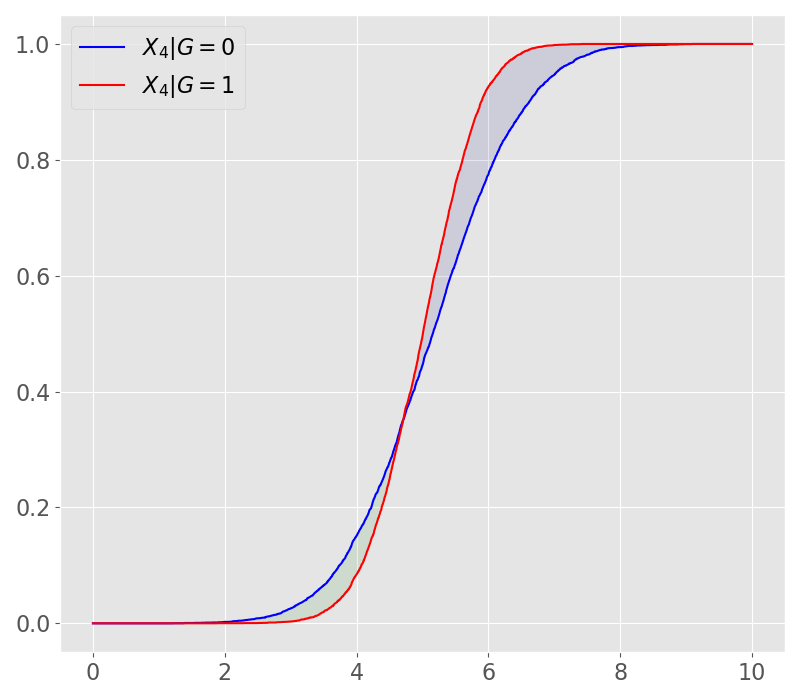}
    \caption{\footnotesize $X_4$ CDFs }\label{fig::subpop_cdf_x4_m3}
  \end{subfigure}
~~
  \begin{subfigure}[t]{0.3\textwidth}
    \centering
    \includegraphics[width=\textwidth]{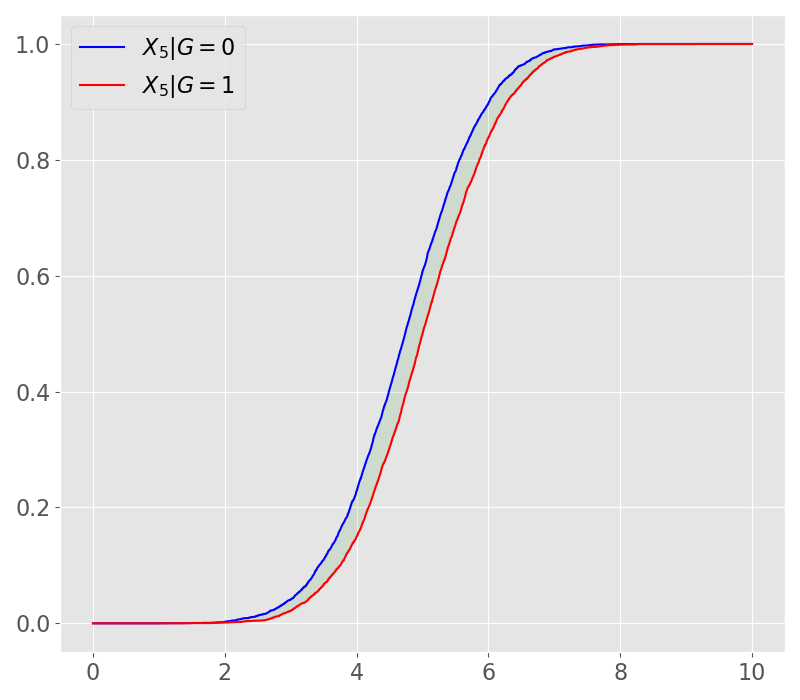}
    \caption{\footnotesize $X_5$ CDFs }\label{fig::subpop_cdf_x5_m3}
  \end{subfigure}
~~  
  \begin{subfigure}[t]{0.3\textwidth}
    \centering
    \includegraphics[width=\textwidth]{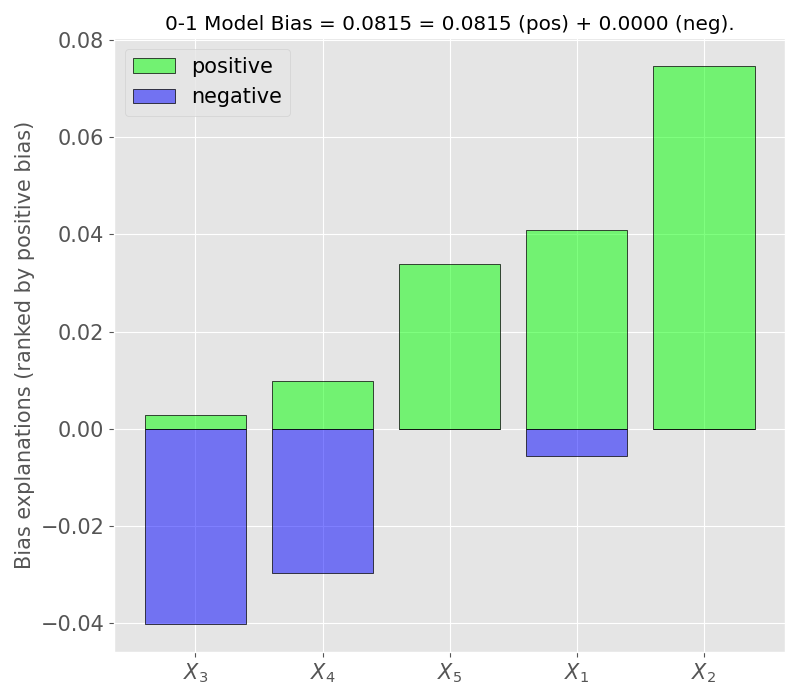}
    \caption{\footnotesize Bias explanations (train)}\label{fig::bias_expl_m3}
  \end{subfigure}
  \begin{subfigure}[t]{0.3\textwidth}
    \centering
    \includegraphics[width=\textwidth]{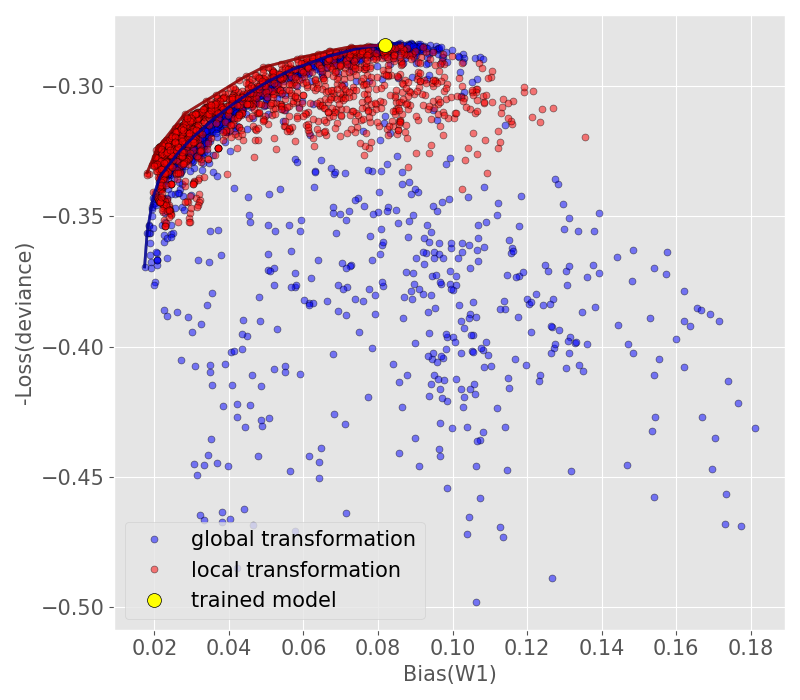}
    \caption{\footnotesize Bayesian search frontiers (test)}\label{fig::bayes_loc_vs_glob_m3}
  \end{subfigure}
 \caption{ Local vs global for Model \eqref{realisticmodloc}.}\label{fig::model_m3}
\end{figure}

\section{Local compression} \label{app::loccomp}
In some cases, it may be that subpopulation distributions of predictors differ only in a local region. This makes reshaping the predictor values globally less efficient, since not all values require rescaling. This motivates us to introduce local compressions of the following form
\begin{equation}\label{LocalComp}
T_{loc}(t; a,\sigma, t^*) =  t - (t-t^*) (1-a^{-1}) \exp\Big\{ -\frac{(t-t^*)^2}{2\sigma^2} \Big\},
\end{equation}
where
\[
|t-t^*| \lesssim \sigma \Rightarrow T(t; a, \sigma ,t^*) \approx \frac{t-t*}{a}+t^*, \quad |t-t^*| \gg  \sigma \Rightarrow T(t; a, \sigma ,t^*) \approx t.
\]

The following data generating model is introduced to further illustrate the application of local transformations and compare them to global rebalancing. 
\begin{equation}\label{realisticmodloc}\tag{M4}
\begin{aligned}
&\mu=5, s = 1.6\\
&X_1 = Z_1 + (Z_2-Z_1)(\1_{\{Z_0>\mu+s\}}\1_{\{Z_0<\mu-s\}}), \quad X_2 \sim N(\mu-0.6 (1-G), 1 ),\\
&X_3 = Z_1 + (Z_4-Z_1)(\1_{\{Z_3>\mu+s\}}\1_{\{Z_3<\mu-s\}}),\quad X_4 \sim N(\mu+0.15 (1-G), 1.25-0.75 G ) \\
&X_5 \sim N(\mu-0.45 (1-G), 1) \\
&Y \sim Bernoulli(f(X)), \quad f(X)=\P(Y=1|X)={logistic(1.5 \big(\textstyle{\sum_{i}} X_i-24)\big)},\\
\end{aligned}
\end{equation}
where
\[
\begin{aligned}
& Z_0 \sim N(\mu, 1.25), \quad Z_1 \sim N(\mu, 2), \quad Z_3 \sim N(\mu, 1)\\
&Z_2 \sim f(x) = \frac{2}{2.4} \phi(\tfrac{1}{2.4}(x-\mu+1.5)) \Phi(\tfrac{8}{2.4}(x-\mu+1.5)),\\
&Z_4 \sim f(x) = \frac{2}{2.4} \phi(\tfrac{1}{2.4}(x-\mu-1.5)) \Phi(-\tfrac{1}{2.4}(x-\mu-1.5)).
\end{aligned}
\]
and the functions $\phi, \Phi$ are the PDF and CDF of a standard normal, respectively.

In this example, we compare the use of global and local transformations in the form \eqref{GlobComp} and \eqref{LocalComp}, respectively. First, we select the top two bias-impactful predictors in the list $N_+$ and the top two in the list $N_-$, which yields the list $M=\{1,2,3,4\} = \{ i_1,i_2,i_3, i_4 \}$; see Figure \ref{fig::bias_expl_m3}. We then train a regularized GBM model with 10,000 samples and apply Algorithm~\ref{BHPSalgo}. For the experiment with the global transformation \eqref{GlobComp} we use the same parameters as in the example of Section \ref{sec::globcomp}. For the experiment with the local transformation \eqref{LocalComp} we pick the parameters as follows : $n_{prior}=400$, $n_{ob}=50$, $\{\omega_j = 2\cdot\frac{j}{20}\}_{j=0}^{20}$, $\gamma = (\{a_{i_k},\sigma_{i_k}\}_{k=1}^4, x_M^*)$, where we have the bounds $a_{i_k} \in [0.5, 4.0]$ and $\sigma_{i_k}\in[1,2]$, and where $x^*_M$ is a fixed point. In both experiments $x^*_{i}$ is the median of $X_i$ for $i\in\{2,4\}$ and, for $i \in \{1,3\}$, it is the point where the value of KS distance between the two subpopulation distributions of $X_i$ is achieved.

Figure \ref{fig::model_m3} depicts the CDFs of each predictor in the model \eqref{realisticmodloc}. Observe that the predictors $X_1, X_3$ have subpopulations that differ locally, while the rest of the predictors exhibit differences in their subpopulations globally. Specifically, the difference in $X_2$ is obtained by a shift and for $X_4$ by changing variance. Figure \ref{fig::bias_expl_m3} depicts the bias explanations for the predictors that reflect the bias contributions to the model. The efficient frontiers obtained by transforming the predictors locally and globally are shown in Figure \ref{fig::bayes_loc_vs_glob_m3}, which shows that the frontier corresponding to the local transformation is shifted upwards. This signifies that local compression results in more appropriate post-processed models compared to global compression.

\end{appendices}


\end{document}